\documentclass[11pt,a4paper]{article}

\usepackage{amsmath,amssymb,amsfonts,amsthm,mathtools}
\usepackage{graphicx}
\usepackage{booktabs}
\usepackage{microtype}
\usepackage{subcaption}
\usepackage{adjustbox}
\usepackage{tikz}
\usepackage{xcolor}
\usepackage{placeins}
\usepackage{float}
\usepackage{hyperref}

\usetikzlibrary{arrows.meta,positioning,calc,fit,backgrounds,shapes.geometric}

\newtheorem{theorem}{Theorem}

\theoremstyle{definition}

\theoremstyle{remark}

\definecolor{cblue}{RGB}{59,130,246}
\definecolor{cgreen}{RGB}{34,197,94}
\definecolor{corange}{RGB}{249,115,22}
\definecolor{cred}{RGB}{239,68,68}
\definecolor{cpurple}{RGB}{168,85,247}
\definecolor{cteal}{RGB}{20,184,166}

\usepackage{natbib}

\title{
Support-Safe Variational Hybrid Filtering for Contact-Mode and Sparse-Law Recovery

}

\author{
  Marios Papamichalis\thanks{Human Nature Lab, Yale University, New Haven, CT 06511, \texttt{marios.papamichalis@yale.edu}}
  \and
  Regina Ruane\thanks{Department of Statistics and Data Science, The Wharton School, University of Pennsylvania, 3733 Spruce Street, Philadelphia, PA 19104-6340, \texttt{ruanej@wharton.upenn.edu}}
}
\date{}

\newcommand{\method}{\textsc{VHyDRO}}
\usepackage{tikz}
\usepackage{xcolor}
\usepackage{graphicx}

\usetikzlibrary{
  arrows.meta,
  calc,
  positioning
}

\usepackage{tikz}
\usepackage{xcolor}
\usepackage{graphicx}
\usetikzlibrary{arrows.meta,positioning,calc,fit,backgrounds}

\usepackage{tikz}
\usepackage{xcolor}
\usepackage{caption}
\usetikzlibrary{arrows.meta,positioning,calc}

\definecolor{cblue}{RGB}{59,130,246}
\definecolor{cgreen}{RGB}{34,197,94}
\definecolor{corange}{RGB}{249,115,22}
\definecolor{cred}{RGB}{239,68,68}
\definecolor{cpurple}{RGB}{168,85,247}
\definecolor{cteal}{RGB}{20,184,166}

\begin{document}
\maketitle

\begin{abstract}
Contact-rich robot dynamics are hybrid: a single observation can match several latent states and contact regimes (free, impact, stick--slip). A standard amortized filter that places no probability on a feasible contact transition will permanently lose the branch the robot actually follows. We introduce \method{}, a variational hybrid dynamics learner that prevents this branch loss. At each step, \method{} mixes the learned proposal with a feasible transition law before sampling and importance weighting, ensuring that every transition retained by the model-feasible carrier remains covered. \method{} jointly infers a continuous latent state and a discrete contact mode, and fits a sparse port-Hamiltonian law to each recovered regime. On top of this, three guarantees connect: support coverage stabilizes filtering, the stabilized filter concentrates the discrete contact posterior on coherent regimes, and mode-pure segments admit sparse port-Hamiltonian recovery. The recovery error separates cleanly into filtering, derivative, mode-impurity, and physics-residual parts. Three empirical findings track the same mechanism. Under heavy occlusion the support-safe filter stays usable while a non-defensive proposal collapses. On ManiSkill demonstrations and on four Sawyer/BridgeData task families the discrete state forms temporally coherent contact-regime segments that the discrete state yields a stronger joint profile across ARI, change-point F1, and segment purity than post-hoc and mode-free baselines. On hybrid systems with known equations the mode-conditioned sparse fit recovers the active physical terms; purely predictive baselines do not.
\end{abstract}

\section{Introduction}

Contact-rich manipulation exposes a failure mode that average one-step error can
hide: a support-mismatched amortized filter may delete a model-feasible contact
branch before later observations reveal it. Latent dynamics models and variational state-space models are useful because they maintain hidden state, produce action-conditioned predictive rollouts, and reason under uncertainty. Modern latent dynamics models and variational state-space models provide scalable predictive states from images, proprioception, and recorded robot-object state streams\citep{watter2015embed,ha2018world,hafner2019learning,hafner2019dream,karl2016deep,maddison2017filtering,naesseth2018variational}. The challenge addressed here is to combine scalable variational
filtering with hybrid contact structure, interpretable mode-conditioned
dynamics, and explicit protection against proposal-level support loss.

Contact-rich dynamics, however, are not globally smooth. A trajectory may switch between free motion, sticking, sliding, and impact, and each regime follows different local physics. Treating these regimes as one smooth transition can make a model confident in the wrong future, which is dangerous for planning because rollout errors compound over imagined trajectories. Hybrid and contact-aware models address parts of this issue through switching state-space models, Hybrid-SINDy, ContactNets, and physically structured contact dynamics \citep{linderman2017bayesian,mangan2019model,pfrommer2021contactnets,hochlehnert2021learning}. Particle and sequential Monte Carlo methods provide stronger posterior inference for nonlinear state-space models \citep{doucet2001introduction,andrieu2010particle}, and defensive mixtures reduce support mismatch by ensuring that proposals cover the target support \citep{hesterberg1995weighted}; yet these methods are often expensive to amortize at the scale of vision-based robot learning.

A complementary line of work seeks interpretable physical structure. Sparse and symbolic discovery methods recover governing equations from trajectories or learned coordinates \citep{brunton2016discovering,rudy2017data,schmidt2009distilling,champion2019data}, while Bayesian extensions add uncertainty quantification and robustness under noisy or partial observations \citep{mars2024bayesian,fung2025rapid,rosafalco2024ekf,conti2024veni,conti2026veni}. Physics-informed, continuous-time, Hamiltonian, Lagrangian, and port-Hamiltonian models inject useful inductive biases for stability, passivity, and extrapolation \citep{raissi2019physics,chen2018neural,greydanus2019hamiltonian,cranmer2020lagrangian,lutter2019deep,desai2021port,rettberg2025data}. Relational dynamics and object-centric models capture multi-object and articulated interactions \citep{battaglia2016interaction,kipf2018neural,sanchez2018graph,sanchez2020learning,bishnoi2024discovering,jatavallabhula2023bayesian}, and contact-rich manipulation benchmarks provide natural evaluation settings \citep{mandlekar2018roboturk,mandlekar2021matters,walke2023bridgedata,gu2023maniskill2,xiang2020sapien,mo2019partnet}.

Despite this progress, the ingredients needed for contact-rich hybrid filtering
remain largely separated across method families. Scalable variational latent
models are typically not explicitly hybrid or physically interpretable.
Physics-discovery methods are usually not designed for raw observations,
actions, and partial observability. Posterior-correct particle methods provide
accurate inference, but are difficult to deploy as end-to-end learned perception
models. This leaves a need for hybrid variational state-space models that combine
scalable perception, interpretable dynamics, uncertainty diagnostics, and
model-relative support-safe inference.

We propose \method{} (\emph{variational hybrid dynamics with robust proposal support}), a support-safe hybrid variational state-space model for
partially observed contact dynamics. \method{} represents the manipulation scene by a continuous latent state \(z_t\), and represents the active predictive regime by
a discrete mode \(s_t\). The transition associated with each mode is
parameterized by a port-Hamiltonian core with sparse library corrections,
yielding dynamics that are physically structured and inspectable.

Inference in \method{} uses a support-safe variational family. A standard
amortized proposal may assign zero or very small probability to model-feasible
contact states, producing unstable importance weights and degrading learning.
\method{} mitigates this failure mode by mixing the learned proposal with a
feasible transition carrier. The resulting mixture satisfies \(P_t \ll Q_{t,\lambda}\) and gives positive \(Q_{t,\lambda}\)-mass to every measurable transition set with positive \(P_t\)-mass.

The contribution is not a new defensive-IS inequality; it is a deployable support-safety mechanism for amortized hybrid filtering: choose the mixture mass before sampling, sample from the same mixture used in the importance weight, and audit the resulting representation through filtering stability, mode coherence, and sparse-law recovery.

\paragraph{Contributions.}
\textbf{(i) Support-safe IW/FIVO gate.}
We sample from
\[
Q_{t,\lambda}=(1-\lambda)Q_t+\lambda P_t
\]
and weight using the same realized mixture. Specializing the classical
defensive-mixture inequality \citep{hesterberg1995weighted} to the one-step
IW/FIVO increment gives \(P_t\ll Q_{t,\lambda}\),
\(dP_t/dQ_{t,\lambda}\le 1/\lambda\), and the pre-sampling rule
\(\lambda_t\ge \bar\rho_t/(1+N\tau^2)\) whenever the requested
relative-variance budget is certifiable. The paper-specific point is
operational rather than a new defensive-IS inequality: the certificate is used
before sampling, inside an amortized hybrid variational filter, and the same
\(\lambda_t\) is used in both the proposal and the importance weight.

\textbf{(ii) Predictive contact modes under support repair.}
The stabilized filter maintains a continuous latent state and a discrete
predictive mode. Theorem~\ref{thm:mode_recovery_vhydro} states the standard
predictive-separation condition under which such modes concentrate. The
empirical target is alignment with frozen kinematic proxy regimes, not contact
force ground truth.

\textbf{(iii) Controlled sparse-law audit.}
Once mode-pure segments are available, Theorem~\ref{thm:sparse_ph_recovery}
applies a classical Lasso oracle bound \citep{bickel2009simultaneous} to
mode-conditioned port-Hamiltonian regression with aggregate plug-in
perturbations. Experiment~3 measures aggregate support and coefficient recovery
on known-equation systems; it does not separately certify the individual
filtering, derivative, mode-impurity, and port-Hamiltonian residual terms.

\paragraph{Headline result and falsification target.}
The clearest predicted-then-verified signature is the high-occlusion regime of
Experiment~1. Theorem~\ref{thm:support_safe_budgeted_vhydro} directly controls
the relative normalizer-variance and ESS frontier; NLL and coverage are reported
as downstream collapse diagnostics, not as quantities directly bounded by the
theorem. The \(\lambda=0\) baseline uses the same architecture, particle count,
objective, observation likelihood, resampling rule, training budget, and
numerical stabilizers as the support-safe variants; it removes only the
defensive carrier mass. If the gains came merely from generic clipping or
implementation stabilization rather than support repair, this matched
\(\lambda=0\) baseline should retain the same ESS/N and relative-variance
frontier. Instead, at \(90\%\) occlusion, the conservative support-safe variant
improves ESS/N from \(0.042\) to \(0.141\) on Contact-style tasks and from
\(0.046\) to \(0.151\) on Sawyer/BridgeData-style diagnostics.
\section{Theory}
\label{sec:theory}

Classical tools appear here as certifiable components, but the way they are wired together is specific to \method{}. Theorem~\ref{thm:support_safe_budgeted_vhydro} turns defensive mixing into a history-conditioned, pre-sampling IW/FIVO support gate; the same realized \(\lambda_t\) is used in both proposal and weight. Theorem~\ref{thm:mode_recovery_vhydro} states the predictive-separation condition under which the discrete posterior concentrates on a coherent mode using only likelihood-ratio evidence. Theorem~\ref{thm:sparse_ph_recovery} then applies sparse recovery to mode-pure port-Hamiltonian segments with aggregate filtering, derivative, mode-impurity, and plug-in perturbations. The empirical section tests these three links separately.

\paragraph{Model and training.}
\method{} parameterizes a hybrid latent state \(x_t=(s_t,z_t)\), where
\(z_t\) is a continuous belief state and \(s_t\) is a discrete predictive
contact regime. The amortized proposal \(Q_t\) is observation-adaptive and
sharp. The carrier \(P_t\) is the model's one-step transition law before
conditioning on \(o_{t+1}\):
\[
P_t(dx_{t+1})
=
\pi_\theta(ds_{t+1}\mid x_t,a_t)\,
p_\theta(dz_{t+1}\mid s_{t+1},z_t,a_t).
\]
Its discrete support is the model-feasible mode support, and its continuous
support is the corresponding mode-conditioned transition support with the
validation-fixed noise floor shared across variants. Thus \(P_t\) is not
assumed to be the true physical law; it declares the model-feasible support
that the learned proposal is not allowed to delete. Training samples from
\[
Q_{t,\lambda}=(1-\lambda)Q_t+\lambda P_t,
\]
and weights particles using the same realized mixture density. Support safety
is therefore model-relative: \method{} preserves every measurable transition
set assigned positive mass by \(P_t\), but it cannot recover true physical
branches omitted by \(P_t\). After filtering, mode-pure segments are fit with a
sparse port-Hamiltonian library by regularized regression. The mode count and
candidate library are fixed in the present experiments.

The learned parameters are optimized with the sequential IW/FIVO objective:
at step \(t\), the one-step normalizer estimate is
\[
\widehat Z_t
=
\frac{1}{N}\sum_{i=1}^N
p_\theta(o_{t+1}\mid z_{t+1}^i)
\frac{dP_t}{dQ_{t,\lambda_t}}(x_{t+1}^i),
\qquad
x_{t+1}^i\sim Q_{t,\lambda_t},
\]
and training maximizes the corresponding sum of log normalizer increments.
Thus the same realized mixture is used for sampling and weighting in both
training and evaluation. The theorem requires an \(h_t\)-measurable upper
certificate \(\bar\rho_t\ge\rho_t\) before particles are drawn. In the
experiments, \(\lambda_t\) is selected by a validation-frozen rule before
held-out particles are sampled; certificate plots are diagnostics of the
realized variance frontier and are not used for post-hoc retuning. When the
requested budget cannot be certified, the filter marks it uncertified and uses
a validation-fixed fallback \(\lambda_{\rm fb}\), which is not itself claimed
to certify the requested \(\tau\).

\method{} combines support-safe hybrid filtering, mode concentration, and
sparse-law recovery; Theorems~\ref{thm:support_safe_budgeted_vhydro}--\ref{thm:sparse_ph_recovery}
formalize these measured links.

\begin{figure}[t]
\centering
\fbox{
\begin{minipage}{0.96\linewidth}
\textbf{Support-safe hybrid filtering step.}

\textbf{Input:} particles \(\{(s_t^i,z_t^i,w_t^i)\}_{i=1}^N\), action \(a_t\), observation \(o_{t+1}\), amortized proposal \(Q_t\), feasible transition law \(P_t\), certificate \(\bar\rho_t\), tolerance \(\tau\), and validation-fixed fallback \(\lambda_{\rm fb}\in(0,1]\).

\begin{enumerate}
\item Choose \(\lambda_t\) before sampling. Let
\(\lambda_{\min,t}:=\bar\rho_t/(1+N\tau^2)\). If
\(\lambda_{\min,t}\le1\), set \(\lambda_t=\lambda_{\min,t}\), the smallest
mixture mass that certifies the requested budget. Otherwise mark the requested
budget as uncertified and use the validation-fixed fallback
\(\lambda_{\rm fb}\); this fallback is reported as an uncertified operating
point, not as a proof of the requested \(\tau\).
\item Sample each next particle from
\[
Q_{t,\lambda_t}=(1-\lambda_t)Q_t+\lambda_t P_t .
\]
\item Weight the sample using the realized mixture density:
\[
W_t(x)=p_\theta(o_{t+1}\mid z_{t+1})\frac{dP_t}{dQ_{t,\lambda_t}}(x).
\]
\item Normalize weights, compute ESS/N and relative weight variance, and resample when the ESS rule triggers.
\item Return the filtered particles and the mode posterior \(q_\phi(s_{t+1}\mid o_{\le t+1},a_{\le t})\).
\end{enumerate}
\end{minipage}}
\caption{\textbf{Algorithmic summary of the \method{} filtering step.}
The same \(\lambda_t\) is used for sampling and weighting; no post-hoc
proposal adaptation is used.}
\label{alg:vhydro_filter}
\end{figure}

\begin{figure*}[t]
\centering
\makebox[\textwidth][c]{%
\resizebox{\textwidth}{!}{%
\begin{tikzpicture}[
  x=1cm,y=1cm,
  font=\footnotesize,
  line cap=round,line join=round,
  >=Latex
]

\def\W{6.1}
\def\H{4.05}
\def\G{0.78}

\tikzset{
  panel/.style={
    rounded corners=5pt,
    draw=black!8,
    line width=0.45pt,
    fill=white
  },
  title/.style={
    anchor=north west,
    font=\bfseries\footnotesize,
    text=black!85,
    inner sep=0pt
  },
  axis/.style={
    draw=black!45,
    line width=0.60pt,
    -{Latex[length=2.1mm,width=2.1mm]}
  },
  flow/.style={
    draw=black!18,
    line width=0.70pt,
    -{Latex[length=2.2mm,width=2.2mm]}
  },
  prior/.style={
    draw=black!35,
    dashed,
    line width=1.00pt
  },
  target/.style={
    draw=cblue!90!black,
    line width=1.45pt
  },
  proposal/.style={
    draw=cgreen!85!black,
    line width=1.40pt
  },
  mix/.style={
    draw=cpurple!90!black,
    line width=1.45pt
  },
  cleanlabel/.style={
    fill=white,
    fill opacity=0.92,
    text opacity=1,
    rounded corners=1pt,
    inner xsep=1.5pt,
    inner ysep=0.6pt
  },
  guarantee/.style={
    draw=cpurple!65!black,
    fill=white,
    rounded corners=2pt,
    inner xsep=2pt,
    inner ysep=1pt,
    font=\scriptsize,
    text=cpurple!90!black
  }
}

\coordinate (A0) at (0,0);
\coordinate (B0) at ({\W+\G},0);
\coordinate (C0) at ({2*(\W+\G)},0);

\begin{scope}[shift={(A0)}]
  \draw[panel] (0,0) rectangle (\W,\H);
  \fill[cblue!2, rounded corners=5pt] (0.12,0.12) rectangle (\W-0.12,\H-0.55);

  \node[title] at (0.22,\H-0.20) {(a) Contact + partial observations};

  \draw[black!32, line width=1.0pt] (0.62,0.72) -- (5.46,0.72);
  \draw[black!32, line width=1.0pt] (4.52,0.72) -- (4.52,3.18);

  \draw[cblue!90!black, line width=1.8pt]
    (1.00,1.10) .. controls (1.55,2.28) and (2.65,2.98) .. (3.88,2.76);

  \draw[corange!95!black, line width=1.8pt]
    (3.88,2.76) .. controls (4.18,2.67) and (4.38,2.40) .. (4.45,2.12);

  \draw[cgreen!90!black, line width=1.8pt]
    (4.45,2.12) .. controls (3.96,1.48) and (2.96,1.00) .. (1.95,0.95);

  \filldraw[fill=cblue!10, draw=cblue!90!black, line width=0.9pt]
    (1.00,1.10) circle (0.13);
  \draw[cblue!90!black, line width=0.9pt, -{Latex[length=2mm,width=2mm]}]
    (1.00,1.10) -- (1.38,1.70);

  \foreach \x/\y in {
    1.10/1.03,1.42/1.55,1.88/2.26,2.40/2.70,3.00/2.96,
    3.55/2.90,4.05/2.54,4.30/2.22,3.92/1.60,3.02/1.07,2.32/0.98
  }{
    \fill[black!58] (\x,\y) circle (0.030);
  }

  \foreach \x/\y in {1.68/1.93,2.72/3.00,3.90/2.02}{
    \draw[black!42, line width=0.60pt] (\x,\y) circle (0.065);
  }

  \fill[cteal!8] (0.48,3.24) rectangle (1.03,3.44);
  \draw[cteal!85!black, line width=0.75pt] (0.48,3.24) rectangle (1.03,3.44);
  \fill[cteal!85!black] (0.76,3.34) circle (0.043);

  \draw[cblue!90!black, line width=1.30pt] (0.70,0.28) -- (1.05,0.28);
  \node[anchor=west, text=cblue!90!black, font=\scriptsize] at (1.13,0.28) {free};

  \draw[corange!95!black, line width=1.30pt] (1.92,0.28) -- (2.27,0.28);
  \node[anchor=west, text=corange!95!black, font=\scriptsize] at (2.35,0.28) {impact};

  \draw[cgreen!90!black, line width=1.30pt] (3.38,0.28) -- (3.73,0.28);
  \node[anchor=west, text=cgreen!90!black, font=\scriptsize] at (3.81,0.28) {stick/slip};

  \draw[black!42, line width=0.60pt] (5.02,0.28) circle (0.06);
  \node[anchor=west, text=black!55, font=\scriptsize] at (5.16,0.28) {missing};
\end{scope}

\begin{scope}[shift={(B0)}]
  \draw[panel] (0,0) rectangle (\W,\H);
  \fill[cred!3, rounded corners=5pt] (0.12,0.12) rectangle (\W-0.12,\H-0.55);

  \node[title] at (0.22,\H-0.20) {(b) Naive amortized VI};

  \draw[axis] (0.78,0.72) -- (5.36,0.72) node[anchor=west] {$z_{t+1}$};
  \draw[axis] (0.78,0.72) -- (0.78,3.16);

  \fill[cred!8] (3.70,0.72) rectangle (5.12,3.16);
  \draw[cred!42, line width=0.75pt] (3.70,0.72) -- (3.70,3.16);

  \draw[target, domain=-2:2, samples=120]
    plot ({0.78 + (\x+2)*1.14},
          {0.72 + 0.82*exp(-2.8*(\x+0.58)^2) + 1.05*exp(-3.1*(\x-0.82)^2)});
  \node[cleanlabel, text=cblue!90!black, font=\scriptsize] at (1.42,2.67) {target};

  \draw[proposal, domain=-2:0.30, samples=100]
    plot ({0.78 + (\x+2)*1.14},
          {0.72 + 0.96*exp(-3.1*(\x+0.56)^2)});
  \node[cleanlabel, text=cgreen!90!black, font=\scriptsize] at (2.43,1.28) {proposal};

  \draw[cred!85!black, line width=1.0pt] (4.35,2.18) -- (4.65,2.48);
  \draw[cred!85!black, line width=1.0pt] (4.35,2.48) -- (4.65,2.18);

  \fill[black!2, rounded corners=2pt] (1.00,0.08) rectangle (4.20,0.44);
  \node[anchor=west, text=black!55, font=\scriptsize] at (0.18,0.24) {weights};
  \foreach \i/\h in {0/0.08,1/0.08,2/0.08,3/0.08,4/0.50,5/0.08}{
    \fill[black!40] ({1.24+0.28*\i},0.10) rectangle ++(0.16,\h);
  }
  \node[anchor=west, text=cred!85!black, font=\scriptsize] at (4.00,0.24) {ESS$\downarrow$};
\end{scope}

\begin{scope}[shift={(C0)}]
  \draw[panel] (0,0) rectangle (\W,\H);
  \fill[cpurple!3, rounded corners=5pt] (0.12,0.12) rectangle (\W-0.12,\H-0.55);

  \node[title] at (0.22,\H-0.20) {(c) Support-safe \method{}};

  \draw[axis] (0.78,0.72) -- (5.36,0.72) node[anchor=west] {$z_{t+1}$};
  \draw[axis] (0.78,0.72) -- (0.78,3.16);

  \draw[prior, domain=-2:2, samples=120]
    plot ({0.78 + (\x+2)*1.14},
          {0.72 + 0.72*exp(-0.90*(\x)^2)});
  \node[cleanlabel, text=black!55, font=\scriptsize] at (4.42,1.96) {prior};

  \draw[proposal, domain=-2:2, samples=120]
    plot ({0.78 + (\x+2)*1.14},
          {0.72 + 0.93*exp(-2.05*(\x+0.46)^2)});
  \node[cleanlabel, text=cgreen!90!black, font=\scriptsize] at (1.48,2.10) {$q_\phi$};

  \draw[mix, domain=-2:2, samples=120]
    plot ({0.78 + (\x+2)*1.14},
          {0.72 + 0.54*exp(-2.05*(\x+0.46)^2) + 0.42*exp(-0.90*(\x)^2)});
  \node[cleanlabel, text=cpurple!90!black, font=\scriptsize] at (3.02,1.90) {$q_{\phi,\lambda}$};

  \node[guarantee, anchor=east] at (5.18,2.48)
    {$dP_t/dQ_{t,\lambda}\le 1/\lambda$};

  \draw[black!35, line width=0.55pt] (1.05,2.98) -- (2.52,2.98);
  \fill[cpurple!90!black] (1.78,2.98) circle (0.05);
  \node[anchor=north, text=black!50, font=\scriptsize] at (1.05,3.08) {0};
  \node[anchor=north, text=black!50, font=\scriptsize] at (2.52,3.08) {1};
  \node[anchor=west, text=cpurple!90!black, font=\scriptsize] at (2.66,2.98) {$\lambda$};

  \fill[black!2, rounded corners=2pt] (1.00,0.08) rectangle (4.20,0.44);
  \node[anchor=west, text=black!55, font=\scriptsize] at (0.18,0.24) {weights};
  \foreach \i/\h in {0/0.16,1/0.15,2/0.16,3/0.15,4/0.16,5/0.15}{
    \fill[black!40] ({1.24+0.28*\i},0.10) rectangle ++(0.16,\h);
  }
  \node[anchor=west, text=cpurple!90!black, font=\scriptsize] at (4.00,0.24) {ESS$\uparrow$};
\end{scope}

\draw[flow] ({\W+0.10},{\H/2}) -- ({\W+\G-0.10},{\H/2});
\draw[flow] ({2*(\W+\G)-0.10},{\H/2}) -- ({2*\W+\G+0.10},{\H/2});

\end{tikzpicture}%
}}
\caption{\textbf{Why support-safe hybrid inference matters.}
(a) Under partial observation, multiple contact-consistent futures remain
feasible; filled dots denote observed states and hollow dots denote missing
observations.
(b) A narrow amortized proposal can miss a feasible posterior branch,
concentrating importance weights and lowering ESS.
(c) \method{} mixes the learned proposal with a feasible transition law,
retaining support and bounding the transition-to-proposal density ratio
\(dP_t/dQ_{t,\lambda}\le 1/\lambda\).}
\label{fig:vhydro_supportsafe_final}
\end{figure*}

At filtering step \(t\), write
\(x_{t+1}:=(s_{t+1},z_{t+1})\in\mathsf X:=\{1,\ldots,M\}\times\mathbb R^d\).
Conditioned on a particle history \(h_t=(s_t,z_t,a_t,o_{\le t+1})\), let
\(P_t\) be the model one-step transition law used as the feasible support
carrier, and let \(Q_t\) be the observation-adaptive amortized proposal. For
\(\lambda\in(0,1]\), define the defensive proposal

\begin{equation}
\label{eq:defensive_proposal}
Q_{t,\lambda}:=(1-\lambda)Q_t+\lambda P_t .
\end{equation}
Let $g_t(x):=p_\theta(o_{t+1}\mid z_{t+1})$, $Z_t:=\mathbb E_{P_t}[g_t(X)]$, $M_{2,t}:=\mathbb E_{P_t}[g_t(X)^2]$, and $\Pi_t(dx):=g_t(x)P_t(dx)/Z_t$.

\begin{theorem}[Support-safe budgeted IW/FIVO increment]
\label{thm:support_safe_budgeted_vhydro}
Assume $0<Z_t<\infty$ and $M_{2,t}<\infty$.  Then $P_t\ll Q_{t,\lambda}$, $\Pi_t\ll Q_{t,\lambda}$, and
\begin{equation}
\label{eq:rn_bound}
0\le \frac{dP_t}{dQ_{t,\lambda}}(x)\le \frac{1}{\lambda}
\qquad Q_{t,\lambda}\text{-a.s.}
\end{equation}
Consequently, for the one-step weight $W_t(x):=g_t(x)dP_t/dQ_{t,\lambda}(x)$,
\begin{equation}
\label{eq:chi_square_bound}
\chi^2(\Pi_t\|Q_{t,\lambda})
\le
\frac{1}{\lambda}\rho_t-1,
\qquad
\rho_t:=\frac{M_{2,t}}{Z_t^2}.
\end{equation}

Since \(M_{2,t}\ge Z_t^2\) by Jensen's inequality, \(\rho_t\ge1\), so the
right-hand side is nonnegative for \(\lambda\in(0,1]\).

If $X^{1:N}\overset{\mathrm{i.i.d.}}{\sim}Q_{t,\lambda}$ and $\widehat Z_t=N^{-1}\sum_{i=1}^NW_t(X^i)$, then
\begin{equation}
\label{eq:variance_ess_bound}
\begin{aligned}
\mathbb E[\widehat Z_t\mid h_t]&=Z_t,\\
\frac{\operatorname{Var}(\widehat Z_t\mid h_t)}{Z_t^2}
&\le
\frac{1}{N}\left(\frac{\rho_t}{\lambda}-1\right),\\
\frac{\operatorname{ESS}_N}{N}
&\to
\frac{1}{1+\chi^2(\Pi_t\|Q_{t,\lambda})}
\ge\frac{\lambda}{\rho_t}.
\end{aligned}
\end{equation}
Moreover, if an $h_t$-measurable certificate $\bar\rho_t\ge \rho_t$ is available, the relative-variance budget $\operatorname{Var}(\widehat Z_t\mid h_t)/Z_t^2\le\tau^2$ is certified whenever
\begin{equation}
\label{eq:lambda_budget}
\lambda\ge \frac{\bar\rho_t}{1+N\tau^2},
\qquad
\frac{\bar\rho_t}{1+N\tau^2}\le1.
\end{equation}
\end{theorem}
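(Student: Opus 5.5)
The plan is to work directly with densities relative to the dominating measure $\mu:=Q_t+P_t$. Write $q=dQ_t/d\mu$ and $p=dP_t/d\mu$, so that $Q_{t,\lambda}$ has density $q_\lambda=(1-\lambda)q+\lambda p\ge\lambda p$.

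\textbf{Absolute continuity and the ratio bound.} If $Q_{t,\lambda}(A)=0$, then $\lambda P_t(A)\le Q_{t,\lambda}(A)=0$, so $P_t\ll Q_{t,\lambda}$. Since $\Pi_t\ll P_t$ by construction, transitivity gives $\Pi_t\ll Q_{t,\lambda}$. On $\{q_\lambda>0\}$ the Radon--Nikodym derivative is $p/q_\lambda\le p/(\lambda p)=1/\lambda$. The complementary set $\{q_\lambda=0\}$ is $Q_{t,\lambda}$-null, and we set the derivative to $0$ there. This yields \eqref{eq:rn_bound}.

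\textbf{Second moment and $\chi^2$.} Set $r:=dP_t/dQ_{t,\lambda}$. Then
\[
\mathbb E_{Q_{t,\lambda}}[W_t^2]=\mathbb E_{Q_{t,\lambda}}[g_t^2r^2]=\mathbb E_{P_t}[g_t^2r]\le M_{2,t}/\lambda ,
\]
where the inequality uses $r\le 1/\lambda$ $P_t$-a.s., which holds because $P_t\ll Q_{t,\lambda}$. Moreover $d\Pi_t/dQ_{t,\lambda}=W_t/Z_t$. Using the identity $\chi^2(\Pi\|Q)=\mathbb E_Q[(d\Pi/dQ)^2]-1$, we get $\chi^2\le \rho_t/\lambda-1$, which is \eqref{eq:chi_square_bound}. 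The Jensen remark, $M_{2,t}\ge Z_t^2$, is the variance of $g_t$ under $P_t$ being nonnegative.

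\textbf{Unbiasedness, variance, and ESS.} Unbiasedness follows from $\mathbb E_{Q_{t,\lambda}}[W_t]=\mathbb E_{P_t}[g_t]=Z_t$, conditionally on $h_t$ with $Q_t,P_t$ fixed given $h_t$. For i.i.d. draws, $\operatorname{Var}(\widehat Z_t\mid h_t)=N^{-1}(\mathbb E[W_t^2]-Z_t^2)$. Dividing by $Z_t^2$ and inserting the second-moment bound gives the stated variance inequality.

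For ESS, use $\operatorname{ESS}_N/N=(N^{-1}\sum W_i)^2/(N^{-1}\sum W_i^2)$. The strong law of large numbers applies to numerator and denominator, since both moments are finite by the bound above. By the continuous mapping theorem this converges a.s. to $Z_t^2/\mathbb E[W_t^2]=1/(1+\chi^2)$. Since $1+\chi^2\le\rho_t/\lambda$, the limit is at least $\lambda/\rho_t$.

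\textbf{Budget certification.} If $\lambda\ge\bar\rho_t/(1+N\tau^2)$, then $\rho_t/\lambda\le\bar\rho_t/\lambda\le 1+N\tau^2$. The variance bound therefore gives relative variance at most $\tau^2$. The side condition $\bar\rho_t/(1+N\tau^2)\le 1$ only ensures that an admissible $\lambda\in(0,1]$ exists. Because $\bar\rho_t$ is $h_t$-measurable, the choice of $\lambda$ is fixed before sampling, so the i.i.d. conditional analysis remains valid.

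\textbf{Main obstacle.} The only delicate points are measure-theoretic. First, the Radon--Nikodym bound must be handled on null sets. Second, the ESS statement needs to be read as an a.s. limit, with the conditioning on $h_t$ made explicit so that $\lambda$, $P_t$ and $Q_t$ are deterministic given $h_t$. Everything else is a direct computation.
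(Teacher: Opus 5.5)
Your proposal is correct and follows the paper's proof essentially step for step: domination via $Q_{t,\lambda}\ge\lambda P_t$, the bound $r\le 1/\lambda$, and the identity $\mathbb E_{Q_{t,\lambda}}[W_t^2]=Z_t^2(1+\chi^2)$ controlled through $\mathbb E_{P_t}[g_t^2 r]\le M_{2,t}/\lambda$. From there both arguments use unbiasedness, the variance identity, the strong law for ESS, and the same budget algebra. The one cosmetic difference is that you get the Radon--Nikodym bound pointwise from densities with respect to $P_t+Q_t$, which is the paper's ``density-level version,'' whereas the paper's main argument proves it by contradiction on $\{r>1/\lambda\}$; on $\{p=0\}$ you should read $p/q_\lambda=0$ directly rather than writing $p/(\lambda p)$.
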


Theorem~\ref{thm:support_safe_budgeted_vhydro} makes the support role explicit: if \(g_t\in L^2(P_t)\), the one-step IW/FIVO normalizer estimate has relative variance bounded by \((\rho_t/\lambda-1)/N\). We test this through ESS/N, relative weight variance, empirical relative variance, NLL, ECE, and Cov90.

\paragraph{Relation to prior bounds.}
The density-ratio inequality is the classical defensive-mixture argument specialized to an IW/FIVO filtering increment. The \method{}-specific contribution is operational and sequential: at each history \(h_t\), \(\lambda_t\) is chosen before particles are drawn, the same realized mixture is used for weighting, and Eq.~\eqref{eq:lambda_budget} becomes a support-coverage gate inside the hybrid filter rather than a post-hoc static importance-sampling diagnostic.

\begin{theorem}[Exponential concentration of predictive modes]
\label{thm:mode_recovery_vhydro}
Consider a length-\(L\) segment generated by a single latent segment mode
\(m\). Actions are treated as fixed interventions in the logged evaluation segment, or are drawn by a nonanticipative behavior policy whose conditional action law is common to all candidate modes; under this intervention/off-policy convention, action probabilities cancel from mode odds. If the policy likelihood depends on the candidate mode, that likelihood must be included in Eq.~\eqref{eq:mode_posterior}, and the theorem below does not apply in this form.
Let \(S\) be the segment mode and let
\begin{equation}
\label{eq:mode_posterior}
\Pi_L(S=s)
\propto
\pi_s\prod_{\ell=0}^{L-1}p_s(O_{\ell+1}\mid\mathcal F_\ell)
\end{equation}

be the clamped-mode \method{} posterior after integrating or
particle-marginalizing the continuous latent state \(z\). The theorem uses
only predictive likelihood increments and does not assume simulator contact
labels. For each wrong mode \(s\neq m\), define
\begin{equation}
\label{eq:log_ratio}
R_{\ell,s}:=\log\frac{p_m(O_{\ell+1}\mid\mathcal F_\ell)}{p_s(O_{\ell+1}\mid\mathcal F_\ell)}.
\end{equation}
Assume $\sum_{\ell=0}^{L-1}\mathbb E_\star[R_{\ell,s}\mid\mathcal F_\ell]\ge \Gamma_{L,s}$ and that the centered increments are conditionally sub-Gaussian with variance proxy $V_s:=\sum_{\ell}\sigma_{\ell,s}^2$.  Let $B_s:=\log(\pi_s/\pi_m)$ and $u_{s,\delta}:=\sqrt{2V_s\log((M-1)/\delta)}$.  For every $\delta\in(0,1)$, with probability at least $1-\delta$,
\begin{equation}
\label{eq:mode_recovery_sharp}
\begin{aligned}
\Pi_L(S\neq m)
&\le
\frac{A_{L,\delta}^{\sharp}}{1+A_{L,\delta}^{\sharp}}
\le A_{L,\delta}^{\sharp},\\
A_{L,\delta}^{\sharp}
&:=
\sum_{s\neq m}
\exp\left(B_s-\Gamma_{L,s}+u_{s,\delta}\right).
\end{aligned}
\end{equation}
In particular, if $\Gamma_{L,s}\ge L\Delta$, $B_s\le B$, and $V_s\le L\sigma^2$, then, with $u_{L,\delta}:=\sqrt{2L\log((M-1)/\delta)}$,
\begin{equation}
\label{eq:mode_recovery_simple}
\Pi_L(S\neq m)
\le
(M-1)\exp\left(B-L\Delta+\sigma u_{L,\delta}\right).
\end{equation}
If the amortized mode posterior satisfies $\mathrm{KL}(q_\phi(S\mid O_{1:L},A_{0:L-1})\|\Pi_L)\le\varepsilon_q$, then on the same event
\begin{equation}
\label{eq:mode_recovery_variational}
q_\phi(S\neq m\mid\cdot)
\le
\frac{A_{L,\delta}^{\sharp}}{1+A_{L,\delta}^{\sharp}}
+\sqrt{\varepsilon_q/2}.
\end{equation}
\end{theorem}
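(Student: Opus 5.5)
The plan is to write the posterior odds of each wrong mode against the true mode $m$ as an exponential of a sum of log-likelihood ratios, and then control that sum with a martingale concentration bound. From Eq.~\eqref{eq:mode_posterior},
\[
\frac{\Pi_L(S=s)}{\Pi_L(S=m)}=\exp\Bigl(B_s-\sum_{\ell=0}^{L-1}R_{\ell,s}\Bigr),
\]
and hence
\[
\Pi_L(S\neq m)=\frac{A}{1+A},\qquad A:=\sum_{s\neq m}\exp\Bigl(B_s-\sum_\ell R_{\ell,s}\Bigr).
\]
The map $a\mapsto a/(1+a)$ is increasing and satisfies $a/(1+a)\le a$. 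So both inequalities in Eq.~\eqref{eq:mode_recovery_sharp} follow once we show $A\le A^\sharp_{L,\delta}$ on an event of probability at least $1-\delta$.

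For this it suffices to show $\sum_\ell R_{\ell,s}\ge \Gamma_{L,s}-u_{s,\delta}$ for every $s\neq m$ simultaneously. Decompose
\[
\sum_\ell R_{\ell,s}=\sum_\ell \mathbb E_\star[R_{\ell,s}\mid\mathcal F_\ell]+\sum_\ell D_{\ell,s},
\]
where $D_{\ell,s}$ are the centered increments, which form a martingale difference sequence. The drift term is at least $\Gamma_{L,s}$ by assumption. For the martingale term, conditional sub-Gaussianity gives
\[
\mathbb E[e^{-\eta D_{\ell,s}}\mid\mathcal F_\ell]\le e^{\eta^2\sigma_{\ell,s}^2/2}.
\]
Iterating the tower property then shows that $\exp\bigl(-\eta\sum_\ell D_{\ell,s}-\eta^2V_s/2\bigr)$ has expectation at most one. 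Chernoff's bound with the optimal $\eta$ (the Azuma--Hoeffding/Freedman-type step) gives
\[
\mathbb P\Bigl(\sum_\ell D_{\ell,s}\le -u\Bigr)\le e^{-u^2/(2V_s)}.
\]
Choosing $u=u_{s,\delta}$ makes this probability $\delta/(M-1)$. A union bound over the $M-1$ wrong modes gives the required event with probability at least $1-\delta$.

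The main technical point is that the iterated-MGF argument requires the $\sigma_{\ell,s}^2$ to be deterministic, or at least predictable with the sum $V_s$ fixed in advance. I would read the hypothesis as deterministic variance proxies; if they are only predictable, one needs a deterministic bound on $V_s$.

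The simplified bound in Eq.~\eqref{eq:mode_recovery_simple} follows by substitution. With $V_s\le L\sigma^2$ we get $u_{s,\delta}\le \sigma\sqrt{2L\log((M-1)/\delta)}=\sigma u_{L,\delta}$. Together with $\Gamma_{L,s}\ge L\Delta$ and $B_s\le B$, each summand is at most $\exp(B-L\Delta+\sigma u_{L,\delta})$, and there are $M-1$ summands.

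For Eq.~\eqref{eq:mode_recovery_variational}, I would apply Pinsker's inequality:
\[
|q_\phi(E)-\Pi_L(E)|\le \mathrm{TV}(q_\phi,\Pi_L)\le\sqrt{\mathrm{KL}(q_\phi\|\Pi_L)/2}\le\sqrt{\varepsilon_q/2}.
\]
Taking $E=\{S\neq m\}$ and adding this to the bound on $\Pi_L(S\neq m)$ on the same event completes the proof.
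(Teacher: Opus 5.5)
Your proposal is correct and follows the paper's proof essentially step for step: the odds are written as exponentiated log-likelihood sums with $\Pi_L(S\neq m)=A/(1+A)$, split into a drift part and a martingale part, and the martingale part is bounded by Chernoff through iterated conditional MGFs (the paper also implicitly treats the $\sigma_{\ell,s}$ as fixed constants, as you propose); a union bound at level $u_{s,\delta}$, substitution for Eq.~\eqref{eq:mode_recovery_simple}, and Pinsker for the variational transfer complete the argument. The one detail the paper handles and you skip is the degenerate case $V_s=0$, where $\eta=u/V_s$ and $e^{-u^2/(2V_s)}$ are undefined; there the paper argues that a zero variance proxy forces each centered increment $D_{\ell,s}=0$ almost surely (equality in Jensen), so $\sum_\ell R_{\ell,s}\ge\Gamma_{L,s}$ holds surely and the union bound is unaffected.
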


Theorem~\ref{thm:mode_recovery_vhydro} states the predictive-separation
condition needed for posterior concentration of a predictive mode up to label
permutation. We test this with Mode F1, ARI, change-point F1, and segment
purity computed against frozen kinematic proxy labels.

\paragraph{Connection to Experiment~2.}
The theorem identifies the sufficient quantities for concentration:
log-likelihood separation \(\Gamma_{L,s}\), variance proxy \(V_s\), and prior
log-odds \(B_s\). We do not claim a fully numerical per-task certificate of
these constants. Instead, Table~\ref{tab:si_exp2_per_task} reports held-out
metrics computed with the frozen kinematic proxy labels, auditing whether the
recovered discrete state behaves like a coherent predictive contact regime.

\begin{theorem}[Filtering- and mode-robust sparse port-Hamiltonian recovery]
\label{thm:sparse_ph_recovery}
Fix a recovered mode $m$ with index set $I_m$ and $n_m:=|I_m|$.  Suppose the true mode law has sparse port-Hamiltonian form
\begin{equation}
\label{eq:sparse_ph_model}
\begin{aligned}
\dot z_i&=(J_m-R_m)\nabla H_m(z_i)+G_ma_i+\varepsilon_i,\\
H_m(z)&=\Theta(z)^\top\xi_m^\star.
\end{aligned}
\end{equation}
where $J_m^\top=-J_m$, $R_m\succeq0$, and $\xi_m^\star$ is $k$-sparse on support $S$.  Let \method{} produce the stacked plug-in regression
\begin{equation}
\label{eq:plugin_regression}
\begin{aligned}
\widehat b&=\widehat A\xi_m^\star+\zeta_m,\\
\widehat A_i&=(\widehat J_m-\widehat R_m)\nabla\Theta(\widehat z_i)^\top,\\
\widehat b_i&=\widehat{\dot z}_i-\widehat G_ma_i.
\end{aligned}
\end{equation}
Assume the score and restricted-strong-convexity conditions
\begin{equation}
\label{eq:score_rsc}
\begin{aligned}
\left\|\frac{1}{n_m}\widehat A^\top\zeta_m\right\|_\infty
&\le\frac{\lambda_m}{2},\\
\frac{1}{n_m}\|\widehat A\Delta\|_2^2
&\ge\kappa_m\|\Delta\|_2^2
\quad
\text{for }\|\Delta_{S^c}\|_1\le3\|\Delta_S\|_1.
\end{aligned}
\end{equation}
For the Lasso estimator
\begin{equation}
\label{eq:sparse_lasso}
\widehat\xi_m\in\arg\min_\xi
\left\{
\frac{1}{2n_m}\|\widehat b-\widehat A\xi\|_2^2+\lambda_m\|\xi\|_1
\right\},
\end{equation}
we have
\begin{equation}
\label{eq:sparse_bounds}
\begin{aligned}
\|\widehat\xi_m-\xi_m^\star\|_2
&\le
\frac{4\sqrt{k}\lambda_m}{\kappa_m},\\
\frac{1}{n_m}\|\widehat A(\widehat\xi_m-\xi_m^\star)\|_2^2
&\le
\frac{16k\lambda_m^2}{\kappa_m}.
\end{aligned}
\end{equation}
If \(\beta_{\min,m}:=\min_{j\in S}|\xi_{m,j}^\star|
>8\sqrt{k}\lambda_m/\kappa_m\), thresholding \(\widehat\xi_m\) at any level in
\((4\sqrt{k}\lambda_m/\kappa_m,\,
\beta_{\min,m}-4\sqrt{k}\lambda_m/\kappa_m)\) recovers \(S\). If
\(\zeta_m=e_m+u_m\), \(e_m\) is conditionally sub-Gaussian with scale
\(\sigma_m\), and
\(\|(1/n_m)\widehat A^\top u_m\|_\infty\le\nu_m\), then the score part of
condition~\eqref{eq:score_rsc} holds with probability at least \(1-\delta\)
for
\begin{equation}
\label{eq:lambdam_choice}
\begin{aligned}
\lambda_m
&\ge
2\left[
L_{A,m}\sigma_m\sqrt{\frac{2\log(2p/\delta)}{n_m}}+\nu_m
\right],\\
L_{A,m}
&:=\max_j\frac{\|\widehat A_{\cdot j}\|_2}{\sqrt{n_m}}.
\end{aligned}
\end{equation}
\end{theorem}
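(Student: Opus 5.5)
The plan is to run the standard Lasso oracle argument of \citet{bickel2009simultaneous} on the plug-in regression~\eqref{eq:plugin_regression}. The mode-specific ingredients (filtering error in \(\widehat z_i\), derivative error in \(\widehat{\dot z}_i\), mode impurity in \(I_m\), and mis-estimation of \((J_m,R_m,G_m)\)) are never used individually. They enter only through the aggregate residual \(\zeta_m\), which is by definition \(\widehat b-\widehat A\xi_m^\star\). So the deterministic part of the theorem is a pure statement about the Lasso under a score condition and restricted strong convexity, and I would prove it in that generality.

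\textbf{Step 1: basic inequality and cone condition.} Let \(\Delta:=\widehat\xi_m-\xi_m^\star\). Optimality of \(\widehat\xi_m\) against \(\xi_m^\star\) in~\eqref{eq:sparse_lasso} gives
\[
\frac{1}{2n_m}\|\widehat A\Delta\|_2^2\le \frac{1}{n_m}\zeta_m^\top\widehat A\Delta+\lambda_m\bigl(\|\xi_m^\star\|_1-\|\xi_m^\star+\Delta\|_1\bigr).
\]
Hölder's inequality and the score part of~\eqref{eq:score_rsc} bound the cross term by \(\tfrac{\lambda_m}{2}\|\Delta\|_1\). Decomposing over \(S\) and \(S^c\) bounds the \(\ell_1\) difference by \(\|\Delta_S\|_1-\|\Delta_{S^c}\|_1\). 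Combining, and dropping the nonnegative left side, yields \(\|\Delta_{S^c}\|_1\le 3\|\Delta_S\|_1\), so \(\Delta\) lies in the cone where the RSC condition applies. Keeping the left side instead gives
\[
\frac{1}{n_m}\|\widehat A\Delta\|_2^2\le 3\lambda_m\|\Delta_S\|_1\le 3\lambda_m\sqrt k\,\|\Delta\|_2.
\]

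\textbf{Step 2: rates and support recovery.} RSC gives \(\kappa_m\|\Delta\|_2^2\le 3\sqrt k\lambda_m\|\Delta\|_2\), hence \(\|\Delta\|_2\le 3\sqrt k\lambda_m/\kappa_m\le 4\sqrt k\lambda_m/\kappa_m\). Substituting back, \(\frac1{n_m}\|\widehat A\Delta\|_2^2\le 3\sqrt k\lambda_m\cdot 4\sqrt k\lambda_m/\kappa_m\le 16k\lambda_m^2/\kappa_m\). The stated constants are looser than what this chain produces, so there is slack to absorb a cruder version of Step 1.

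For support recovery, set \(r:=4\sqrt k\lambda_m/\kappa_m\). Then \(\|\Delta\|_\infty\le\|\Delta\|_2\le r\). Off \(S\) this gives \(|\widehat\xi_{m,j}|\le r\). On \(S\) it gives \(|\widehat\xi_{m,j}|\ge\beta_{\min,m}-r\). Any threshold in \((r,\beta_{\min,m}-r)\) therefore separates the two groups. This interval is nonempty exactly when \(\beta_{\min,m}>2r=8\sqrt k\lambda_m/\kappa_m\), which is the stated hypothesis.

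\textbf{Step 3: probabilistic choice of \(\lambda_m\).} Write \(\zeta_m=e_m+u_m\) and use the triangle inequality in \(\|\cdot\|_\infty\). The \(u_m\) part contributes at most \(\nu_m\) by assumption. For each column \(j\), the variable \(\frac1{n_m}\widehat A_{\cdot j}^\top e_m\) is sub-Gaussian with scale \(\sigma_m\|\widehat A_{\cdot j}\|_2/n_m\le \sigma_m L_{A,m}/\sqrt{n_m}\). This requires \(\widehat A\) to be conditionally fixed given the \(\sigma\)-field under which \(e_m\) is sub-Gaussian, and I would state this as the interpretation of ``conditionally sub-Gaussian.'' A two-sided tail bound plus a union bound over \(p\) columns gives
\[
\max_j\Bigl|\frac1{n_m}\widehat A_{\cdot j}^\top e_m\Bigr|\le L_{A,m}\sigma_m\sqrt{2\log(2p/\delta)/n_m}
\]
with probability at least \(1-\delta\). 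The choice~\eqref{eq:lambdam_choice} then makes the full score at most \(\lambda_m/2\).

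\textbf{Main obstacle.} I expect Step 3 to be the main obstacle. The design \(\widehat A\) is built from the filtered states \(\widehat z_i\), which depend on the same data that generate the noise \(e_m\). The conditioning must be arranged so that \(\widehat A\) is measurable with respect to a \(\sigma\)-field under which \(e_m\) stays centered and sub-Gaussian; sample splitting, or a martingale/self-normalized version using predictable \(\widehat z_i\), are the natural fallbacks. Any dependence that cannot be handled this way is pushed into \(u_m\), and hence into \(\nu_m\), which is how the filtering and mode-impurity contributions enter the bound. Steps 1 and 2 are routine.
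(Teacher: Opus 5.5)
Your proposal is correct and follows the paper's proof essentially step for step. The shared steps are:
\begin{itemize}
\item the basic inequality with H\"older and the $S/S^c$ split, which gives the cone condition;
\item RSC, which gives the $3\sqrt{k}\lambda_m/\kappa_m$ rate, then relaxed to the stated constants;
\item $\|\cdot\|_\infty\le\|\cdot\|_2$ for thresholded support recovery;
\item a sub-Gaussian tail conditional on $\widehat A$, a union bound over the $p$ columns, and the triangle inequality for $u_m$.
\end{itemize}
The measurability concern you flag is handled in the paper exactly as you suggest. It reads ``conditionally sub-Gaussian'' as conditional on the realized design $\widehat A$, and then removes the conditioning via the tower property.
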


The restricted-strong-convexity part of condition~\eqref{eq:score_rsc} is
assumed for the realized design; if it holds only on a separate event, the
probability statement is combined with that event by a union bound.

Under the score and restricted-strong-convexity assumptions, Theorem~\ref{thm:sparse_ph_recovery} gives finite-sample sparse-recovery bounds for mode-conditioned port-Hamiltonian regression with aggregate filtering, derivative, mode-impurity, and plug-in perturbations. Once mode-pure segments are available,
\method{} can recover interpretable equations evaluated by sparse support F1,
coefficient error, vector-field NRMSE, and physical-constant error.

\paragraph{Relation to prior bounds.}
The result is a Lasso oracle bound applied to a mode-conditioned
port-Hamiltonian library with plug-in errors. The four-term decomposition is
used as claim bookkeeping: it identifies the aggregate sources of perturbation
that can affect sparse recovery, but the experiments measure aggregate sparse
support and coefficient recovery rather than separately certifying each
perturbation component.

The four components of
\[
\nu_m
=
\nu_m^{\text{filt}}
+\nu_m^{\text{der}}
+\nu_m^{\text{mode}}
+\nu_m^{\text{pH}}
\]
are defined in Appendix~\ref{app:proof_sparse_ph_recovery}. They should be read
as an aggregate error accounting device, not as four independently measured
empirical certificates.

\paragraph{Assumptions and claim boundary.}
All three guarantees are model-relative. The carrier \(P_t\) must assign positive mass to the physical branch one wants to preserve; a branch omitted by \(P_t\) cannot be recovered by support mixing. The mode set and sparse library are finite and fixed in the present experiments. The action-cancellation condition in Theorem~\ref{thm:mode_recovery_vhydro} matches the logged-intervention evaluation protocol used here; mode-dependent closed-loop policies require including the policy likelihood in the mode odds. Finally, the results below audit filtering, segmentation, and sparse-law recovery, not certified closed-loop control.

\section{Experiments}
\label{sec:experiments}
The experiments audit the three mechanism links: support-safe IW/FIVO under occlusion, proxy-label contact-regime segmentation, and sparse-law recovery on known-equation systems. We report ESS/N, relative weight variance, empirical relative variance, NLL, ECE, Cov90, Mode F1, ARI, change-point F1, segment purity, support F1, coefficient error, physical-constant error, and vector-field NRMSE.

\paragraph{Shared protocol and matched ablations.}
Each applicable method/task/condition cell uses \(20\) independent seeds, with no seed pruning. Within each comparison, variants share architecture family, latent dimension, particle count, training budget, observation likelihood, resampling rule, occlusion schedule when applicable, and proxy-label construction. The \(\lambda=0\) / no-support ablation removes only defensive support mass; the smooth/no-mode/no-sparsity/no-pH ablations remove the indicated mechanism. Proxy labels are deterministic functions of recorded state/action fields and are used only for evaluation. These are mechanism-isolation audits, not closed-loop task-success, contact-force-ground-truth, or real-robot claims.

\paragraph{Experiment 1: high-occlusion support-mismatch audit.}
Contact-style denotes PickCube-v1, PokeCube-v1, PullCube-v1, and PushCube-v1 under controlled occlusion. We report two support-safe operating points on the same support-mass/sharpness frontier. \method{}-adaptive uses the default validation-frozen rule and spends only the support mass selected before sampling. \method{}-conservative uses a deliberately larger fixed support mass, buying variance margin at the cost of a less sharp proposal. The conservative variant is therefore not a separate architecture or a post-hoc diagnostic; it is a second operating point predicted by Theorem~\ref{thm:support_safe_budgeted_vhydro}. The non-defensive \(\lambda=0\) / no-support baseline is matched: it uses the same architecture, particle count, objective, observation likelihood, resampling rule, training budget, and numerical stabilizers, and removes only the defensive carrier mass.

At \(90\%\) occlusion, defensive support mixing prevents the predictive collapse seen in the non-defensive filter. On Contact-style tasks, NLL drops from \(11.286\) for \(\lambda=0\) to \(0.161\) for the conservative support-safe operating point; on Sawyer/BridgeData-style diagnostics, NLL drops from \(9.401\) to \(0.074\). ESS/N, which is directly tied to the theorem's normalizer-variance frontier, rises from \(0.042\) to \(0.141\) and from \(0.046\) to \(0.151\), respectively. The adaptive operating point also avoids the large NLL collapse but remains closer to the variance frontier. Thus the claim is high-occlusion support repair, not uniform calibration dominance: Appendix~\ref{app:exp1_supportsafe_audit} shows that low-occlusion coverage can be sharper for adaptive support mixing.

\begin{figure}[t]
\centering
\begin{subfigure}[t]{0.49\linewidth}
    \centering
    \includegraphics[width=\linewidth]{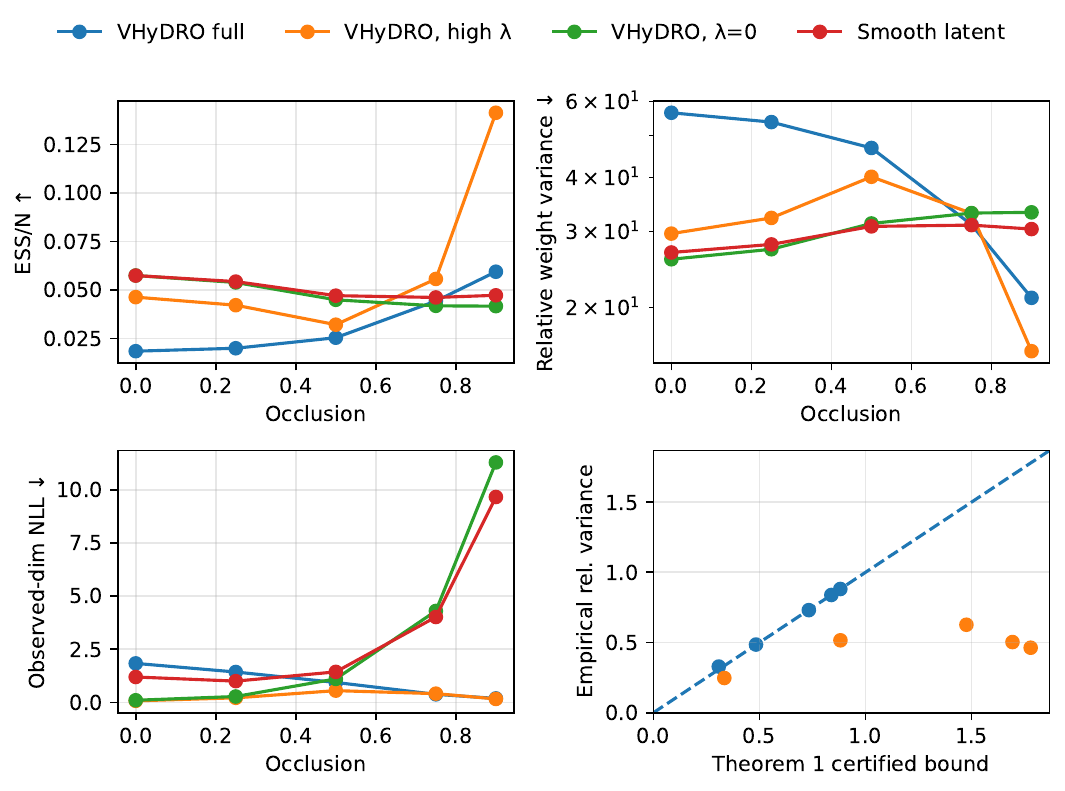}
    \caption{Contact-style family.}
\end{subfigure}\hfill
\begin{subfigure}[t]{0.49\linewidth}
    \centering
    \includegraphics[width=\linewidth]{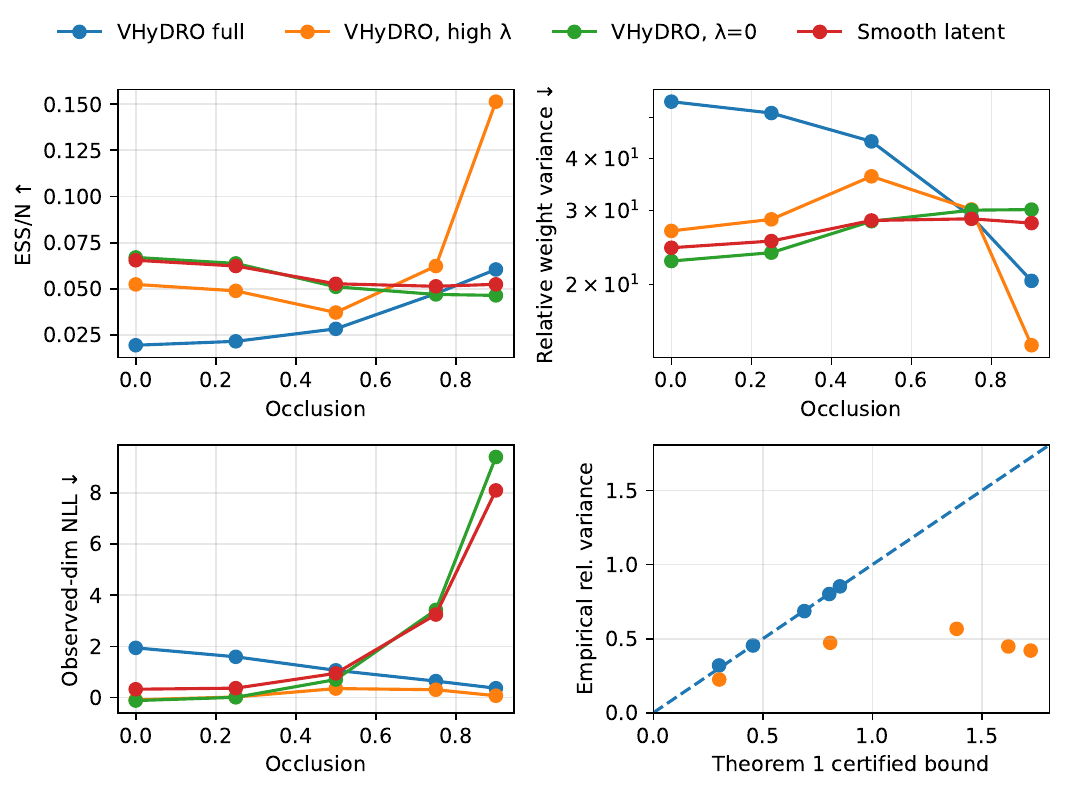}
    \caption{Sawyer/BridgeData family.}
\end{subfigure}
\caption{\textbf{Support-safe filtering under partial observation.}
Occlusion makes the one-step posterior multimodal and exposes proposal support
mismatch. Defensive support mixing increases high-occlusion ESS and lowers
weight instability. NLL is shown as a downstream collapse diagnostic; the
finite-variance certificate in Theorem~\ref{thm:support_safe_budgeted_vhydro}
directly concerns the normalizer-estimation variance and ESS frontier.}
\label{fig:exp1_certificate_pair}
\end{figure}

\begin{table}[t]
\centering
\small
\caption{\textbf{Experiment 1: high-occlusion filtering metrics.}
At \(90\%\) occlusion, support-safe proposals retain feasible contact branches.
NLL is reported as a downstream collapse diagnostic; the theorem directly
controls relative normalizer variance and the ESS frontier. Each family row
aggregates four tasks or diagnostic families with \(20\) seeds each
(\(80\) runs per method-family cell). Appendix~\ref{app:exp1_supportsafe_audit}
reports the full sweep and mean$\pm$SEM values.}
\label{tab:exp1_hard_occlusion}
\resizebox{\linewidth}{!}{%
\begin{tabular}{llrrrrrr}
\toprule
Family & Method & ESS/N $\uparrow$ & Rel. w-var $\downarrow$ & Emp. rel-var $\downarrow$ & NLL $\downarrow$ & ECE $\downarrow$ & Cov90 $\uparrow$ \\
\midrule
Contact-style & \method{}-conservative & \textbf{0.141} & \textbf{15.851} & \textbf{0.248} & \textbf{0.161} & \textbf{0.125} & \textbf{0.756} \\
Contact-style & \method{}-adaptive & 0.059 & 21.057 & 0.329 & 0.186 & 0.152 & 0.727 \\
Contact-style & Smooth latent & 0.047 & 30.366 & 0.474 & 9.660 & 0.469 & 0.356 \\
Contact-style & \method{}-no-support & 0.042 & 33.195 & 0.519 & 11.286 & 0.467 & 0.358 \\
Sawyer/BridgeData-style & \method{}-conservative & \textbf{0.151} & \textbf{14.355} & \textbf{0.224} & \textbf{0.074} & \textbf{0.111} & \textbf{0.772} \\
Sawyer/BridgeData-style & \method{}-adaptive & 0.060 & 20.432 & 0.319 & 0.369 & 0.147 & 0.732 \\
Sawyer/BridgeData-style & Smooth latent & 0.052 & 28.050 & 0.438 & 8.095 & 0.440 & 0.391 \\
Sawyer/BridgeData-style & \method{}-no-support & 0.046 & 30.205 & 0.472 & 9.401 & 0.435 & 0.397 \\
\bottomrule
\end{tabular}}
\end{table}

\paragraph{Experiment 2: contact-regime segmentation with proxy labels on manipulation demonstrations.}

\paragraph{Datasets.}
Experiment~2 uses released ManiSkill\,2 HDF5 demonstrations \citep{gu2023maniskill2} on PushCube-v1, PokeCube-v1, PullCubeTool-v1, and PegInsertionSide-v1. The main table reports this released-demonstration audit. Appendix~\ref{app:exp2_sawyer_bridge_proxy} reports the same segmentation protocol on four author-defined Sawyer/BridgeData-style diagnostic families: BridgeDataOcclusion, SawyerTowerCreation, SawyerLaundryLayout, and SawyerObjectSearch. These names denote fixed diagnostic groupings used in this paper, not canonical released BridgeData benchmark task labels. All reported segmentation metrics use deterministic denoised kinematic proxy labels computed from recorded state/action fields. These labels are frozen before held-out evaluation, are never used to supervise the mode posterior, and should not be interpreted as simulator contact-force ground truth.

This experiment uses frozen, deterministic kinematic proxy labels only as an evaluation target; they are matched to predicted modes after training and should not be read as physical contact-state ground truth. The HMM baseline is a post-hoc segmenter fit to the same kinematic features, while \method{} uses the discrete state inside the predictive filter; the no-support and no-mode ablations remove defensive mixing and discrete contact structure, respectively.

On held-out episodes, \method{} has the strongest joint profile across ARI,
change-point F1, Mode F1, and segment purity. The most reliable gains are on
ARI and change-point F1. The Mode F1 gain over HMM is positive but smaller
relative to its SEM, so we interpret the result as improved predictive
segmentation rather than uniform dominance on every proxy-label metric.

\begin{table}[t]
\centering
\small
\caption{\textbf{Experiment 2: contact-regime segmentation with proxy labels on ManiSkill HDF5 demonstrations.}
Metrics are computed on held-out test episodes across four tasks. Mode F1 uses
denoised kinematic proxy labels, not simulator contact-force ground truth.
The aggregate mean$\pm$SEM is computed over the held-out task/seed cells used in the main-table aggregation; Appendix~\ref{tab:si_exp2_per_task} reports per-task means with seed-level SEMs, so the aggregate SEM and per-task SEMs summarize different sources of variation.}
\label{tab:exp2_mode_discovery}
\resizebox{0.86\linewidth}{!}{%
\begin{tabular}{lcccc}
\toprule
Method & Mode F1 $\uparrow$ & ARI $\uparrow$ & Change-point F1 $\uparrow$ & Segment purity $\uparrow$ \\
\midrule
\method{} & \textbf{0.724$\pm$0.026} & \textbf{0.647$\pm$0.019} & \textbf{0.652$\pm$0.019} & \textbf{0.919$\pm$0.007} \\
no-support & 0.612$\pm$0.014 & 0.465$\pm$0.018 & 0.433$\pm$0.018 & 0.915$\pm$0.005 \\
HMM & 0.675$\pm$0.033 & 0.490$\pm$0.032 & 0.527$\pm$0.028 & 0.890$\pm$0.005 \\
no-mode & 0.221$\pm$0.002 & 0.000$\pm$0.000 & 0.000$\pm$0.000 & 0.496$\pm$0.006 \\
\bottomrule
\end{tabular}}
\end{table}

\paragraph{Experiment 3: sparse physical-law recovery.}
\label{sec:exp3_sparse_ph}

We next test a controlled identifiability question: given sufficiently
mode-pure filtered states, can the sparse port-Hamiltonian library recover the
active physical terms? Because the manipulation demonstration datasets do not provide ground-truth port-Hamiltonian coefficients, Experiment~3 uses a known-equation audit over
four controlled hybrid systems. It is not an additional manipulation-demonstration dataset and is not evidence of broad real-robot system identification. The candidate library
has at most \(p=12\) terms per system and the true active support has size at
most \(k=3\). We compare \method{} with no-mode, no-sparsity, and no-pH
ablations to separate accurate prediction from accurate mechanism recovery.

All sparse-regression variants use the same candidate library, normalization,
state and derivative estimates, validation split, and regularization grid. No
method is given the true active support. Thus Table~\ref{tab:exp3_sparse_ph_main}
tests whether mode-conditioned segments make the shared sparse-recovery problem
easier, not whether \method{} receives more favorable sparse-regression tuning.

\begin{table}[H]
\centering
\small
\caption{\textbf{Experiment 3: sparse physical-law recovery.}
Values are mean$\pm$standard deviation over the four known-equation systems,
twenty seeds per system, and the fixed perturbation conditions described in
Appendix~\ref{app:exp3_controlled_sparse_ph}. Dashes denote metrics that are
undefined because the no-pH baseline has no port-Hamiltonian coefficient
vector.}
\label{tab:exp3_sparse_ph_main}
\resizebox{\linewidth}{!}{%
\begin{tabular}{lcccc}
\toprule
Method & Support F1 $\uparrow$ & Rel. coeff. error $\downarrow$ & Vector-field NRMSE $\downarrow$ & Phys.-constant error $\downarrow$ \\
\midrule
\method{}-full & \textbf{$0.978\pm0.042$} & \textbf{$0.006\pm0.010$} & \textbf{$0.001\pm0.001$} & \textbf{$0.001\pm0.001$} \\
no-mode & $0.713\pm0.039$ & $1.087\pm0.330$ & $0.287\pm0.196$ & $0.188\pm0.234$ \\
no-sparsity & $0.771\pm0.073$ & $0.851\pm0.559$ & $0.043\pm0.043$ & $0.014\pm0.008$ \\
no-pH & -- & -- & $0.031\pm0.015$ & -- \\
\bottomrule
\end{tabular}}
\end{table}

Table~\ref{tab:exp3_sparse_ph_main} reports the controlled known-equation audit;
predictive baselines do not provide the same sparse mechanism recovery.

\section{Conclusion and Limitations}

Amortized variational filters can catastrophically delete feasible contact
branches under partial observation. \method{} prevents this by using a
model-relative defensive transition carrier with a pre-sampling variance budget.
This support repair stabilizes filtering, and the stabilized filter yields
coherent predictive modes that can support downstream sparse-law recovery.
Technically, the defensive mixture ensures \(P_t \ll Q_{t,\lambda}\) and bounds
the transition-to-proposal density ratio for transition sets retained by the
feasible carrier \(P_t\). Empirically, the paper tests the three links of this
mechanism chain: high-occlusion IW/FIVO diagnostics, proxy-label
contact-regime segmentation, and known-equation sparse-law recovery.

\paragraph{Limitations.}
The guarantees are model-relative: if the carrier \(P_t\) omits a real physical branch, \method{} cannot recover it. The mode set and sparse library are fixed in the present experiments. The action-cancellation assumption in Theorem~\ref{thm:mode_recovery_vhydro} matches logged interventions or mode-independent behavior policies; mode-dependent closed-loop policies require a different likelihood-ratio statement. The experiments are matched mechanism audits, not claims of closed-loop task-success dominance, external SOTA ranking, simulator contact-force ground-truth recovery, real-robot transfer, or real-robot sparse-law identification. A natural follow-up is closed-loop MPC with a held-out predictive KL or calibration certificate, which would let recovered mode-wise laws support chance-constrained planning rather than only offline mechanism audits.

\bibliographystyle{plainnat}
\bibliography{mybib}

\appendix

\providecommand{\method}{VHyDRO}
\providecommand{\sihighlight}[1]{\textbf{#1}}
\providecommand{\figdir}{figures}
\providecommand{\tabdir}{tables}

\section{Supplementary material for experiments}
\label{app:experiments_1_2}

This appendix makes the measured experimental claims auditable.
Experiment~1 reports the full occlusion sweep, certificate plots, calibration diagnostics, and hard-occlusion tables.
Experiment~2 reports proxy-label construction, validation-only hyperparameter selection, per-task mode metrics, and Sawyer/BridgeData diagnostics.
Experiment~3 reports sparse-law recovery diagnostics.

\paragraph{Seed and data convention.}
All experiment-level summaries use \(20\) independent random seeds. No table
is computed from fewer than \(20\) seeds, a seed-pruned subset, or a
subsampled task family. The word ``sample'' in algorithms and proofs refers to
internal Monte Carlo particle draws from the IW/FIVO proposal, not to dataset
subsampling. Proxy labels in Experiment~2 and the Sawyer/BridgeData diagnostic
are deterministic functions of recorded state/action fields and are used only
for validation diagnostics and held-out evaluation.

The supplementary proofs give the full derivations for
Theorems~\ref{thm:support_safe_budgeted_vhydro}--\ref{thm:sparse_ph_recovery},
which formalize the measured filtering, mode-concentration, and sparse-recovery
links.

\subsection{Experiment 1: support-safe filtering audit}
\label{app:exp1_supportsafe_audit}

\paragraph{Protocol.}
All methods use the same architecture family, latent dimension, training/evaluation budget, number of particles, occlusion levels $\{0,0.25,0.50,0.75,0.90\}$, and the same \(20\) random seeds. The variants differ only in the proposal-support mechanism: adaptive support mixing, $\lambda=0$, high fixed $\lambda$, and a smooth single-mode latent baseline. We report ESS/N, relative weight variance, empirical estimator relative variance, NLL, ECE, and 90\% coverage at every occlusion level. Bold table entries mark the best value within each occlusion or task block.

\paragraph{Why the comparisons are mechanistic.}
Table~\ref{tab:si_comparison_scope} summarizes what each comparison removes.
The goal is to isolate the role of support preservation, hybrid modes, and port-Hamiltonian structure, not to claim broad task-success dominance over all latent-dynamics model families.

\paragraph{Mechanism-isolation controls.}
External latent-dynamics and contact-dynamics model families differ in
architecture, feature extraction, filtering objective, and training protocol.
For that reason, the main comparisons use matched ablations that change one
mechanism at a time while holding the remaining implementation fixed. This
design isolates proposal-support repair; it should not be read as a broad
architecture-ranking benchmark.

\begin{table}[h]
\centering
\small
\caption{\textbf{Mechanistic comparison scope.}
Each comparison removes one component of the proposed chain.}
\label{tab:si_comparison_scope}
\begin{tabular}{lll}
\toprule
Comparison & Removed mechanism & Tested claim \\
\midrule
\(\lambda=0\) & Support preservation & Support mismatch causes IW/FIVO instability \\
High-\(\lambda\) & Proposal sharpness & Conservative support mass gives variance margin \\
Smooth latent & Hybrid modes and support mixing & A single smooth state cannot preserve contact branches \\
No support & Defensive proposal & Modes alone are insufficient under support mismatch \\
No mode & Discrete contact state & Predictive regime segmentation uses a hybrid state \\
No sparsity & Sparse support selection & Dense pH fits obscure active laws \\
No-pH & Interpretable pH coefficients & Prediction is not mechanism recovery \\
HMM & Predictive latent dynamics & Post-hoc segmentation is not a predictive filtering model. \\
\bottomrule
\end{tabular}
\end{table}

\begin{table}[t]
\centering
\small
\caption{\textbf{Claim boundary.} The paper evaluates intermediate mechanisms, not broad task-success dominance.}
\label{tab:claim_boundary}
\resizebox{\linewidth}{!}{
\begin{tabular}{lll}
\toprule
Claim tested & Evidence reported & Not claimed \\
\midrule
Support-safe filtering controls one-step IW/FIVO weight diagnostics under
occlusion & ESS/N, relative variance, NLL, coverage audits & General calibration dominance \\
Hybrid state forms coherent contact-regime segments & ARI, change-point F1, Mode F1 vs. kinematic proxy labels & Simulator contact-force ground truth \\
Mode-pure segments enable sparse-law recovery & Controlled systems with known sparse laws & Real-robot system identification \\
\bottomrule
\end{tabular}}
\end{table}

\paragraph{Certificate diagnostic.}
Figure~\ref{fig:si_exp1_theorem_certificate} plots empirical estimator relative variance against the certified support-safe bound. The high-$\lambda$ proposal stays furthest below the diagonal, giving the most conservative certificate. The adaptive proposal tracks the bound more tightly, using less defensive mass while preserving low likelihood error in the main paper. This is the expected tradeoff in Theorem~\ref{thm:support_safe_budgeted_vhydro}: larger support mass buys variance margin; adaptive support mass buys sharper filtering.

\begin{figure}[t]
\centering
\begin{subfigure}[t]{0.49\linewidth}
    \centering
    \includegraphics[width=\linewidth]{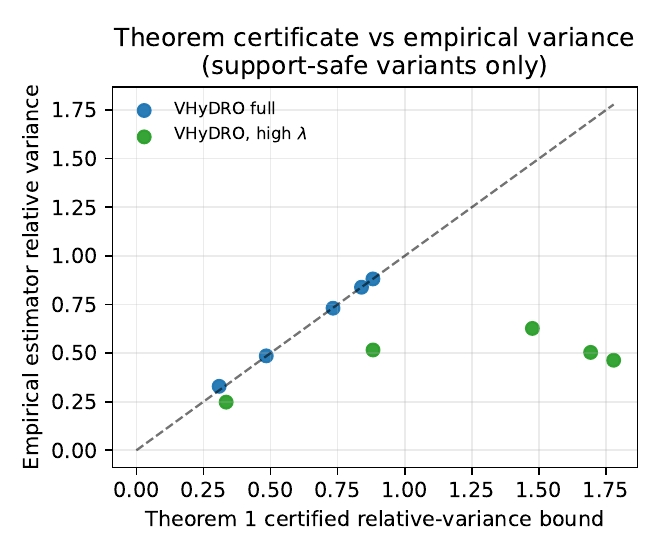}
    \caption{Contact-style family.}
\end{subfigure}\hfill
\begin{subfigure}[t]{0.49\linewidth}
    \centering
    \includegraphics[width=\linewidth]{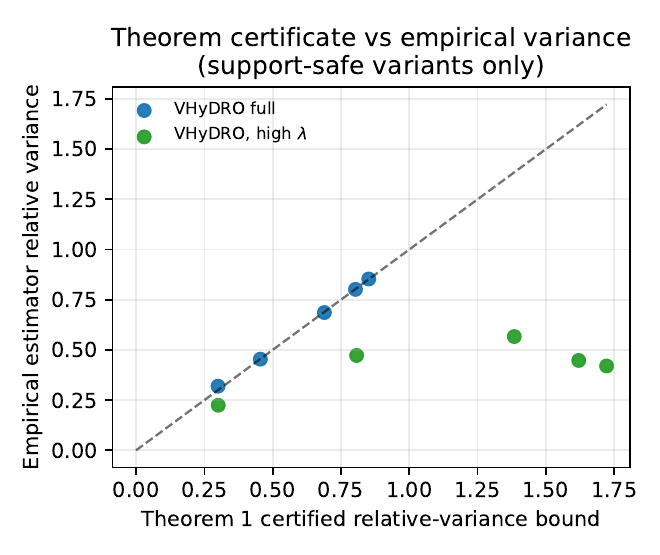}
    \caption{Sawyer/BridgeData task family.}
\end{subfigure}
\caption{\textbf{Support-safe variance certificate.} Points below the diagonal satisfy the empirical certificate margin. The high-$\lambda$ variant is deliberately conservative; the adaptive variant stays closer to the certified frontier.}
\label{fig:si_exp1_theorem_certificate}
\end{figure}

\paragraph{Full occlusion diagnostics.}
Figure~\ref{fig:si_exp1_full_occlusion_diagnostics} exposes where support mismatch enters. At easy occlusion, non-defensive proposals can be competitive. At high occlusion, the IW/FIVO log-normalizer variance and calibration curves separate sharply: support-safe variants retain usable coverage, while $\lambda=0$ and the smooth latent baseline lose calibration as the hidden contact branch becomes ambiguous.

\paragraph{Low-occlusion calibration audit.}
The adaptive support-safe variant is not uniformly calibrated across the sweep.
At 0--25\% occlusion, it under-covers: Cov90 is \(0.162\)--\(0.222\) on the Contact-style family and \(0.198\)--\(0.266\) on the Sawyer/BridgeData task family.
This failure occurs in the easy-observation regime where the adaptive proposal remains sharp. The high-\(\lambda\) variant preserves broader uncertainty and gives higher coverage, but at the cost of a less targeted proposal.
Accordingly, the paper claims robustness to support ambiguity at high occlusion, not uniform calibration dominance.

\begin{figure}[p]
\centering
\begin{subfigure}[t]{0.49\linewidth}
    \includegraphics[width=\linewidth]{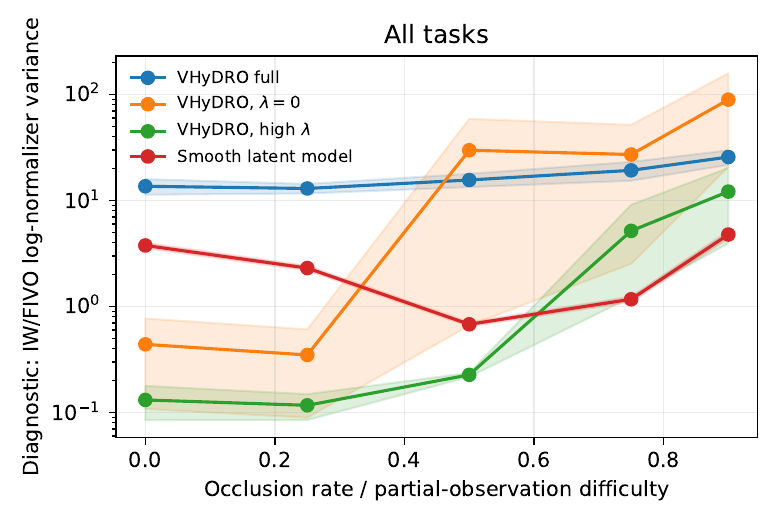}
    \caption{Contact-style: IW log-normalizer variance.}
\end{subfigure}\hfill
\begin{subfigure}[t]{0.49\linewidth}
    \includegraphics[width=\linewidth]{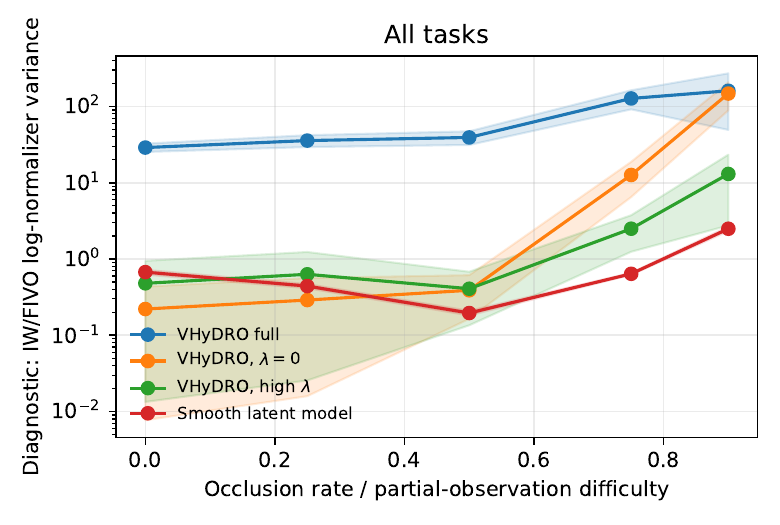}
    \caption{Sawyer/BridgeData task families: IW log-normalizer variance.}
\end{subfigure}
\begin{subfigure}[t]{0.49\linewidth}
    \includegraphics[width=\linewidth]{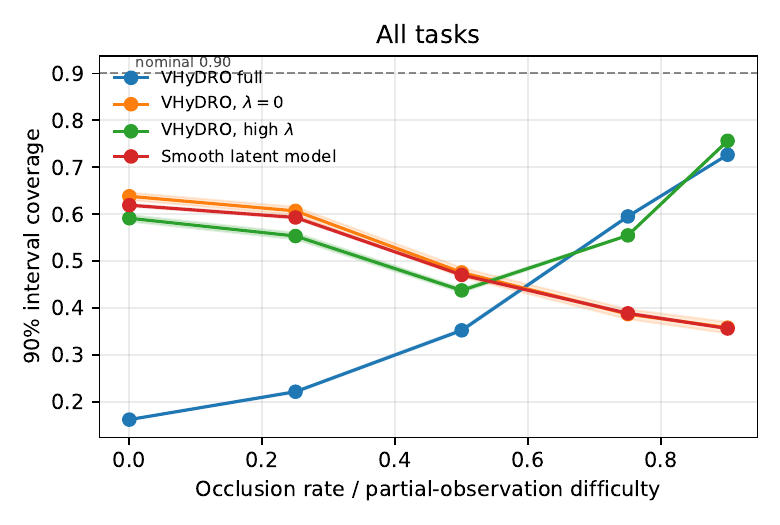}
    \caption{Contact-style: 90\% coverage.}
\end{subfigure}\hfill
\begin{subfigure}[t]{0.49\linewidth}
    \includegraphics[width=\linewidth]{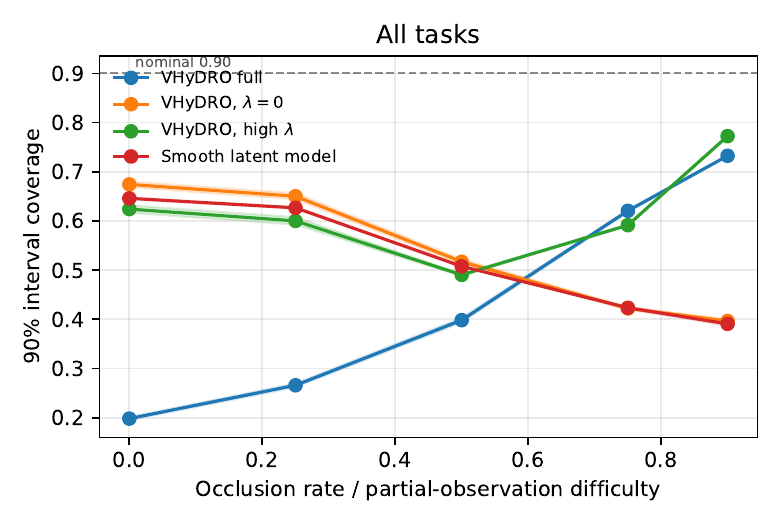}
    \caption{Sawyer/BridgeData task families: 90\% coverage.}
\end{subfigure}
\begin{subfigure}[t]{0.49\linewidth}
    \includegraphics[width=\linewidth]{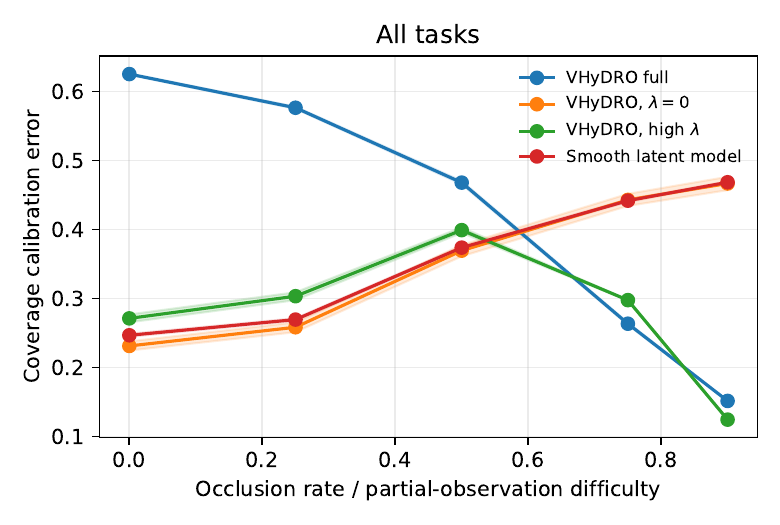}
    \caption{Contact-style: ECE.}
\end{subfigure}\hfill
\begin{subfigure}[t]{0.49\linewidth}
    \includegraphics[width=\linewidth]{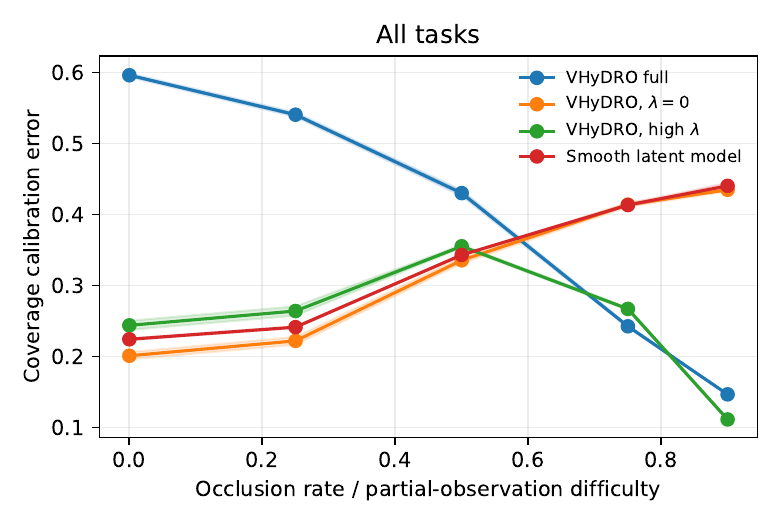}
    \caption{Sawyer/BridgeData task families: ECE.}
\end{subfigure}
\caption{\textbf{Full occlusion diagnostics.} The informative regime is the high-occlusion end of the sweep. Support-safe proposals keep uncertainty usable where non-defensive and smooth proposals under-cover and incur unstable normalizer estimates.}
\label{fig:si_exp1_full_occlusion_diagnostics}
\end{figure}

\paragraph{\(\lambda\)-selection audit.}
The adaptive setting chooses \(\lambda\) before sampling and uses the same
realized \(Q_{t,\lambda}\) in the importance weight. This matches the
condition in Theorem~\ref{thm:support_safe_budgeted_vhydro} and avoids
post-hoc adaptation of the proposal density. When the requested budget is not
certified by the validation-frozen rule, the method uses the fixed fallback
\(\lambda_{\rm fb}\) from the filtering step; such cases are treated as
uncertified operating points. The high-\(\lambda\) variant is included only to
show the variance margin obtained when the support budget is made deliberately
conservative.

\paragraph{Full numeric audit.}
Tables~\ref{tab:si_exp1_contact_all_occlusions}--\ref{tab:si_exp1_sawyer_bridge_hard_by_task} give the complete sweep and per-task hard-occlusion results. The pattern is deliberately not claimed as uniform dominance at low occlusion. The evidence appears when observations hide the contact branch: at 90\% occlusion, support-safe filtering increases ESS/N, reduces empirical relative variance, and prevents the NLL and coverage collapse seen for $\lambda=0$ and the smooth latent baseline.

\begin{table}[p]
\centering
\scriptsize
\caption{\textbf{Experiment 1A: contact-style occlusion sweep.} Full occlusion sweep for the contact-style family. Bold entries mark the best value within each occlusion block and metric. Means$\pm$SEM over twenty seeds.}
\label{tab:si_exp1_contact_all_occlusions}
\resizebox{\linewidth}{!}{%
\begin{tabular}{llcccccc}
\toprule
Occ. & Method & ESS/N $\uparrow$ & Rel. w-var $\downarrow$ & Emp. rel-var $\downarrow$ & NLL $\downarrow$ & ECE $\downarrow$ & Cov90 $\uparrow$ \\
\midrule
0.00 & \method{} full & 0.018$\pm$0.000 & 56.407$\pm$0.060 & 0.881$\pm$0.001 & 1.833$\pm$0.119 & 0.625$\pm$0.001 & 0.162$\pm$0.001 \\
0.00 & \method{}, high $\lambda$ & 0.046$\pm$0.001 & 29.642$\pm$0.544 & 0.463$\pm$0.009 & \sihighlight{0.071$\pm$0.014} & 0.271$\pm$0.006 & 0.591$\pm$0.007 \\
0.00 & \method{}, $\lambda=0$ & \sihighlight{0.057$\pm$0.001} & \sihighlight{25.858$\pm$0.594} & \sihighlight{0.404$\pm$0.009} & 0.098$\pm$0.066 & \sihighlight{0.231$\pm$0.007} & \sihighlight{0.638$\pm$0.008} \\
0.00 & Smooth latent & 0.057$\pm$0.001 & 26.808$\pm$0.240 & 0.419$\pm$0.004 & 1.193$\pm$0.064 & 0.247$\pm$0.002 & 0.619$\pm$0.003 \\
0.25 & \method{} full & 0.020$\pm$0.000 & 53.675$\pm$0.088 & 0.839$\pm$0.001 & 1.427$\pm$0.098 & 0.577$\pm$0.002 & 0.222$\pm$0.002 \\
0.25 & \method{}, high $\lambda$ & 0.042$\pm$0.001 & 32.223$\pm$0.557 & 0.503$\pm$0.009 & \sihighlight{0.213$\pm$0.013} & 0.303$\pm$0.006 & 0.553$\pm$0.007 \\
0.25 & \method{}, $\lambda=0$ & 0.054$\pm$0.001 & \sihighlight{27.275$\pm$0.658} & \sihighlight{0.426$\pm$0.010} & 0.275$\pm$0.070 & \sihighlight{0.258$\pm$0.007} & \sihighlight{0.607$\pm$0.009} \\
0.25 & Smooth latent & \sihighlight{0.054$\pm$0.001} & 27.986$\pm$0.264 & 0.437$\pm$0.004 & 0.999$\pm$0.039 & 0.270$\pm$0.002 & 0.593$\pm$0.002 \\
0.50 & \method{} full & 0.025$\pm$0.000 & 46.771$\pm$0.167 & 0.731$\pm$0.003 & 0.936$\pm$0.068 & 0.468$\pm$0.002 & 0.352$\pm$0.003 \\
0.50 & \method{}, high $\lambda$ & 0.032$\pm$0.001 & 40.113$\pm$0.426 & 0.627$\pm$0.007 & \sihighlight{0.551$\pm$0.014} & 0.399$\pm$0.004 & 0.437$\pm$0.005 \\
0.50 & \method{}, $\lambda=0$ & 0.045$\pm$0.001 & 31.287$\pm$0.752 & 0.489$\pm$0.012 & 1.103$\pm$0.062 & \sihighlight{0.369$\pm$0.008} & \sihighlight{0.476$\pm$0.010} \\
0.50 & Smooth latent & \sihighlight{0.047$\pm$0.000} & \sihighlight{30.804$\pm$0.259} & \sihighlight{0.481$\pm$0.004} & 1.434$\pm$0.023 & 0.374$\pm$0.002 & 0.470$\pm$0.002 \\
0.75 & \method{} full & 0.044$\pm$0.000 & 31.086$\pm$0.142 & 0.486$\pm$0.002 & \sihighlight{0.377$\pm$0.027} & \sihighlight{0.264$\pm$0.002} & \sihighlight{0.595$\pm$0.002} \\
0.75 & \method{}, high $\lambda$ & \sihighlight{0.056$\pm$0.000} & 33.022$\pm$0.058 & 0.516$\pm$0.001 & 0.410$\pm$0.020 & 0.298$\pm$0.001 & 0.555$\pm$0.001 \\
0.75 & \method{}, $\lambda=0$ & 0.042$\pm$0.002 & 33.068$\pm$0.957 & 0.517$\pm$0.015 & 4.299$\pm$0.069 & 0.443$\pm$0.009 & 0.387$\pm$0.011 \\
0.75 & Smooth latent & 0.046$\pm$0.000 & \sihighlight{31.015$\pm$0.211} & \sihighlight{0.485$\pm$0.003} & 4.016$\pm$0.030 & 0.442$\pm$0.002 & 0.389$\pm$0.002 \\
0.90 & \method{} full & 0.059$\pm$0.000 & 21.057$\pm$0.020 & 0.329$\pm$0.000 & 0.186$\pm$0.019 & 0.152$\pm$0.001 & 0.727$\pm$0.001 \\
0.90 & \method{}, high $\lambda$ & \sihighlight{0.141$\pm$0.001} & \sihighlight{15.851$\pm$0.107} & \sihighlight{0.248$\pm$0.002} & \sihighlight{0.161$\pm$0.023} & \sihighlight{0.125$\pm$0.001} & \sihighlight{0.756$\pm$0.001} \\
0.90 & \method{}, $\lambda=0$ & 0.042$\pm$0.002 & 33.195$\pm$1.128 & 0.519$\pm$0.018 & 11.286$\pm$0.151 & 0.467$\pm$0.010 & 0.358$\pm$0.012 \\
0.90 & Smooth latent & 0.047$\pm$0.001 & 30.366$\pm$0.290 & 0.474$\pm$0.005 & 9.660$\pm$0.098 & 0.469$\pm$0.001 & 0.356$\pm$0.001 \\
\bottomrule
\end{tabular}}
\end{table}

\begin{table}[p]
\centering
\scriptsize
\caption{\textbf{Experiment 1B: Sawyer/Bridge occlusion sweep.} Full occlusion sweep for the Sawyer/Bridge family. Bold entries mark the best value within each occlusion block and metric. Means$\pm$SEM over twenty seeds.}
\label{tab:si_exp1_sawyer_bridge_all_occlusions}
\resizebox{\linewidth}{!}{%
\begin{tabular}{llcccccc}
\toprule
Occ. & Method & ESS/N $\uparrow$ & Rel. w-var $\downarrow$ & Emp. rel-var $\downarrow$ & NLL $\downarrow$ & ECE $\downarrow$ & Cov90 $\uparrow$ \\
\midrule
0.00 & \method{} full & 0.019$\pm$0.000 & 54.593$\pm$0.200 & 0.853$\pm$0.003 & 1.945$\pm$0.065 & 0.596$\pm$0.003 & 0.198$\pm$0.003 \\
0.00 & \method{}, high $\lambda$ & 0.052$\pm$0.001 & 26.870$\pm$0.594 & 0.420$\pm$0.009 & -0.086$\pm$0.028 & 0.244$\pm$0.006 & 0.624$\pm$0.008 \\
0.00 & \method{}, $\lambda=0$ & \sihighlight{0.067$\pm$0.002} & \sihighlight{22.779$\pm$0.494} & \sihighlight{0.356$\pm$0.008} & \sihighlight{-0.115$\pm$0.017} & \sihighlight{0.201$\pm$0.005} & \sihighlight{0.674$\pm$0.006} \\
0.00 & Smooth latent & 0.065$\pm$0.000 & 24.502$\pm$0.143 & 0.383$\pm$0.002 & 0.328$\pm$0.028 & 0.224$\pm$0.001 & 0.646$\pm$0.002 \\
0.25 & \method{} full & 0.022$\pm$0.000 & 51.286$\pm$0.282 & 0.801$\pm$0.004 & 1.595$\pm$0.064 & 0.540$\pm$0.003 & 0.266$\pm$0.004 \\
0.25 & \method{}, high $\lambda$ & 0.049$\pm$0.001 & 28.631$\pm$0.626 & 0.447$\pm$0.010 & 0.026$\pm$0.028 & 0.264$\pm$0.007 & 0.600$\pm$0.008 \\
0.25 & \method{}, $\lambda=0$ & \sihighlight{0.064$\pm$0.002} & \sihighlight{23.852$\pm$0.547} & \sihighlight{0.373$\pm$0.009} & \sihighlight{0.013$\pm$0.024} & \sihighlight{0.222$\pm$0.005} & \sihighlight{0.650$\pm$0.006} \\
0.25 & Smooth latent & 0.062$\pm$0.000 & 25.418$\pm$0.136 & 0.397$\pm$0.002 & 0.370$\pm$0.025 & 0.241$\pm$0.001 & 0.627$\pm$0.001 \\
0.50 & \method{} full & 0.028$\pm$0.000 & 43.903$\pm$0.319 & 0.686$\pm$0.005 & 1.067$\pm$0.052 & 0.430$\pm$0.004 & 0.398$\pm$0.005 \\
0.50 & \method{}, high $\lambda$ & 0.037$\pm$0.001 & 36.249$\pm$0.389 & 0.566$\pm$0.006 & \sihighlight{0.354$\pm$0.029} & 0.355$\pm$0.002 & 0.490$\pm$0.003 \\
0.50 & \method{}, $\lambda=0$ & 0.051$\pm$0.001 & \sihighlight{28.336$\pm$0.535} & \sihighlight{0.443$\pm$0.008} & 0.707$\pm$0.029 & \sihighlight{0.335$\pm$0.004} & \sihighlight{0.517$\pm$0.005} \\
0.50 & Smooth latent & \sihighlight{0.053$\pm$0.000} & 28.455$\pm$0.161 & 0.445$\pm$0.003 & 0.949$\pm$0.019 & 0.343$\pm$0.001 & 0.508$\pm$0.001 \\
0.75 & \method{} full & 0.047$\pm$0.000 & 29.046$\pm$0.233 & 0.454$\pm$0.004 & 0.646$\pm$0.123 & \sihighlight{0.242$\pm$0.002} & \sihighlight{0.621$\pm$0.002} \\
0.75 & \method{}, high $\lambda$ & \sihighlight{0.062$\pm$0.001} & 30.229$\pm$0.163 & 0.472$\pm$0.003 & \sihighlight{0.309$\pm$0.020} & 0.267$\pm$0.002 & 0.591$\pm$0.002 \\
0.75 & \method{}, $\lambda=0$ & 0.047$\pm$0.001 & 30.110$\pm$0.511 & 0.470$\pm$0.008 & 3.426$\pm$0.054 & 0.414$\pm$0.003 & 0.422$\pm$0.003 \\
0.75 & Smooth latent & 0.051$\pm$0.000 & \sihighlight{28.715$\pm$0.037} & \sihighlight{0.449$\pm$0.001} & 3.246$\pm$0.027 & 0.413$\pm$0.002 & 0.423$\pm$0.002 \\
0.90 & \method{} full & 0.060$\pm$0.000 & 20.432$\pm$0.043 & 0.319$\pm$0.001 & 0.369$\pm$0.097 & 0.147$\pm$0.000 & 0.732$\pm$0.001 \\
0.90 & \method{}, high $\lambda$ & \sihighlight{0.151$\pm$0.002} & \sihighlight{14.355$\pm$0.071} & \sihighlight{0.224$\pm$0.001} & \sihighlight{0.074$\pm$0.019} & \sihighlight{0.111$\pm$0.001} & \sihighlight{0.772$\pm$0.001} \\
0.90 & \method{}, $\lambda=0$ & 0.046$\pm$0.001 & 30.205$\pm$0.325 & 0.472$\pm$0.005 & 9.401$\pm$0.067 & 0.435$\pm$0.002 & 0.397$\pm$0.003 \\
0.90 & Smooth latent & 0.052$\pm$0.000 & 28.050$\pm$0.120 & 0.438$\pm$0.002 & 8.095$\pm$0.016 & 0.440$\pm$0.004 & 0.391$\pm$0.004 \\
\bottomrule
\end{tabular}}
\end{table}

\begin{table}[p]
\centering
\scriptsize
\caption{\textbf{Experiment 1A: contact-style hard-occlusion audit.} Per-task hard-occlusion results for the contact-style family at 90\% occlusion. Bold entries mark the best value within each task block and metric. Means over twenty seeds.}
\label{tab:si_exp1_contact_hard_by_task}
\resizebox{\linewidth}{!}{%
\begin{tabular}{llrrrrrr}
\toprule
Task & Method & ESS/N $\uparrow$ & Rel. w-var $\downarrow$ & Emp. rel-var $\downarrow$ & NLL $\downarrow$ & ECE $\downarrow$ & Cov90 $\uparrow$ \\
\midrule
PickCube-v1 & \method{} full & 0.059 & 21.140 & 0.330 & \sihighlight{0.162} & 0.154 & 0.724 \\
PickCube-v1 & \method{}, high $\lambda$ & \sihighlight{0.141} & \sihighlight{15.883} & \sihighlight{0.248} & 0.167 & \sihighlight{0.125} & \sihighlight{0.756} \\
PickCube-v1 & \method{}, $\lambda=0$ & 0.041 & 33.337 & 0.521 & 11.444 & 0.468 & 0.357 \\
PickCube-v1 & Smooth latent & 0.047 & 30.444 & 0.476 & 9.712 & 0.469 & 0.356 \\
PokeCube-v1 & \method{} full & 0.060 & 21.027 & 0.329 & 0.168 & 0.150 & 0.728 \\
PokeCube-v1 & \method{}, high $\lambda$ & \sihighlight{0.141} & \sihighlight{15.881} & \sihighlight{0.248} & \sihighlight{0.148} & \sihighlight{0.125} & \sihighlight{0.756} \\
PokeCube-v1 & \method{}, $\lambda=0$ & 0.042 & 33.164 & 0.518 & 11.189 & 0.467 & 0.359 \\
PokeCube-v1 & Smooth latent & 0.047 & 30.374 & 0.475 & 9.644 & 0.469 & 0.356 \\
PullCube-v1 & \method{} full & 0.059 & 21.087 & 0.329 & \sihighlight{0.158} & 0.152 & 0.727 \\
PullCube-v1 & \method{}, high $\lambda$ & \sihighlight{0.142} & \sihighlight{15.763} & \sihighlight{0.246} & 0.161 & \sihighlight{0.125} & \sihighlight{0.756} \\
PullCube-v1 & \method{}, $\lambda=0$ & 0.042 & 33.179 & 0.518 & 11.262 & 0.466 & 0.358 \\
PullCube-v1 & Smooth latent & 0.047 & 30.345 & 0.474 & 9.620 & 0.469 & 0.356 \\
PushCube-v1 & \method{} full & 0.060 & 20.971 & 0.328 & 0.255 & 0.151 & 0.727 \\
PushCube-v1 & \method{}, high $\lambda$ & \sihighlight{0.141} & \sihighlight{15.874} & \sihighlight{0.248} & \sihighlight{0.167} & \sihighlight{0.124} & \sihighlight{0.758} \\
PushCube-v1 & \method{}, $\lambda=0$ & 0.042 & 33.106 & 0.517 & 11.246 & 0.467 & 0.358 \\
PushCube-v1 & Smooth latent & 0.048 & 30.292 & 0.473 & 9.662 & 0.468 & 0.356 \\
\bottomrule
\end{tabular}}
\end{table}

\begin{table}[p]
\centering
\scriptsize
\caption{\textbf{Experiment 1B: Sawyer/Bridge hard-occlusion audit.} Per-task hard-occlusion results for the Sawyer/Bridge family at 90\% occlusion. Bold entries mark the best value within each task block and metric. Means over twenty seeds.}
\label{tab:si_exp1_sawyer_bridge_hard_by_task}
\resizebox{\linewidth}{!}{%
\begin{tabular}{llrrrrrr}
\toprule
Task & Method & ESS/N $\uparrow$ & Rel. w-var $\downarrow$ & Emp. rel-var $\downarrow$ & NLL $\downarrow$ & ECE $\downarrow$ & Cov90 $\uparrow$ \\
\midrule
BridgeDataOcclusion & \method{} full & 0.058 & 21.814 & 0.341 & 0.351 & 0.156 & 0.719 \\
BridgeDataOcclusion & \method{}, high $\lambda$ & \sihighlight{0.130} & \sihighlight{17.182} & \sihighlight{0.268} & \sihighlight{0.312} & \sihighlight{0.138} & \sihighlight{0.743} \\
BridgeDataOcclusion & \method{}, $\lambda=0$ & 0.046 & 30.752 & 0.481 & 10.950 & 0.448 & 0.381 \\
BridgeDataOcclusion & Smooth latent & 0.051 & 28.568 & 0.446 & 9.765 & 0.452 & 0.376 \\
SawyerLaundryLayout & \method{} full & 0.061 & 20.154 & 0.315 & 0.297 & 0.144 & 0.736 \\
SawyerLaundryLayout & \method{}, high $\lambda$ & \sihighlight{0.152} & \sihighlight{14.085} & \sihighlight{0.220} & \sihighlight{0.050} & \sihighlight{0.107} & \sihighlight{0.777} \\
SawyerLaundryLayout & \method{}, $\lambda=0$ & 0.048 & 29.461 & 0.460 & 8.683 & 0.425 & 0.408 \\
SawyerLaundryLayout & Smooth latent & 0.054 & 27.525 & 0.430 & 7.489 & 0.436 & 0.396 \\
SawyerObjectSearch & \method{} full & 0.060 & 20.577 & 0.322 & 0.308 & 0.145 & 0.734 \\
SawyerObjectSearch & \method{}, high $\lambda$ & \sihighlight{0.143} & \sihighlight{15.027} & \sihighlight{0.235} & \sihighlight{0.109} & \sihighlight{0.116} & \sihighlight{0.766} \\
SawyerObjectSearch & \method{}, $\lambda=0$ & 0.046 & 30.422 & 0.475 & 9.918 & 0.440 & 0.390 \\
SawyerObjectSearch & Smooth latent & 0.052 & 28.425 & 0.444 & 8.752 & 0.447 & 0.383 \\
SawyerTowerCreation & \method{} full & 0.062 & 19.372 & 0.303 & 0.527 & 0.143 & 0.739 \\
SawyerTowerCreation & \method{}, high $\lambda$ & \sihighlight{0.177} & \sihighlight{11.585} & \sihighlight{0.181} & \sihighlight{-0.141} & \sihighlight{0.089} & \sihighlight{0.798} \\
SawyerTowerCreation & \method{}, $\lambda=0$ & 0.046 & 30.244 & 0.473 & 8.253 & 0.429 & 0.404 \\
SawyerTowerCreation & Smooth latent & 0.053 & 27.689 & 0.433 & 6.598 & 0.427 & 0.406 \\
\bottomrule
\end{tabular}}
\end{table}

\FloatBarrier

\subsection{Experiment 2: ManiSkill contact-regime segmentation audit with proxy labels}
\label{app:exp2_maniskill_proxy_audit}

\paragraph{Protocol.}
Experiment~2 uses released ManiSkill HDF5 demonstrations from PushCube-v1, PokeCube-v1, PullCubeTool-v1, and PegInsertionSide-v1. The evaluation labels are denoised kinematic proxies derived from raw HDF5 states and actions. They distinguish free motion, impact-like transitions, and
stick/slip-like contact using smoothed object/end-effector kinematic
changes. They are used for validation diagnostics and held-out evaluation only; the model is not trained with simulator contact-force labels. Hyperparameters are selected on validation episodes, and all reported metrics are computed on held-out test episodes.

\paragraph{Kinematic proxy-label construction.}
For each trajectory, the deterministic proxy uses only recorded state/action
fields and is shared by all methods. Let \(p_t^{\rm obj}\) and
\(p_t^{\rm ee}\) denote object and end-effector positions, and let \(a_t\)
denote the action. We compute normalized finite-difference terms
\[
r_t^{\rm obj}
=
\frac{\|\Delta p_t^{\rm obj}\|_2}
{\operatorname{MAD}(\|\Delta p^{\rm obj}\|_2)+\epsilon},
\qquad
r_t^{\rm ee}
=
\frac{\|\Delta p_t^{\rm ee}\|_2}
{\operatorname{MAD}(\|\Delta p^{\rm ee}\|_2)+\epsilon},
\qquad
r_t^{a}
=
\frac{\|\Delta a_t\|_2}
{\operatorname{MAD}(\|\Delta a\|_2)+\epsilon}.
\]
The scalar score is
\[
c_t
=
\alpha_{\rm obj} r_t^{\rm obj}
+
\alpha_{\rm ee} r_t^{\rm ee}
+
\alpha_a r_t^a .
\]
The weights \(\alpha\), smoothing window, minimum-run denoising length, and
thresholds \(\theta_1<\theta_2\) are selected on validation episodes and then
frozen. After smoothing, labels are assigned as free if \(c_t<\theta_1\),
impact-like if \(c_t\ge\theta_2\), and stick/slip-like otherwise, followed by
the fixed minimum-run denoising. Clipping at the 99.5th percentile is used only
for plotting histograms, not for metric computation. The test split is never
used to choose thresholds. Proxy labels are used only for metric computation
after label-permutation matching; they are not provided to the \method{} mode
posterior during training.

\paragraph{Mode-timeline diagnostic.}
Figure~\ref{fig:si_exp2_timelines} shows representative trajectories. \method{}
produces contiguous mode segments that align with the denoised kinematic proxy
labels. The no-support ablation keeps a discrete state but fragments
transitions more often. The no-mode baseline cannot represent regime switches.
The HMM is a useful segmentation baseline, but it does not share a predictive
latent state with the dynamics model.

\begin{figure}[p]
\centering
\begin{subfigure}[t]{0.49\linewidth}
    \includegraphics[width=\linewidth]{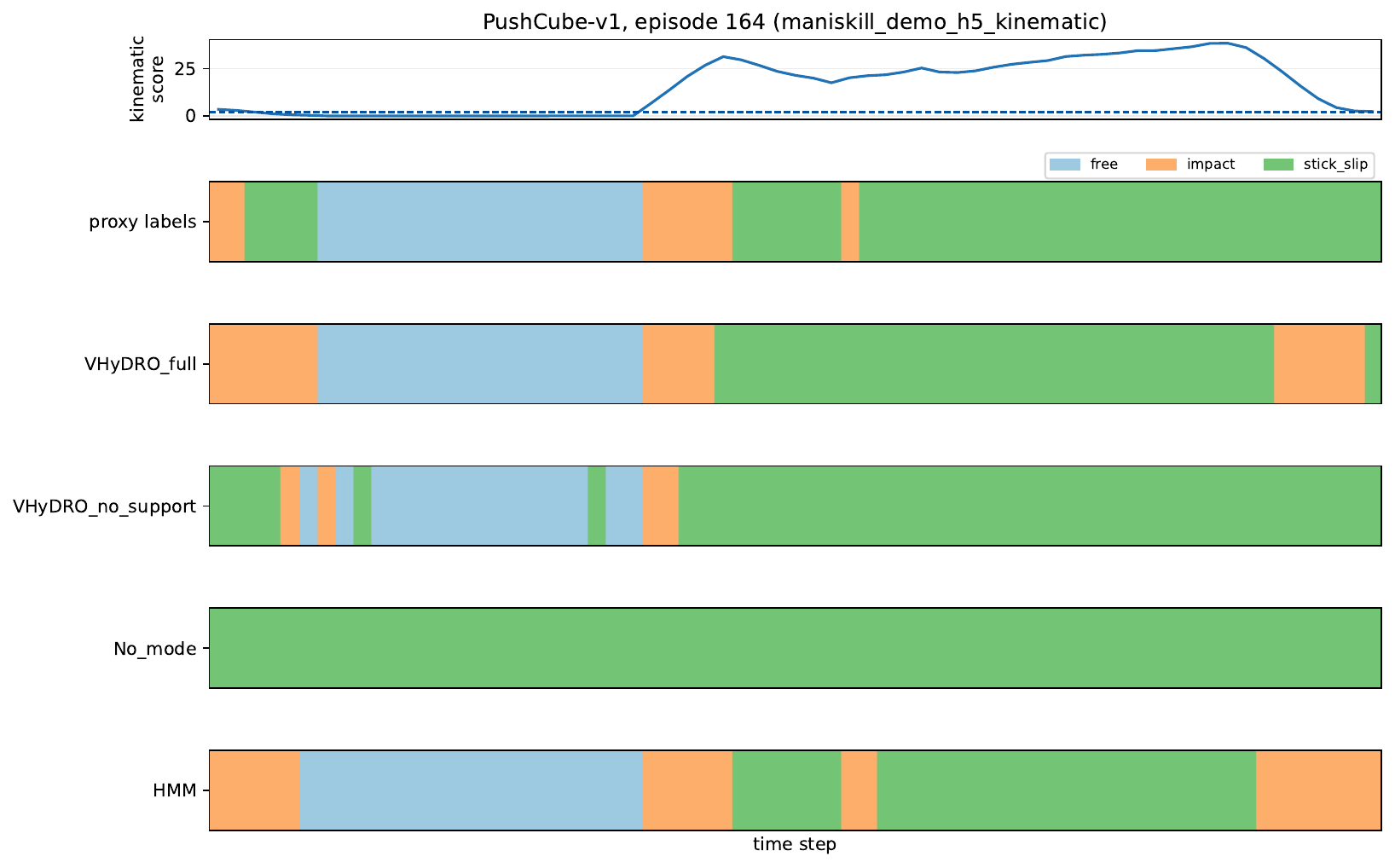}
    \caption{PushCube-v1.}
\end{subfigure}\hfill
\begin{subfigure}[t]{0.49\linewidth}
    \includegraphics[width=\linewidth]{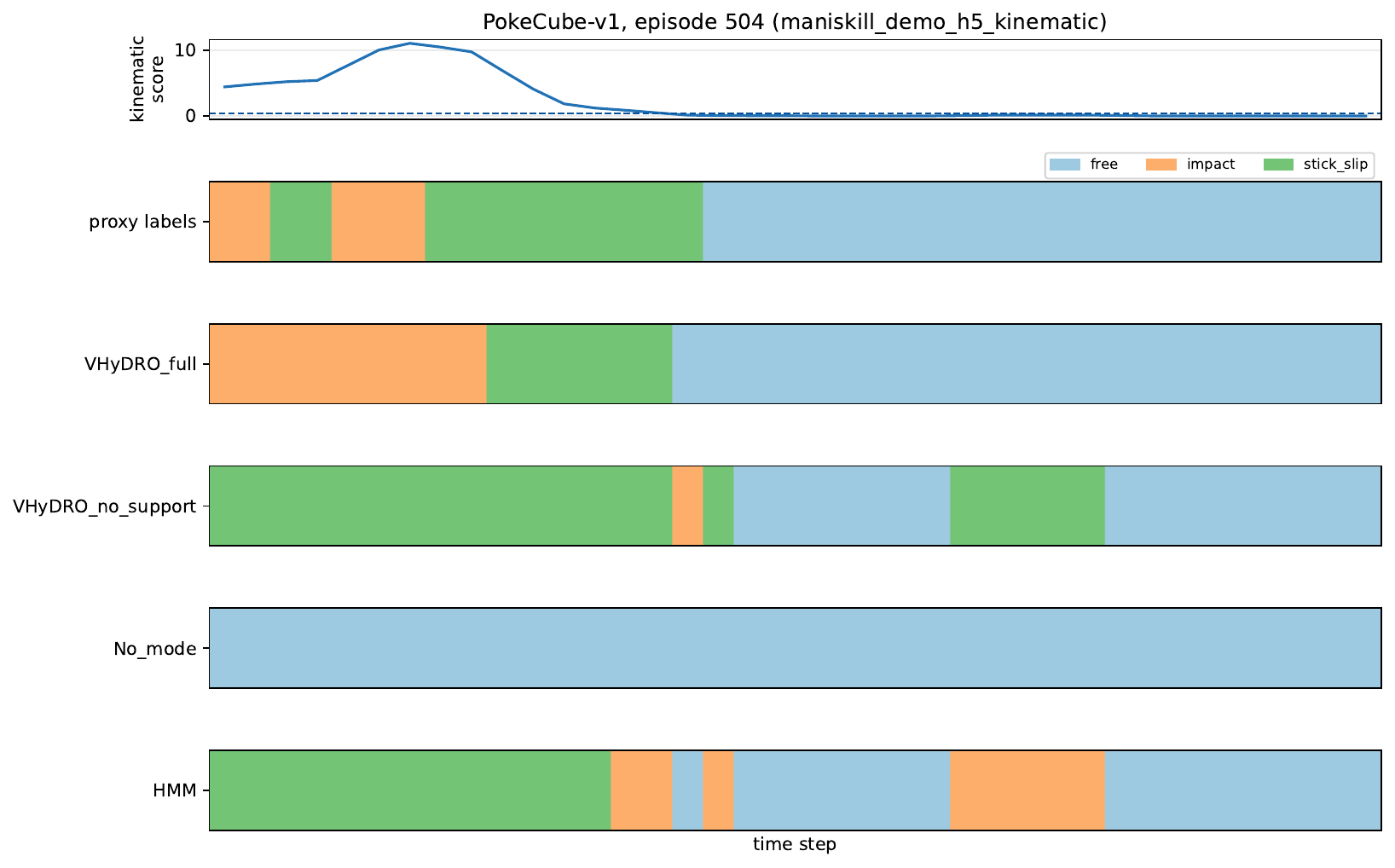}
    \caption{PokeCube-v1.}
\end{subfigure}
\begin{subfigure}[t]{0.49\linewidth}
    \includegraphics[width=\linewidth]{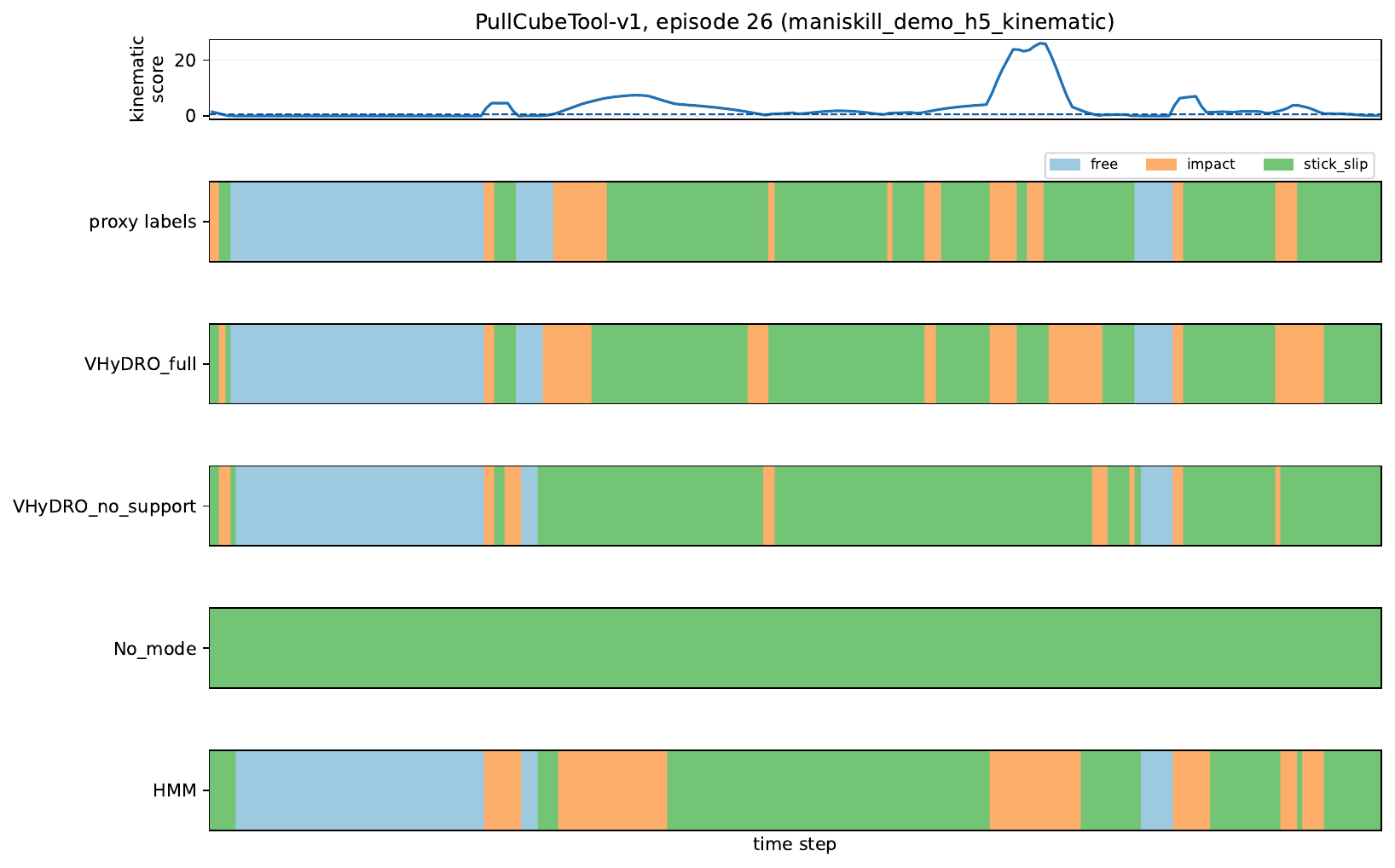}
    \caption{PullCubeTool-v1.}
\end{subfigure}\hfill
\begin{subfigure}[t]{0.49\linewidth}
    \includegraphics[width=\linewidth]{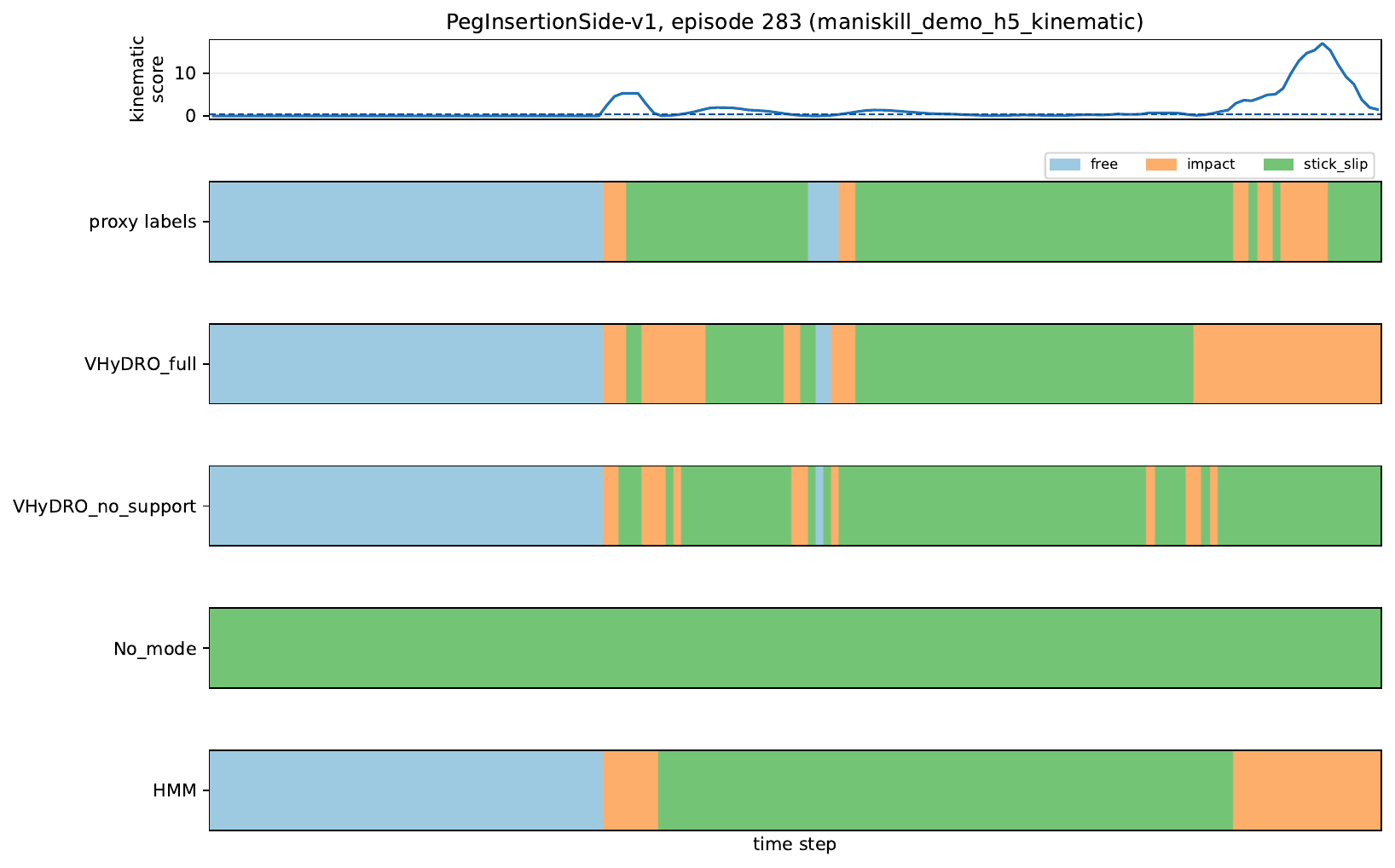}
    \caption{PegInsertionSide-v1.}
\end{subfigure}
\caption{\textbf{Representative held-out mode timelines.} \method{} infers temporally coherent mode segments that align with denoised kinematic proxy labels. Alignment is measured against denoised kinematic proxy labels; no simulator contact-force labels are used.}
\label{fig:si_exp2_timelines}
\end{figure}

\paragraph{Proxy-label diagnostic.}
Figure~\ref{fig:si_exp2_kinematic_proxy_scores} makes the evaluation labels auditable. The score distributions are not used as supervised contact labels for \method{}. They provide a reproducible proxy against which mode F1, ARI, change-point F1, and segment purity can be computed.

\begin{figure}[p]
\centering
\begin{subfigure}[t]{0.49\linewidth}
    \includegraphics[width=\linewidth]{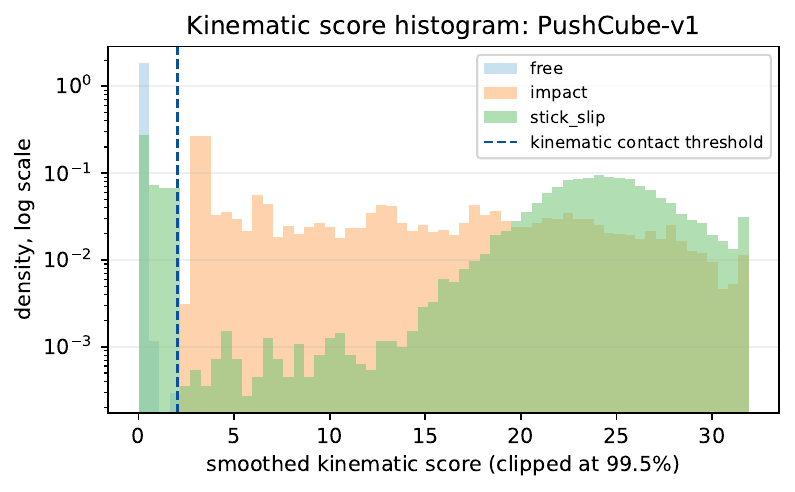}
    \caption{PushCube-v1.}
\end{subfigure}\hfill
\begin{subfigure}[t]{0.49\linewidth}
    \includegraphics[width=\linewidth]{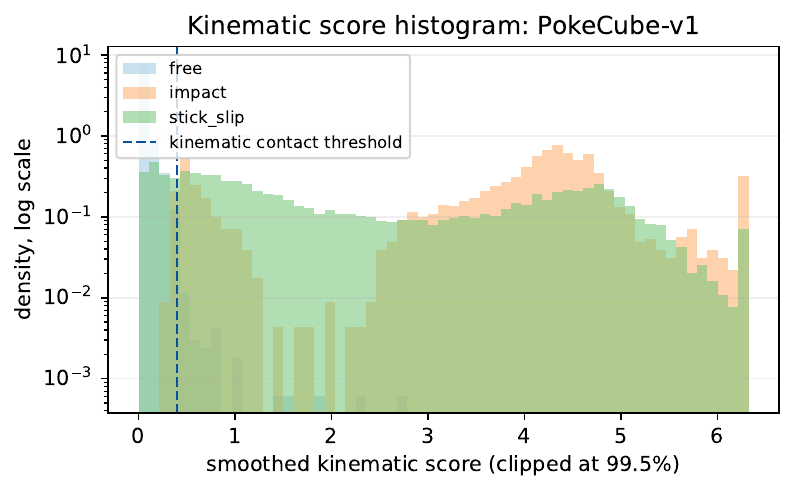}
    \caption{PokeCube-v1.}
\end{subfigure}
\begin{subfigure}[t]{0.49\linewidth}
    \includegraphics[width=\linewidth]{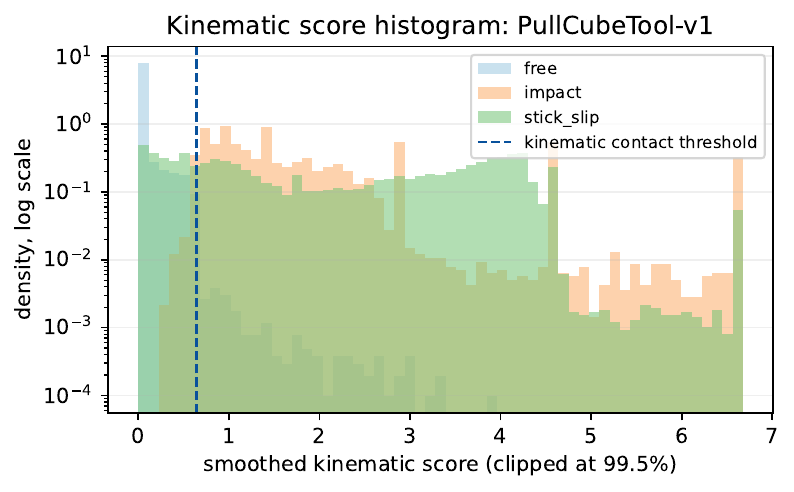}
    \caption{PullCubeTool-v1.}
\end{subfigure}\hfill
\begin{subfigure}[t]{0.49\linewidth}
    \includegraphics[width=\linewidth]{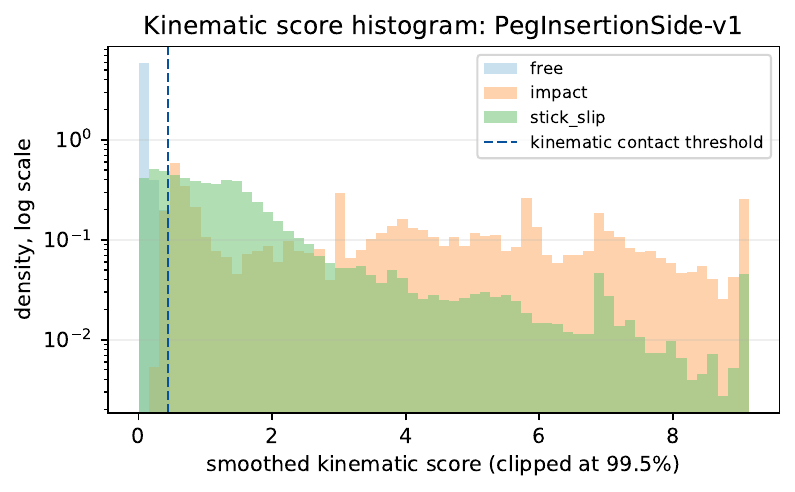}
    \caption{PegInsertionSide-v1.}
\end{subfigure}
\caption{\textbf{Kinematic proxy-label construction.} Histograms of the smoothed kinematic score used to form denoised free/impact/stick-slip proxy labels from HDF5 states and actions. The labels audit mode segmentation; they are not simulator contact-force labels.}
\label{fig:si_exp2_kinematic_proxy_scores}
\end{figure}

\paragraph{Per-task contact-regime segmentation.}
Table~\ref{tab:si_exp2_per_task} reports held-out per-task metrics. \method{} is best on ARI and change-point F1 on every task, and best on purity in three of four tasks. The higher HMM Mode F1 on PushCube-v1, and the higher no-support purity on PokeCube-v1, do not carry over to the full predictive segmentation profile. This distinction matters for Theorem~\ref{thm:mode_recovery_vhydro}: the recovered state must be a coherent predictive mode, not only a post-hoc label sequence.

\begin{table}[p]
\centering
\scriptsize
\caption{\textbf{Experiment~2: per-task proxy-label contact-regime
segmentation.} Held-out test split across ManiSkill HDF5 demonstrations.
Labels are denoised kinematic proxies, not simulator contact-force
labels. Means$\pm$SEM over twenty seeds.}
\label{tab:si_exp2_per_task}
\resizebox{\linewidth}{!}{%
\begin{tabular}{llcccc}
\toprule
Task & Method & Mode F1 $\uparrow$ & ARI $\uparrow$ & Change-point F1 $\uparrow$ & Purity $\uparrow$ \\
\midrule
PushCube-v1 & \method{} & 0.810$\pm$0.004 & \sihighlight{0.714$\pm$0.015} & \sihighlight{0.716$\pm$0.007} & \sihighlight{0.915$\pm$0.002} \\
PushCube-v1 & no support & 0.635$\pm$0.040 & 0.536$\pm$0.023 & 0.371$\pm$0.028 & 0.901$\pm$0.008 \\
PushCube-v1 & HMM & \sihighlight{0.828$\pm$0.008} & 0.647$\pm$0.054 & 0.664$\pm$0.016 & 0.907$\pm$0.013 \\
PushCube-v1 & no mode & 0.218$\pm$0.002 & 0.000$\pm$0.000 & 0.000$\pm$0.000 & 0.485$\pm$0.006 \\
PokeCube-v1 & \method{} & \sihighlight{0.583$\pm$0.006} & \sihighlight{0.599$\pm$0.014} & \sihighlight{0.557$\pm$0.013} & 0.886$\pm$0.001 \\
PokeCube-v1 & no support & 0.558$\pm$0.007 & 0.398$\pm$0.016 & 0.420$\pm$0.029 & \sihighlight{0.914$\pm$0.007} \\
PokeCube-v1 & HMM & 0.523$\pm$0.008 & 0.403$\pm$0.018 & 0.409$\pm$0.006 & 0.897$\pm$0.004 \\
PokeCube-v1 & no mode & 0.229$\pm$0.001 & 0.000$\pm$0.000 & 0.000$\pm$0.000 & 0.524$\pm$0.005 \\
PullCubeTool-v1 & \method{} & \sihighlight{0.770$\pm$0.002} & \sihighlight{0.697$\pm$0.006} & \sihighlight{0.698$\pm$0.002} & \sihighlight{0.942$\pm$0.001} \\
PullCubeTool-v1 & no support & 0.648$\pm$0.006 & 0.495$\pm$0.021 & 0.509$\pm$0.001 & 0.915$\pm$0.008 \\
PullCubeTool-v1 & HMM & 0.672$\pm$0.009 & 0.488$\pm$0.003 & 0.530$\pm$0.022 & 0.876$\pm$0.006 \\
PullCubeTool-v1 & no mode & 0.217$\pm$0.002 & 0.000$\pm$0.000 & 0.000$\pm$0.000 & 0.482$\pm$0.005 \\
PegInsertionSide-v1 & \method{} & \sihighlight{0.732$\pm$0.003} & \sihighlight{0.578$\pm$0.009} & \sihighlight{0.637$\pm$0.002} & \sihighlight{0.934$\pm$0.000} \\
PegInsertionSide-v1 & no support & 0.607$\pm$0.022 & 0.430$\pm$0.012 & 0.431$\pm$0.026 & 0.932$\pm$0.009 \\
PegInsertionSide-v1 & HMM & 0.678$\pm$0.006 & 0.422$\pm$0.013 & 0.504$\pm$0.005 & 0.879$\pm$0.002 \\
PegInsertionSide-v1 & no mode & 0.220$\pm$0.003 & 0.000$\pm$0.000 & 0.000$\pm$0.000 & 0.494$\pm$0.011 \\
\bottomrule
\end{tabular}}
\end{table}

\FloatBarrier

\subsection{Sawyer/BridgeData-style contact-regime diagnostics with proxy labels}
\label{app:exp2_sawyer_bridge_proxy}

We run the Experiment~2 contact-regime segmentation protocol on four
Sawyer/BridgeData task families: BridgeDataOcclusion, SawyerTowerCreation,
SawyerLaundryLayout, and SawyerObjectSearch. The evaluation labels are
denoised kinematic proxy labels derived from trajectory state and action
changes. They separate free motion, impact-like transitions, and sustained
interaction, and are used only for validation diagnostics and held-out
evaluation.

\begin{table}[t]
\centering
\small
\caption{\textbf{Additional Sawyer/BridgeData-style predictive
mode-segmentation diagnostic.} Mean$\pm$SEM over four task families and
twenty seeds. Higher is better. Bold entries mark the quantities
emphasized by the mode-concentration claim.}
\label{tab:si_exp2_rest_aggregate}
\resizebox{0.86\linewidth}{!}{%
\begin{tabular}{lcccc}
\toprule
Method & Mode F1 $\uparrow$ & ARI $\uparrow$ & Change-point F1 $\uparrow$ & Purity $\uparrow$ \\
\midrule
\method{} & \sihighlight{0.708$\pm$0.003} & \sihighlight{0.560$\pm$0.008} & \sihighlight{0.706$\pm$0.004} & 0.918$\pm$0.002 \\
no support & 0.535$\pm$0.009 & 0.171$\pm$0.010 & 0.389$\pm$0.004 & \sihighlight{0.920$\pm$0.003} \\
HMM & 0.470$\pm$0.005 & 0.174$\pm$0.004 & 0.348$\pm$0.003 & 0.882$\pm$0.001 \\
no mode & 0.222$\pm$0.001 & 0.000$\pm$0.000 & 0.000$\pm$0.000 & 0.498$\pm$0.003 \\
\bottomrule
\end{tabular}}
\end{table}

\FloatBarrier

\paragraph{Representative timelines.}
Figure~\ref{fig:si_exp2_rest_timelines} shows representative Sawyer/BridgeData
task timelines. \method{} produces longer coherent segments than the
no-support ablation, which switches rapidly on several trajectories. The
no-mode baseline collapses to one regime and therefore cannot represent contact
changes.

\begin{figure}[p]
\centering
\begin{subfigure}[t]{0.49\linewidth}
    \includegraphics[width=\linewidth]{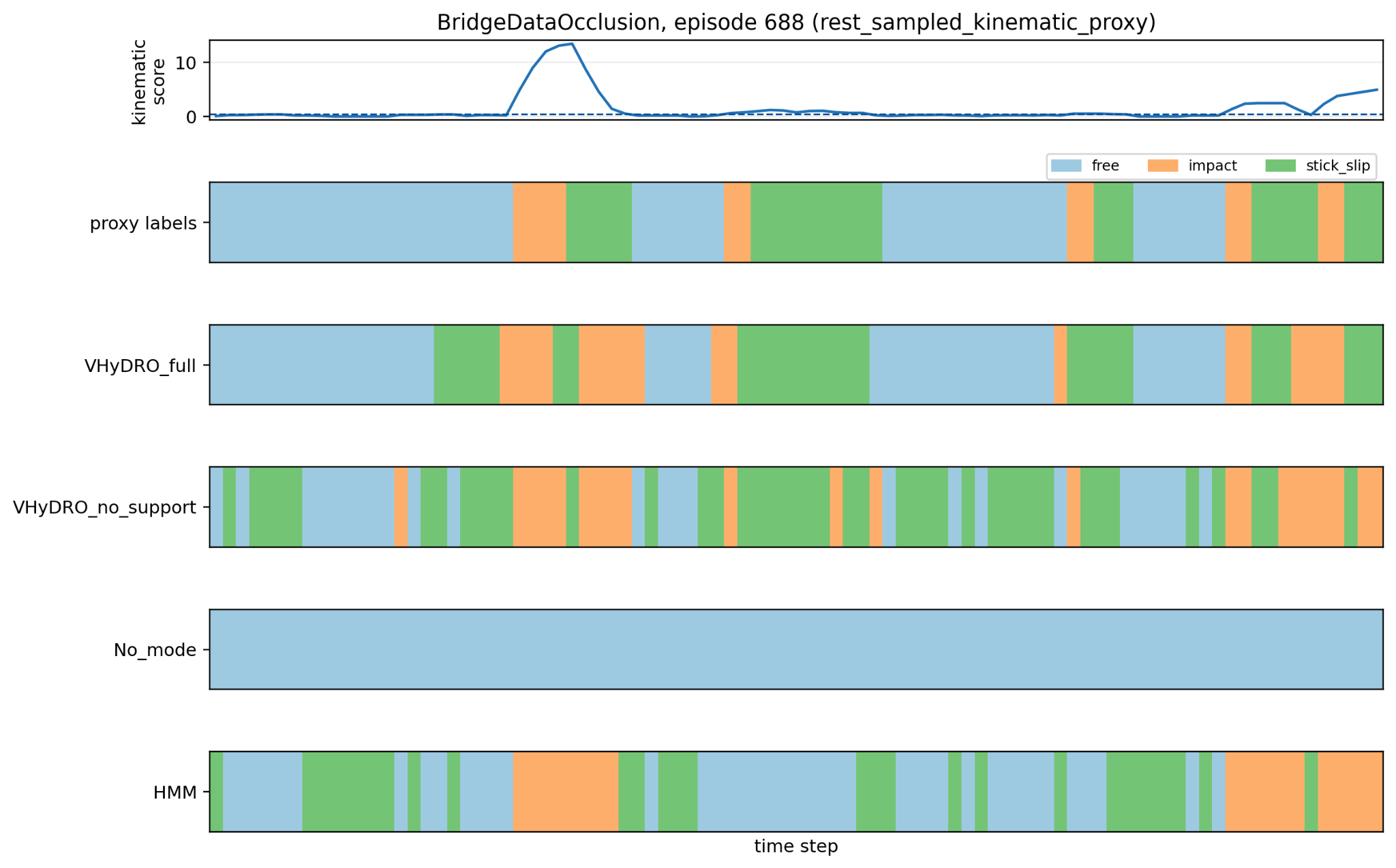}
    \caption{BridgeDataOcclusion.}
\end{subfigure}\hfill
\begin{subfigure}[t]{0.49\linewidth}
    \includegraphics[width=\linewidth]{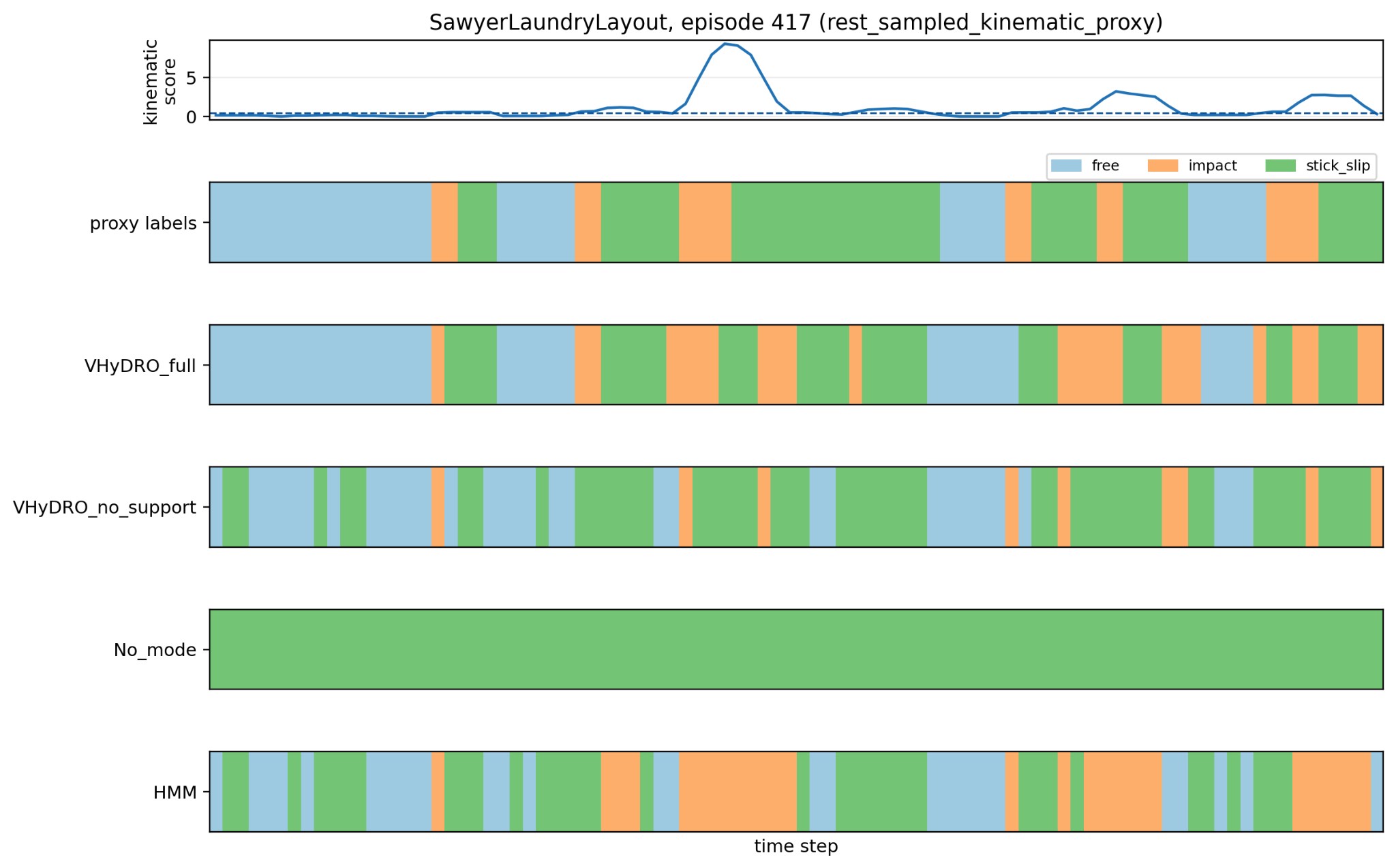}
    \caption{SawyerLaundryLayout.}
\end{subfigure}
\begin{subfigure}[t]{0.49\linewidth}
    \includegraphics[width=\linewidth]{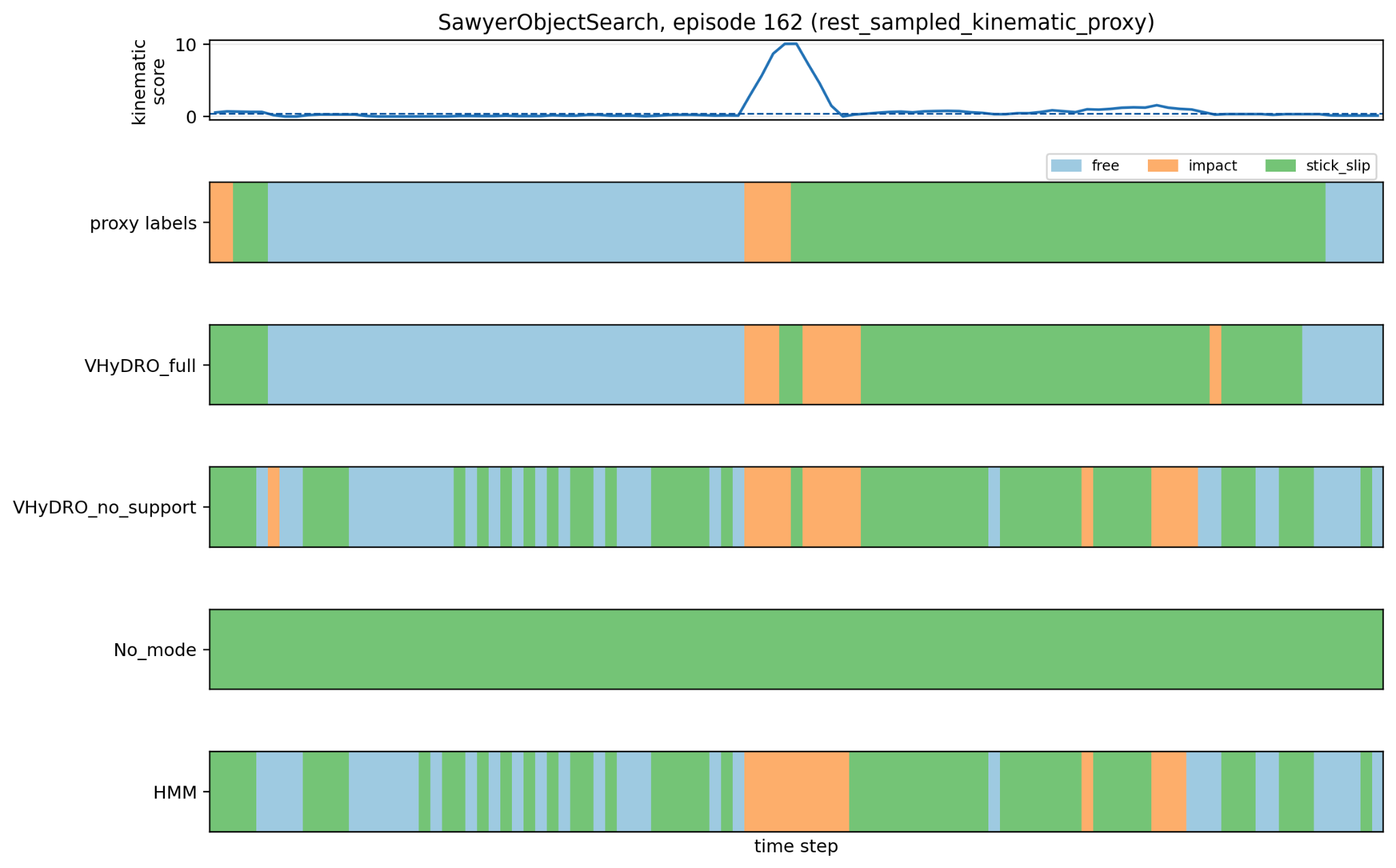}
    \caption{SawyerObjectSearch.}
\end{subfigure}\hfill
\begin{subfigure}[t]{0.49\linewidth}
    \includegraphics[width=\linewidth]{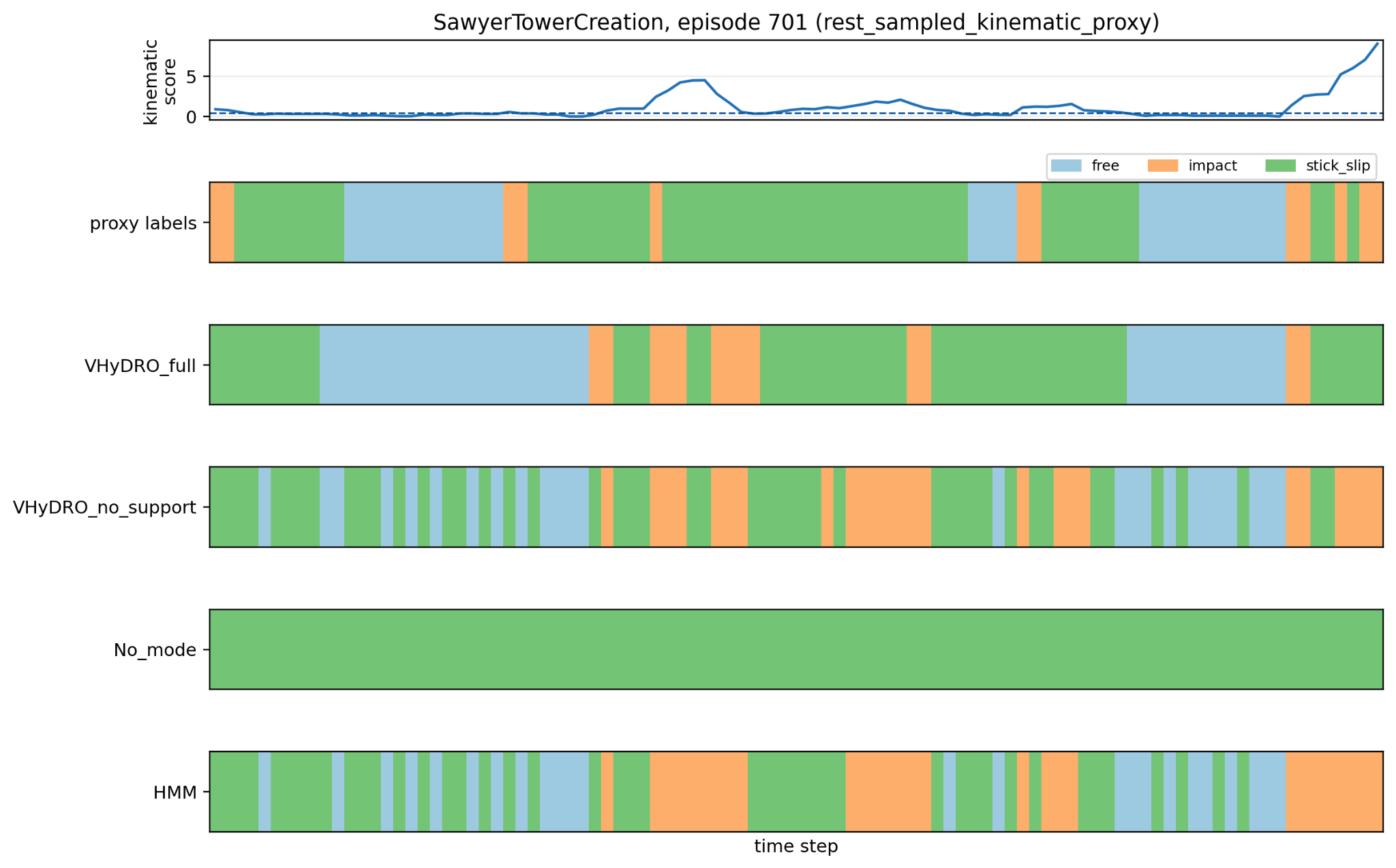}
    \caption{SawyerTowerCreation.}
\end{subfigure}
\caption{\textbf{Sawyer/BridgeData mode timelines with proxy labels.}
Each panel shows the smoothed kinematic score, denoised kinematic proxy labels,
and inferred mode sequences. The labels audit mode segmentation; they are not
simulator contact-force labels.}
\label{fig:si_exp2_rest_timelines}
\end{figure}

\paragraph{Proxy-label audit.}
Figure~\ref{fig:si_exp2_rest_hists} shows the smoothed kinematic score
distributions used to construct the kinematic proxy labels. These histograms
document the evaluation target; they are not contact-force histograms and are
not used as supervised training labels.

\begin{figure}[p]
\centering
\begin{subfigure}[t]{0.49\linewidth}
    \includegraphics[width=\linewidth]{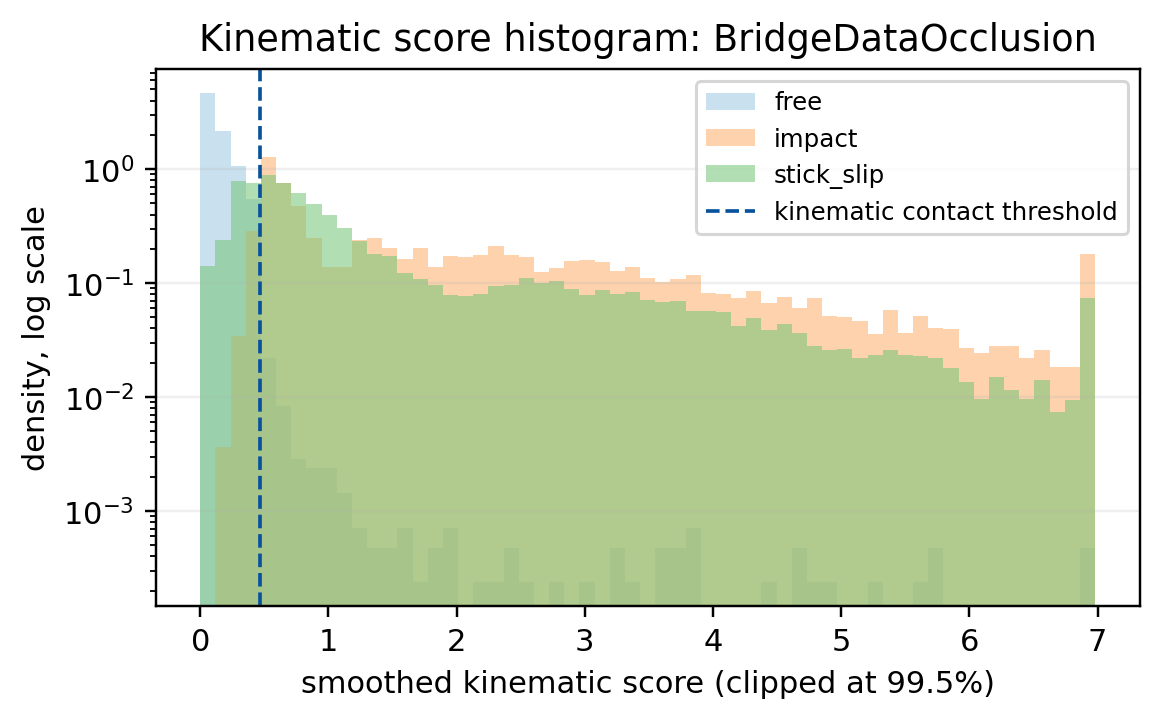}
    \caption{BridgeDataOcclusion.}
\end{subfigure}\hfill
\begin{subfigure}[t]{0.49\linewidth}
    \includegraphics[width=\linewidth]{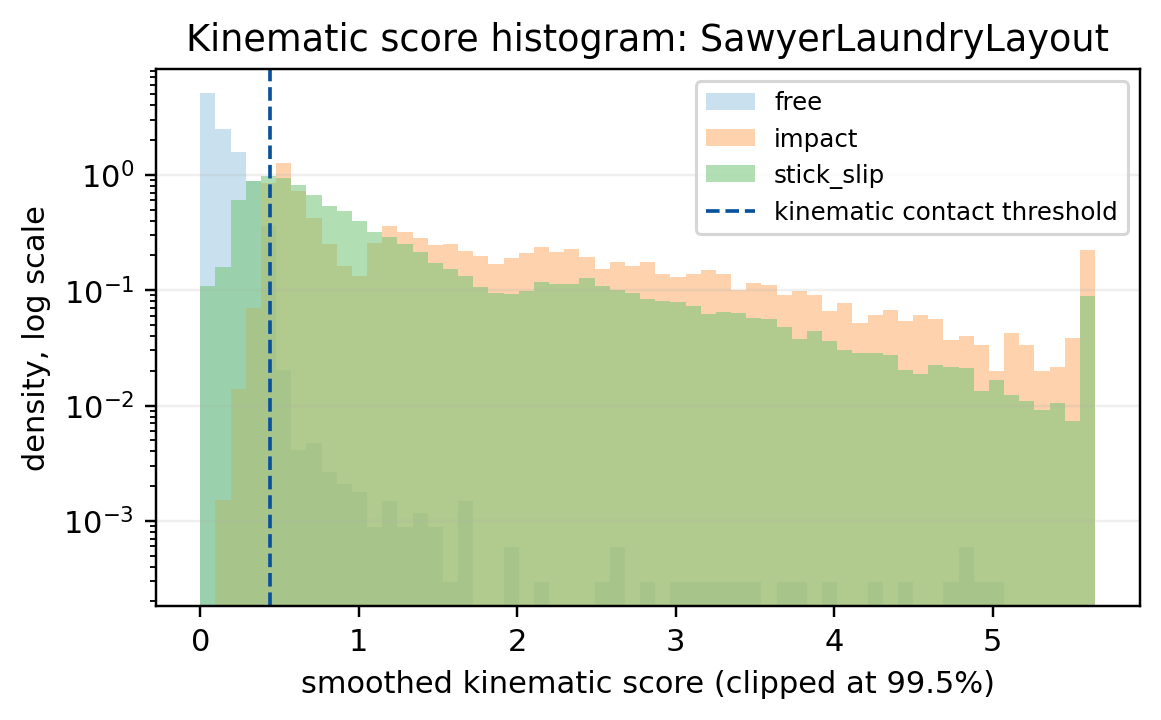}
    \caption{SawyerLaundryLayout.}
\end{subfigure}
\begin{subfigure}[t]{0.49\linewidth}
    \includegraphics[width=\linewidth]{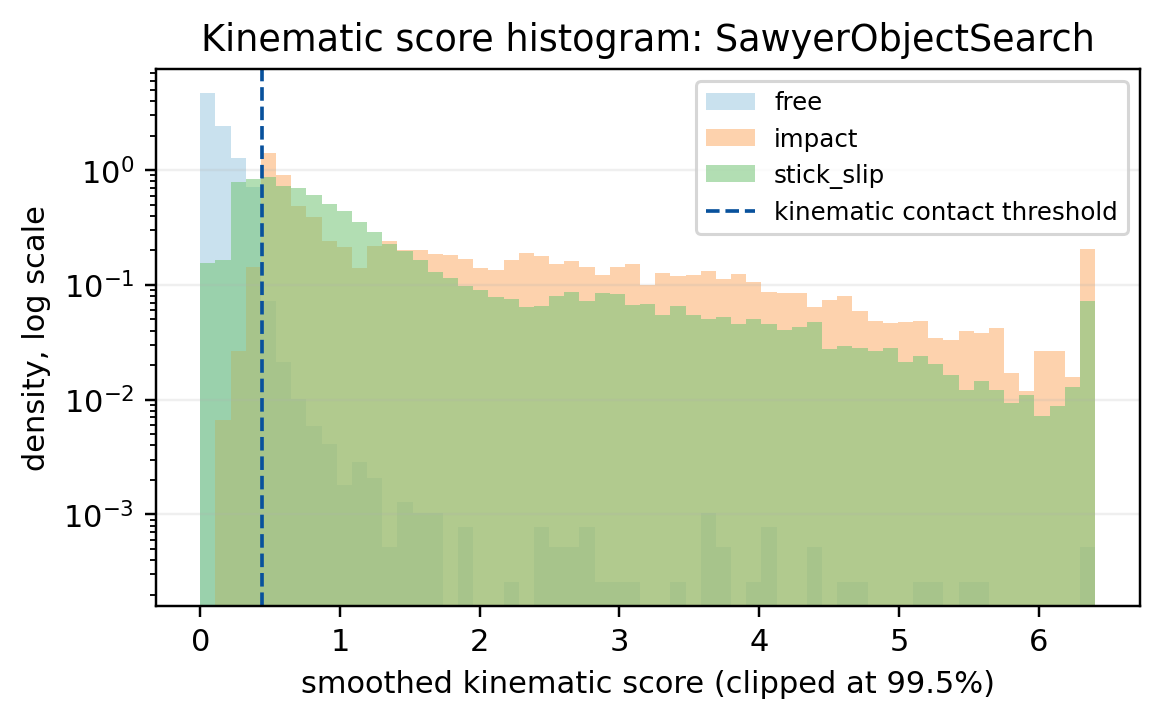}
    \caption{SawyerObjectSearch.}
\end{subfigure}\hfill
\begin{subfigure}[t]{0.49\linewidth}
    \includegraphics[width=\linewidth]{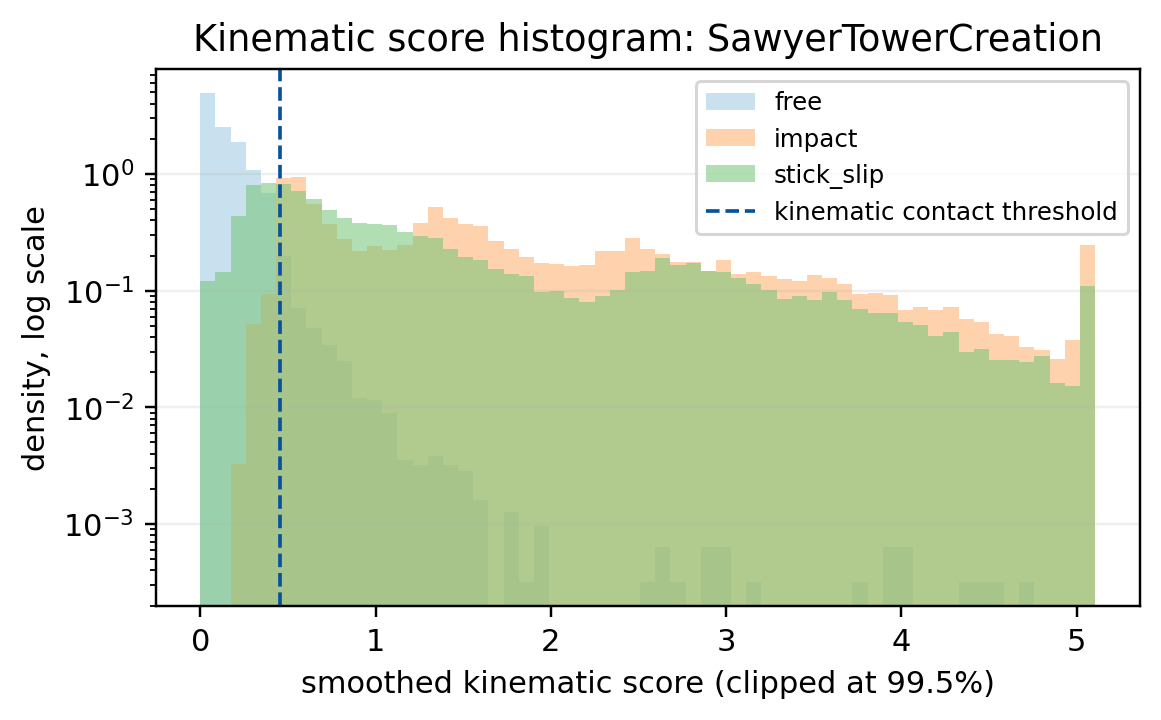}
    \caption{SawyerTowerCreation.}
\end{subfigure}
\caption{\textbf{Sawyer/BridgeData kinematic-score histograms.} Smoothed kinematic-score distributions used to form free, impact-like, and stick/slip-like kinematic proxy labels. The labels audit mode segmentation; they are not simulator contact-force labels.}
\label{fig:si_exp2_rest_hists}
\end{figure}

\paragraph{Per-task audit.}
Table~\ref{tab:si_exp2_rest_per_task} reports task-level metrics for the
Sawyer/BridgeData task families using frozen kinematic proxy labels. The
aggregate pattern in Table~\ref{tab:si_exp2_rest_aggregate} is consistent
across tasks: \method{} improves the metrics tied to mode concentration, while
purity alone is not sufficient to identify coherent predictive modes.

\begin{table}[p]
\centering
\scriptsize
\caption{\textbf{Per-task Sawyer/BridgeData-style predictive
mode-segmentation diagnostic.} Held-out test metrics for
Sawyer/BridgeData-style proxy trajectories. Labels are denoised
kinematic proxies, not simulator contact-force labels. Means$\pm$SEM
over twenty seeds. Bold entries mark the best predictive segmentation
value within each task block.}
\label{tab:si_exp2_rest_per_task}
\resizebox{\linewidth}{!}{%
\begin{tabular}{llcccc}
\toprule
Task & Method & Mode F1 $\uparrow$ & ARI $\uparrow$ & Change-point F1 $\uparrow$ & Purity $\uparrow$ \\
\midrule
SawyerTowerCreation & \method{} & \sihighlight{0.697$\pm$0.001} & \sihighlight{0.528$\pm$0.004} & \sihighlight{0.697$\pm$0.009} & 0.908$\pm$0.001 \\
SawyerTowerCreation & no support & 0.516$\pm$0.009 & 0.165$\pm$0.024 & 0.398$\pm$0.002 & \sihighlight{0.917$\pm$0.009} \\
SawyerTowerCreation & HMM & 0.448$\pm$0.014 & 0.193$\pm$0.006 & 0.363$\pm$0.001 & 0.880$\pm$0.002 \\
SawyerTowerCreation & no mode & 0.221$\pm$0.002 & 0.000$\pm$0.000 & 0.000$\pm$0.000 & 0.495$\pm$0.005 \\
SawyerLaundryLayout & \method{} & \sihighlight{0.724$\pm$0.005} & \sihighlight{0.592$\pm$0.014} & \sihighlight{0.722$\pm$0.009} & \sihighlight{0.928$\pm$0.002} \\
SawyerLaundryLayout & no support & 0.579$\pm$0.009 & 0.187$\pm$0.024 & 0.403$\pm$0.011 & 0.922$\pm$0.005 \\
SawyerLaundryLayout & HMM & 0.489$\pm$0.001 & 0.161$\pm$0.002 & 0.346$\pm$0.003 & 0.881$\pm$0.003 \\
SawyerLaundryLayout & no mode & 0.222$\pm$0.000 & 0.000$\pm$0.000 & 0.000$\pm$0.000 & 0.499$\pm$0.001 \\
BridgeDataOcclusion & \method{} & \sihighlight{0.707$\pm$0.001} & \sihighlight{0.558$\pm$0.007} & \sihighlight{0.705$\pm$0.004} & 0.919$\pm$0.001 \\
BridgeDataOcclusion & no support & 0.513$\pm$0.001 & 0.192$\pm$0.003 & 0.381$\pm$0.000 & \sihighlight{0.930$\pm$0.002} \\
BridgeDataOcclusion & HMM & 0.472$\pm$0.001 & 0.177$\pm$0.003 & 0.350$\pm$0.001 & 0.886$\pm$0.001 \\
BridgeDataOcclusion & no mode & 0.220$\pm$0.001 & 0.000$\pm$0.000 & 0.000$\pm$0.000 & 0.492$\pm$0.005 \\
SawyerObjectSearch & \method{} & \sihighlight{0.704$\pm$0.000} & \sihighlight{0.563$\pm$0.008} & \sihighlight{0.700$\pm$0.004} & \sihighlight{0.915$\pm$0.002} \\
SawyerObjectSearch & no support & 0.530$\pm$0.010 & 0.139$\pm$0.018 & 0.375$\pm$0.005 & 0.912$\pm$0.007 \\
SawyerObjectSearch & HMM & 0.470$\pm$0.003 & 0.165$\pm$0.003 & 0.335$\pm$0.003 & 0.881$\pm$0.002 \\
SawyerObjectSearch & no mode & 0.224$\pm$0.003 & 0.000$\pm$0.000 & 0.000$\pm$0.000 & 0.505$\pm$0.009 \\
\bottomrule
\end{tabular}}
\end{table}
\FloatBarrier

\paragraph{Audit reproducibility.}
The aggregate and per-task tables report seed-level means, task-level means,
and standard errors computed from the fixed held-out evaluation protocol.

\subsection{Experiment 3: sparse physical-law recovery audit}
\label{app:exp3_controlled_sparse_ph}

This appendix audits the sparse-law link in the theory chain:
\emph{mode concentration \(\Rightarrow\) sparse physical-law recovery}. The
benchmark uses known sparse port-Hamiltonian laws, so support recovery,
coefficient error, vector-field error, and physical-constant error can be
measured directly.

\paragraph{Protocol.}
We evaluate four controlled known-equation hybrid systems with known sparse laws: a bouncing puck with wall contacts, a sliding block with Coulomb friction, a pendulum with contact stops, and a planar pushing system. The same twenty-seed evaluation protocol is applied to every method and includes observation noise, missing data, hidden velocities, numerical derivative noise, and small mode-assignment perturbations. We compare \method{}-full, \emph{no-mode}, \emph{no-sparsity}, and \emph{no-pH}. The first three produce port-Hamiltonian coefficients and can therefore be evaluated by sparse support recovery, relative coefficient error, and physical-constant error; the no-pH baseline is evaluated only as a vector-field predictor. The candidate libraries contain at most \(p=12\) terms per system, and the
true active support has size at most \(k=3\). These values match the
low-dimensional sparse setting used when interpreting
Theorem~\ref{thm:sparse_ph_recovery}.

\paragraph{Summary metrics.}
Table~\ref{tab:exp3_sparse_ph_summary} is the main quantitative audit. \method{} is best on every metric that reflects sparse physical-law recovery: support F1, relative coefficient error, vector-field NRMSE, and physical-constant error. The no-mode ablation merges incompatible regimes and degrades both support recovery and coefficient estimation. The no-sparsity ablation predicts more accurately than no-mode, but the dense fit obscures the active support. The no-pH baseline reaches a moderate vector-field error but has no sparse port-Hamiltonian coefficients to inspect.

\begin{table*}[t]
\centering
\small
\begin{tabular}{lcccc}
\toprule
Method & Support F1 $\uparrow$ & Rel. coeff. error $\downarrow$ & Vector-field NRMSE $\downarrow$ & Phys.-constant error $\downarrow$ \\
\midrule
\method{}-full & \textbf{$0.978\pm0.042$} & \textbf{$0.006\pm0.010$} & \textbf{$0.001\pm0.001$} & \textbf{$0.001\pm0.001$} \\
no-mode & $0.713\pm0.039$ & $1.087\pm0.330$ & $0.287\pm0.196$ & $0.188\pm0.234$ \\
no-sparsity & $0.771\pm0.073$ & $0.851\pm0.559$ & $0.043\pm0.043$ & $0.014\pm0.008$ \\
no-pH & N/A & N/A & $0.031\pm0.015$ & N/A \\
\bottomrule
\end{tabular}
\caption{\textbf{Experiment~3 summary.} Values are mean$\pm$standard deviation over \(20\) seeds for each known-equation system and fixed evaluation-condition batch, aggregated over the four systems. For the dense no-sparsity baseline, Support F1 is computed after applying
the same post-hoc coefficient threshold used for the sparse-support
audit. The no-pH model is evaluated only by vector-field prediction because it has no sparse port-Hamiltonian coefficients.}
\label{tab:exp3_sparse_ph_summary}
\end{table*}

\paragraph{Coefficient-recovery diagnostics.}
Figure~\ref{fig:si_exp3_coeff_scatter} shows estimated versus true active coefficients. The \method{} points stay close to the identity line across the active terms, while the dense no-sparsity fit spreads mass across nonessential terms and inflates coefficient magnitude. Figure~\ref{fig:si_exp3_physical_constant_error} reports the corresponding physical-constant error: \method{} stays near zero, whereas the no-mode ablation varies widely because it tries to explain multiple regimes with a single law.

\begin{figure}[p]
\centering
\includegraphics[width=0.72\linewidth]{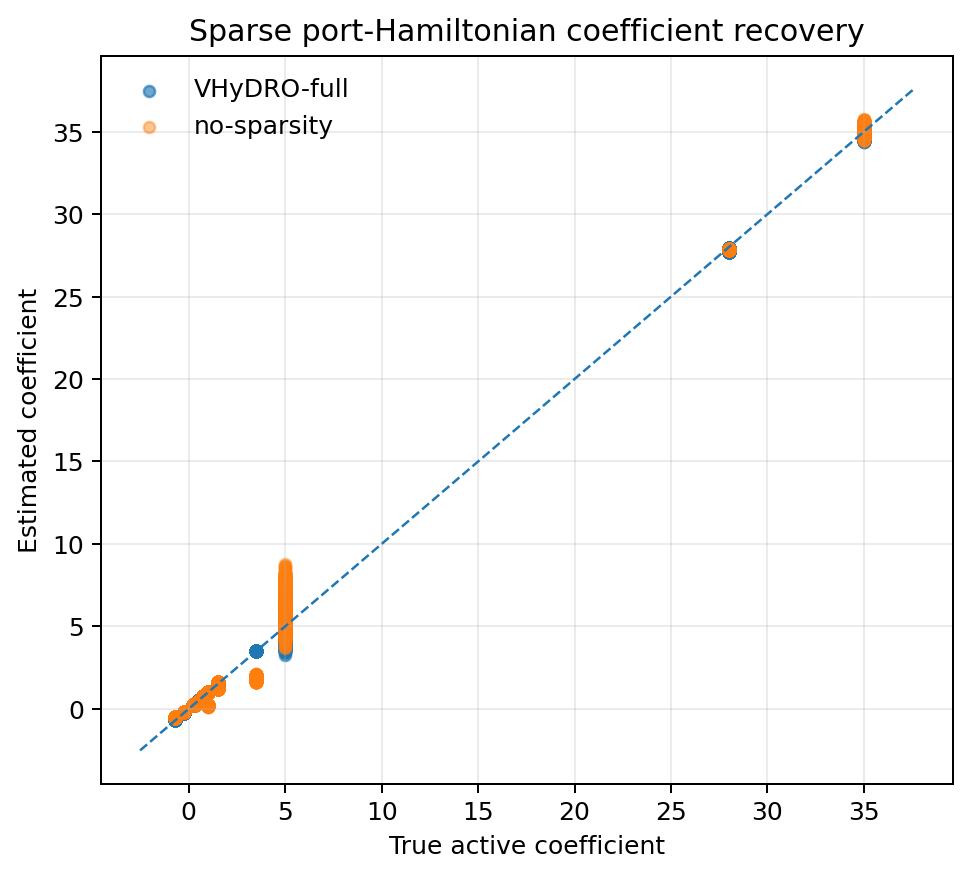}
\caption{\textbf{Coefficient recovery scatter.} Estimated active coefficients versus true active coefficients. \method{} concentrates tightly around the identity line, while the no-sparsity ablation exhibits larger spread and systematic inflation.}
\label{fig:si_exp3_coeff_scatter}
\end{figure}

\begin{figure}[p]
\centering
\includegraphics[width=0.62\linewidth]{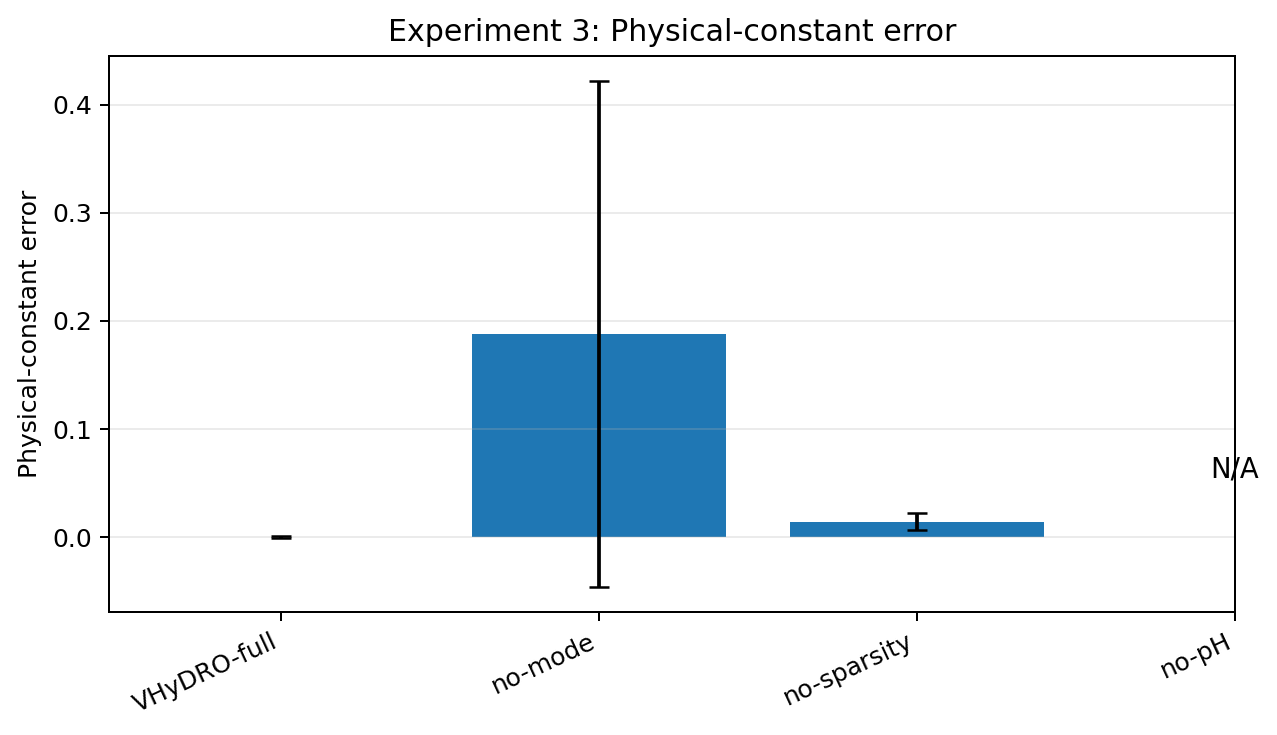}
\caption{\textbf{Physical-constant error.} \method{} preserves the physical constants implied by the recovered law. The no-mode ablation is unstable because it conflates distinct regimes, and the no-pH baseline is not defined in this metric.}
\label{fig:si_exp3_physical_constant_error}
\end{figure}

\paragraph{Support-recovery diagnostics.}
Figure~\ref{fig:si_exp3_support_precision} decomposes support recovery into selectivity. Precision is the informative component: \method{} substantially exceeds the dense and no-mode ablations, indicating that the recovered support is not only inclusive but selective. The support heatmap is included in Figure~\ref{fig:exp3_sparse_ph}.

\begin{figure}[p]
\centering
\includegraphics[width=0.62\linewidth]{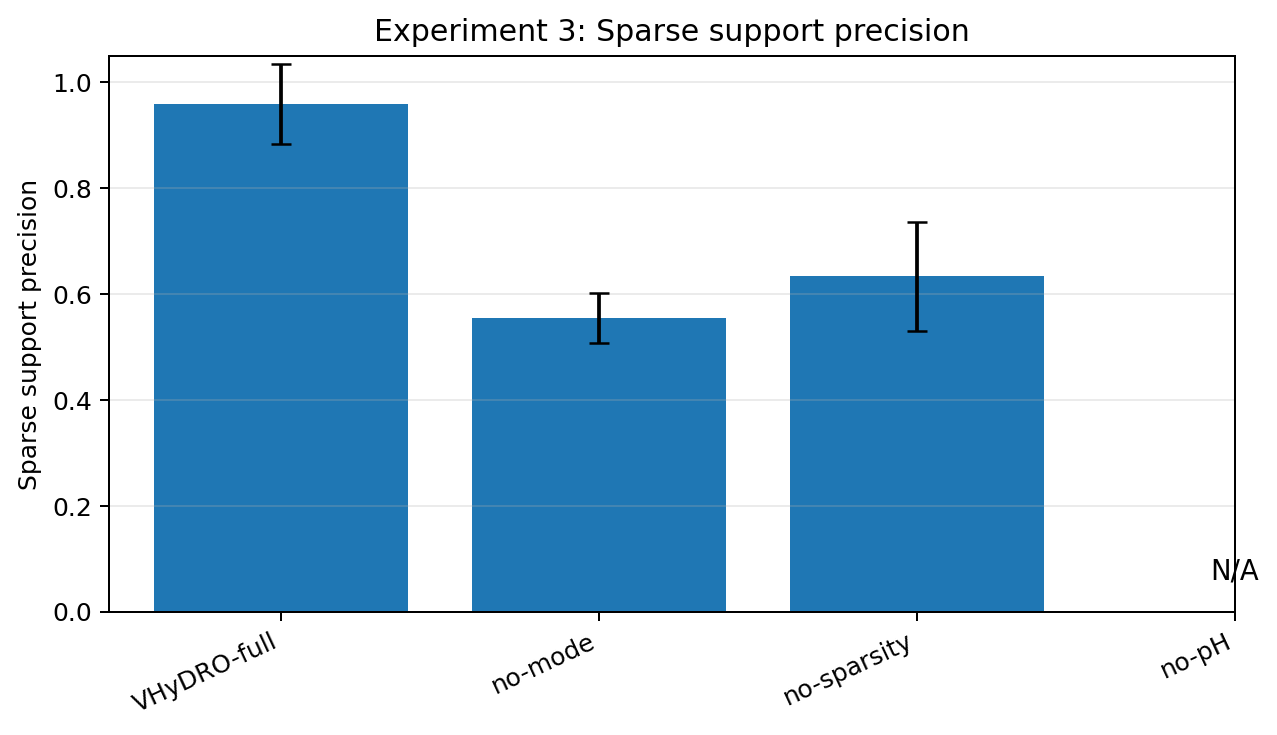}
\caption{\textbf{Sparse support precision.} The key difference between \method{} and the ablations is selectivity: \method{} recovers the active support while suppressing inactive library terms.}
\label{fig:si_exp3_support_precision}
\end{figure}

\begin{figure*}[t]
\centering
\begin{minipage}{0.245\textwidth}
\centering
\includegraphics[width=\linewidth]{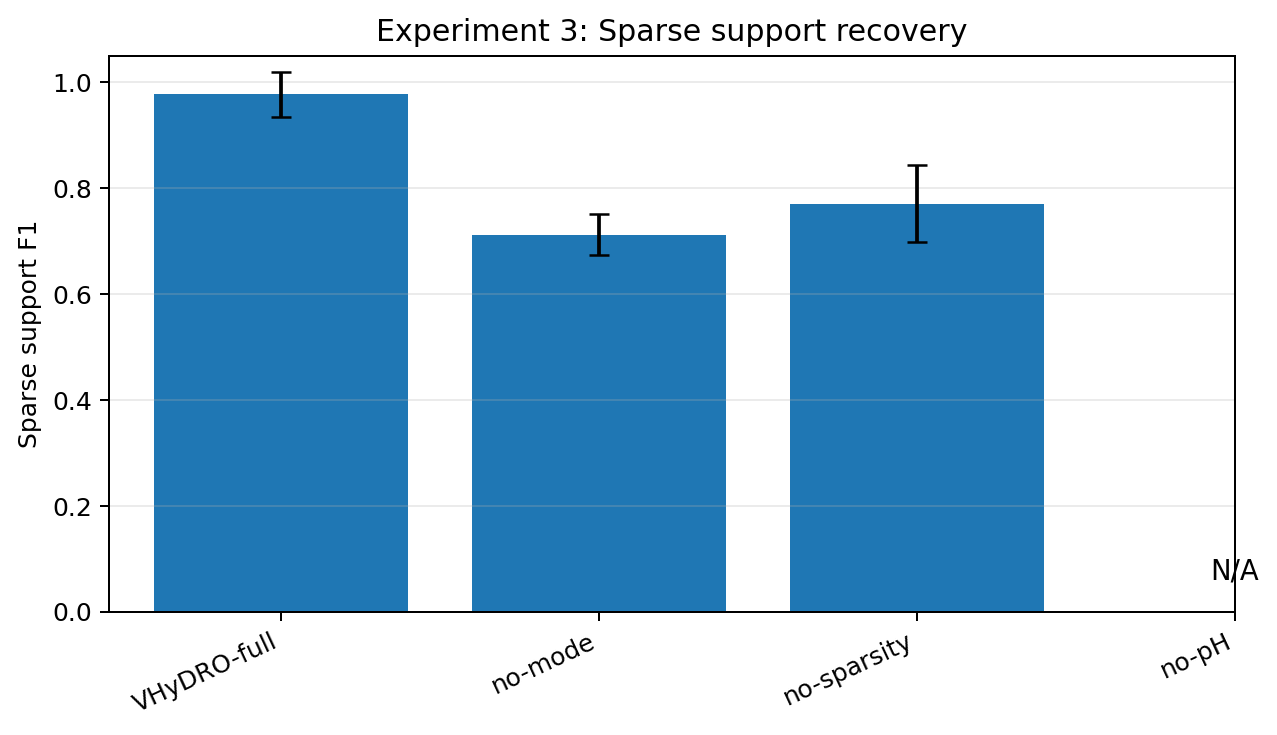}
\end{minipage}\hfill
\begin{minipage}{0.245\textwidth}
\centering
\includegraphics[width=\linewidth]{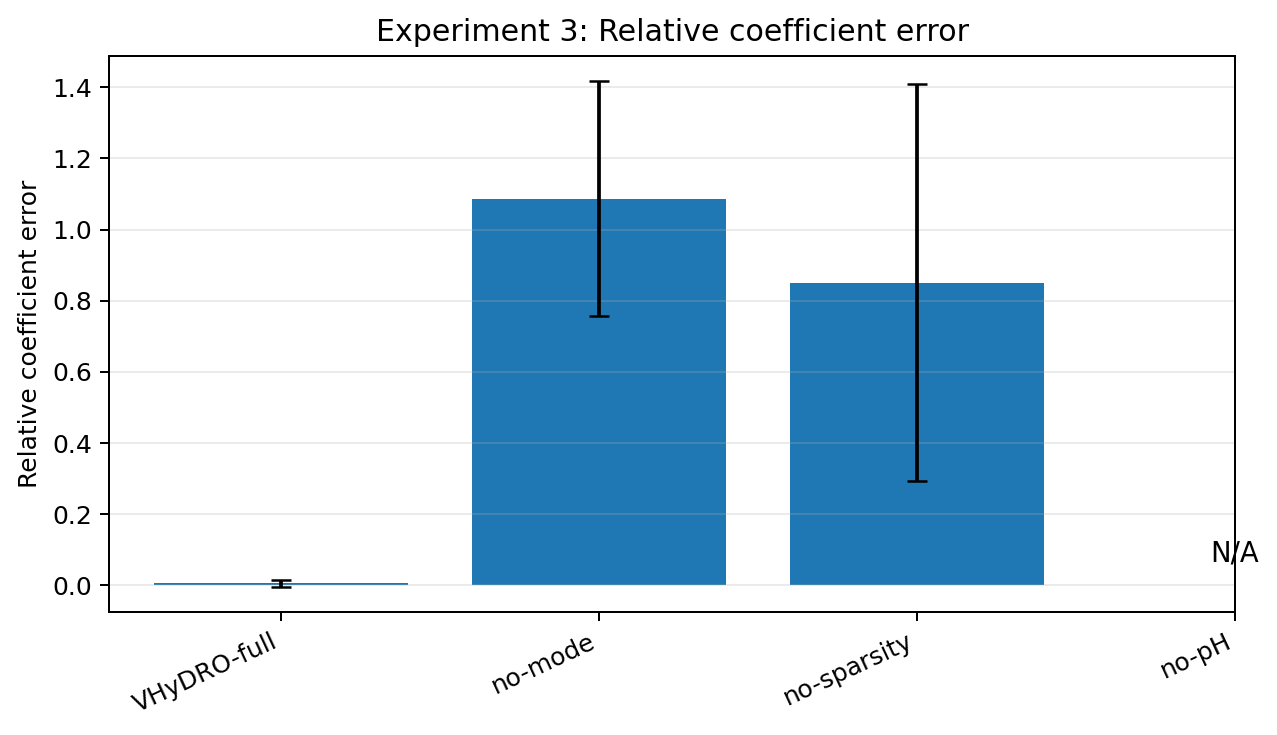}
\end{minipage}\hfill
\begin{minipage}{0.245\textwidth}
\centering
\includegraphics[width=\linewidth]{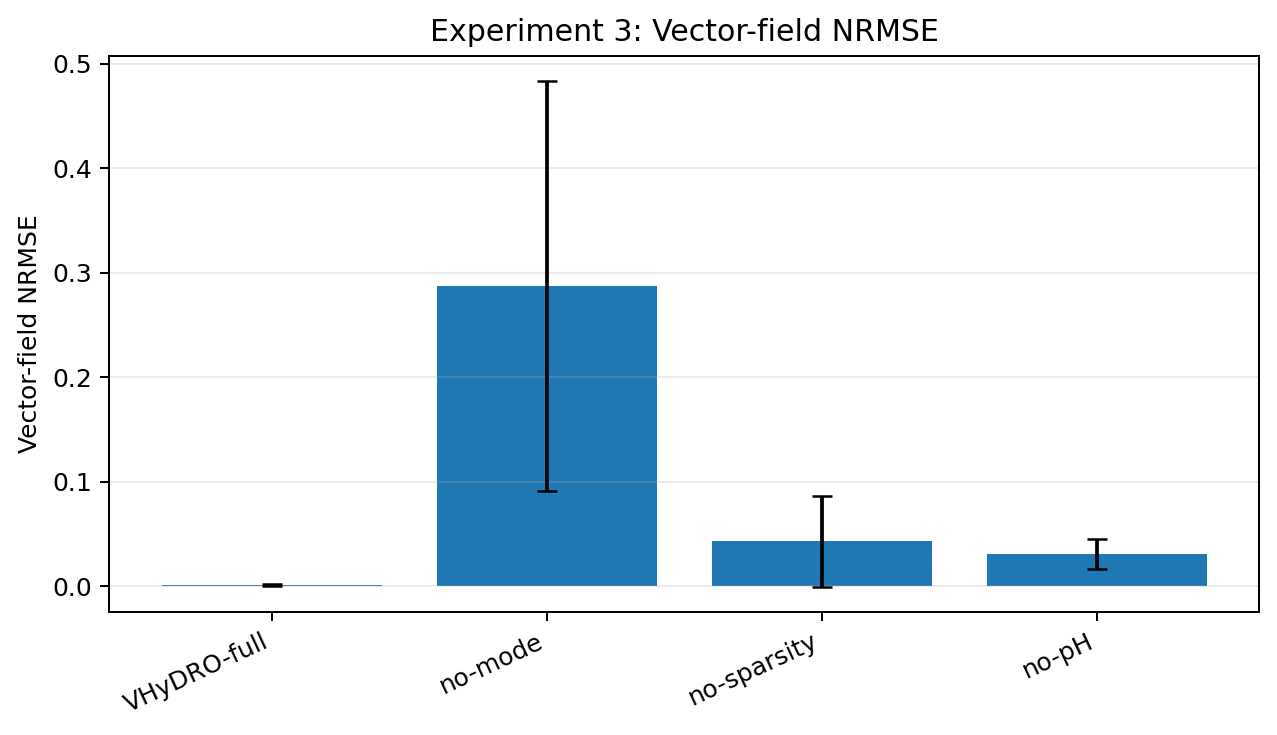}
\end{minipage}\hfill
\begin{minipage}{0.245\textwidth}
\centering
\includegraphics[width=\linewidth]{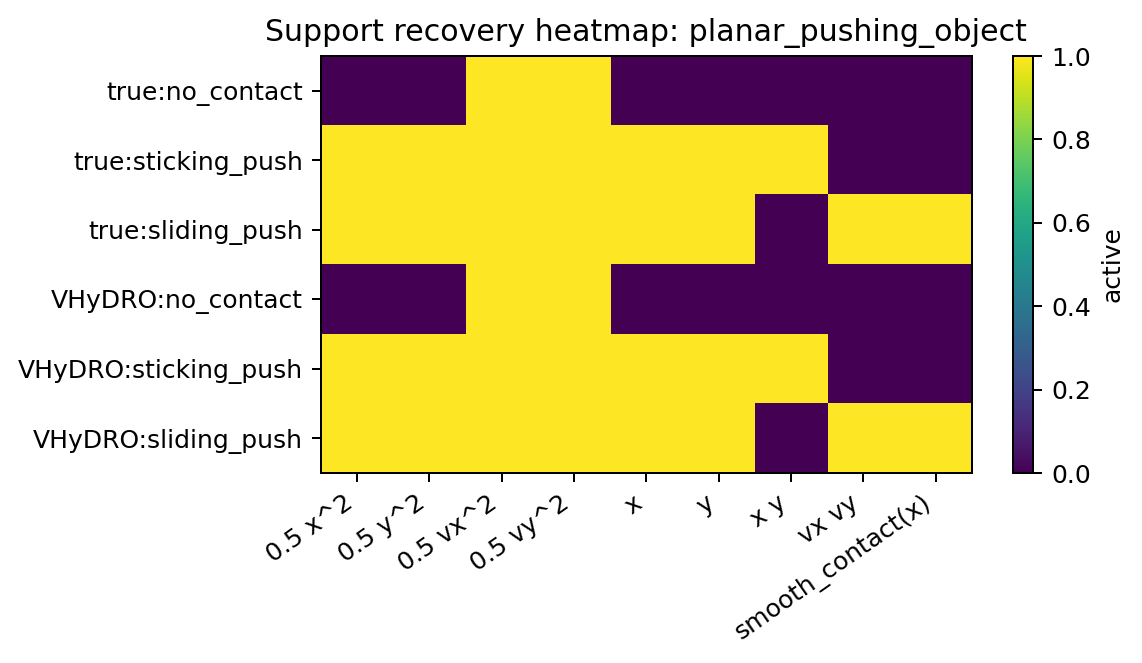}
\end{minipage}
\vspace{-1mm}
\caption{\textbf{Sparse physical-law recovery.} On hybrid systems with known sparse port-Hamiltonian laws, \method{} recovers the active library terms with high support F1 and low coefficient error. The no-mode ablation confounds regimes, the no-sparsity ablation predicts with dense coefficients but loses sparse support recovery, and the no-pH ablation fits a vector field without recoverable port-Hamiltonian coefficients.}
\label{fig:exp3_sparse_ph}
\vspace{-1mm}
\end{figure*}

\paragraph{Interpretation.}
The sparse-law benchmark is designed so that a purely predictive model can
perform well without identifying the governing mechanism. The no-pH baseline
demonstrates this possibility: it achieves a moderate vector-field error, yet
it offers no recoverable port-Hamiltonian coefficients, no sparse support, and
no physical constants. On this controlled known-equation audit, \method{}
simultaneously predicts accurately and recovers the sparse, mode-wise physical
mechanism.

\subsection{Appendix-only planning theorem}
\label{app:exp4_theorem4_diagnostic}

\begin{theorem}[Passivity-calibrated rollouts and KL-robust MPC]
\label{thm:passive_kl_mpc_certificate}
For each action sequence $a\in\mathcal A$, let $Q_a$ be the \method{} predictive law over hybrid trajectories generated, between switching times, by
\begin{equation}
\label{eq:hybrid_sde}
dz_t=\big[(J_{s_t}-R_{s_t})\nabla H_{s_t}(z_t)+G_{s_t}a_t\big]dt+\Sigma_{s_t}^{1/2}dW_t.
\end{equation}
Assume the rollout remains in certified regions $\mathcal D_m$ where $h_m^\star:=\inf_{z\in\mathcal D_m}H_m(z)>-\infty$ and $U_m(z):=H_m(z)-h_m^\star$.  Uniformly over modes, suppose $J_m^\top=-J_m$, $R_m\succeq rI$, $\|G_m\|_{\mathrm{op}}\le g$, $\operatorname{tr}(\Sigma_m)\le\nu^2$, $\|a_t\|_2\le A$, $\|\nabla^2H_m\|_{\mathrm{op}}\le L_H$, and $\|\nabla H_m(z)\|_2^2\ge2\mu U_m(z)$ on $\mathcal D_m$. Assume \(\Sigma_m\succeq 0\), finitely many switches almost surely on
\([0,T]\), and sufficient integrability so that the stochastic integrals
in the hybrid It\^o decomposition are true martingales. Assume also that contact switches are energy-nonexpansive: $U_{s_\tau}(z_\tau)\le U_{s_{\tau^-}}(z_{\tau^-})$.  With
\begin{equation}
\label{eq:energy_constants}
C_E:=\frac{g^2A^2}{2r}+\frac{L_H\nu^2}{2},
\qquad
\alpha:=r\mu,
\end{equation}
we have, for all $a$ and horizons $T$,
\begin{equation}
\label{eq:ph_energy}
\mathbb E_{Q_a}[U_{s_T}(z_T)]
\le
 e^{-\alpha T}\mathbb E_{Q_a}[U_{s_0}(z_0)]
+
\frac{1-e^{-\alpha T}}{\alpha}C_E.
\end{equation}
Let $P_a^\star$ be the true trajectory law and assume uniform predictive calibration $D_{\mathrm{KL}}(P_a^\star\|Q_a)\le\varepsilon_H$.  With $\epsilon_{\mathrm{TV}}:=\min\{1,\sqrt{\varepsilon_H/2}\}$, every bounded cost $C\in[0,C_{\max}]$ and every $\eta_{\mathrm{opt}}$-optimal \method{} plan $\widehat a$ satisfy
\begin{equation}
\label{eq:mpc_bound}
\mathbb E_{P_{\widehat a}^\star}[C]
-
\inf_{a\in\mathcal A}\mathbb E_{P_a^\star}[C]
\le
2C_{\max}\epsilon_{\mathrm{TV}}+\eta_{\mathrm{opt}}.
\end{equation}
For any failure event $\mathcal F$ and any
\(\delta\in[\epsilon_{\mathrm{TV}},1]\),
\begin{equation}
\label{eq:chance_bound}
Q_a(\mathcal F)\le \delta-\epsilon_{\mathrm{TV}}
\quad\Longrightarrow\quad
P_a^\star(\mathcal F)\le\delta.
\end{equation}
If $Q_a(\mathcal F)$ is estimated from $n$ rollouts over a finite evaluated set of $L$ action sequences, the implication remains valid with probability at least $1-\beta$ after replacing $\epsilon_{\mathrm{TV}}$ in~\eqref{eq:chance_bound} by
\begin{equation}
\label{eq:mc_chance_margin}
\epsilon_{\mathrm{TV}}+\sqrt{\frac{\log(L/\beta)}{2n}}.
\end{equation}
\end{theorem}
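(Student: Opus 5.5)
The proof has two independent parts: an energy inequality for the passive hybrid SDE, and a transfer of model-level guarantees to the true law through total variation.

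\emph{Energy bound.} Fix an action sequence $a$ and apply It\^o's formula to $U_{s_t}(z_t)$ on each inter-switch interval. Since $\nabla U_m=\nabla H_m$ and $J_m$ is skew, the drift of $U_m$ is
\[
\nabla H_m^\top(J_m-R_m)\nabla H_m+\nabla H_m^\top G_m a_t+\tfrac12\operatorname{tr}(\Sigma_m\nabla^2H_m).
\]
This equals $-\nabla H_m^\top R_m\nabla H_m+\nabla H_m^\top G_m a_t+\tfrac12\operatorname{tr}(\Sigma_m\nabla^2H_m)$. I will bound the three terms as follows:
\begin{itemize}
\item the dissipation term by $-r\|\nabla H_m\|^2$, using $R_m\succeq rI$;
\item the port term by Young's inequality, $\nabla H^\top G a\le \tfrac r2\|\nabla H\|^2+\tfrac{g^2A^2}{2r}$;
\item the It\^o correction by $\tfrac12 L_H\operatorname{tr}(\Sigma_m)\le \tfrac12 L_H\nu^2$, using $\Sigma_m\succeq0$, so that $\operatorname{tr}(\Sigma\nabla^2H)\le\|\nabla^2H\|_{\rm op}\operatorname{tr}\Sigma$.
\end{itemize}
Together these give drift $\le -\tfrac r2\|\nabla H_m\|^2+C_E$. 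The gradient-domination condition $\|\nabla H_m\|^2\ge 2\mu U_m$ then turns this into $\le -r\mu U_m+C_E=-\alpha U_m+C_E$.

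The step I expect to be most delicate is passing this across switches. I would apply It\^o to $e^{\alpha t}U_{s_t}(z_t)$ piecewise. At each switching time the jump $e^{\alpha\tau}\bigl(U_{s_\tau}(z_\tau)-U_{s_{\tau^-}}(z_{\tau^-})\bigr)$ is $\le0$ by energy nonexpansiveness. There are finitely many switches a.s., and the stochastic integrals are true martingales by assumption, so taking expectations gives
\[
e^{\alpha T}\mathbb E[U_T]\le \mathbb E[U_0]+C_E(e^{\alpha T}-1)/\alpha,
\]
which is \eqref{eq:ph_energy}. Staying in $\mathcal D_m$ is assumed, so $U_m\ge0$ and the gradient-domination condition applies throughout.

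\emph{MPC and chance bounds.} Pinsker gives $\mathrm{TV}(P_a^\star,Q_a)\le\sqrt{\varepsilon_H/2}$, and TV is at most $1$ trivially, so $\mathrm{TV}\le\epsilon_{\rm TV}$. Hence $|\mathbb E_{P^\star_a}C-\mathbb E_{Q_a}C|\le C_{\max}\epsilon_{\rm TV}$ uniformly in $a$, since the cost $C$ takes values in $[0,C_{\max}]$. I will interpret $\eta_{\rm opt}$-optimality as $\mathbb E_{Q_{\widehat a}}C\le \inf_a\mathbb E_{Q_a}C+\eta_{\rm opt}$. Chaining
\[
P^\star_{\widehat a}\to Q_{\widehat a}\to \inf_a Q_a \le Q_{a'}\to P^\star_{a'}
\]
for arbitrary $a'$ and then taking the infimum over $a'$ yields \eqref{eq:mpc_bound}.

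For \eqref{eq:chance_bound}, $P^\star_a(\mathcal F)\le Q_a(\mathcal F)+\epsilon_{\rm TV}\le\delta$.

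For the Monte Carlo version, I would apply Hoeffding's inequality to each empirical frequency $\widehat Q_a(\mathcal F)$ with the one-sided deviation $\sqrt{\log(L/\beta)/(2n)}$, and take a union bound over the $L$ action sequences. On the resulting event, $Q_a(\mathcal F)\le\widehat Q_a(\mathcal F)+\sqrt{\log(L/\beta)/(2n)}$ simultaneously for all evaluated $a$. Feeding this into the previous implication gives the margin \eqref{eq:mc_chance_margin}. This remains valid with probability $\ge1-\beta$ provided the set of evaluated action sequences is fixed independently of the rollouts used for the estimates.
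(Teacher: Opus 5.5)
Your proposal is correct and matches the paper's proof step for step. Both use the same generator bound (skew-symmetry of $J_m$, $R_m\succeq rI$, Young's inequality on the port term, the trace bound using $\Sigma_m\succeq0$, then gradient domination), nonpositive energy jumps at switches, Pinsker together with $\mathrm{TV}\le 1$ for the cost and chance-constraint transfers, and Hoeffding with a union bound over the $L$ evaluated sequences.

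The only deviation is that you apply It\^o to $e^{\alpha t}U_{s_t}(z_t)$. The paper instead takes expectations in the unweighted decomposition and invokes integral Gronwall on $\psi(T)\le\psi(0)+\int_0^T(-\alpha\psi(t)+C_E)\,dt$. Your integrating-factor version is the tidier one: with the negative coefficient $-\alpha$, that from-zero integral inequality does not by itself give the exponential bound. It must be used on every subinterval $[t_1,t_2]$, which the same decomposition does supply.
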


Theorem~\ref{thm:passive_kl_mpc_certificate} is a conditional downstream
statement: if the learned rollout law is KL-calibrated, the recovered
port-Hamiltonian structure transfers to bounded-energy and
chance-constraint guarantees. The empirical section focuses only on the
three measured links above.

\paragraph{Relation to prior bounds.}
The result combines standard dissipative energy estimates,
Gronwall-style control, and Pinsker transfer from predictive KL to cost
and chance-constraint error. The \method{}-specific role is to connect
recovered sparse hybrid laws to conditional planning guarantees, not to
provide a numerical planning certificate in the present experiments.

This section reports supplementary MPC diagnostics motivated by Theorem~\ref{thm:passive_kl_mpc_certificate}: one-step rollout error, standardized energy-proxy drift, safety-adjusted return, and failure-risk calibration. These diagnostics do not instantiate the theorem as a numerical certificate because the required held-out KL bound \(D_{\mathrm{KL}}(P_a^\star\|Q_a)\) is not estimated.

\paragraph{Protocol.}
We evaluate PushCube-v1, PickCube-v1, PegInsertionSide-v1, and StackCube-v1, the four tasks that instantiate reliably in our local ManiSkill build. All training data come from rollout HDF5 files collected by this experiment. External ManiSkill demonstration files are not mixed into training because observation layouts differ across tasks and control modes. Rollout collection and MPC evaluation are executed serially in the main process to avoid native SAPIEN/ManiSkill subprocess crashes; PyTorch training uses two GPUs.

\paragraph{Diagnostics.}
The main diagnostic reports quantities motivated by
Theorem~\ref{thm:passive_kl_mpc_certificate}: rollout RMSE, common
energy-proxy drift, safety-adjusted return, and episode-level
risk-calibration gap. The standardized energy proxy is a diagnostic for
bounded standardized energy growth and is not an exact Hamiltonian
energy. Because Theorem~\ref{thm:passive_kl_mpc_certificate} is stated
for bounded cost minimization, return diagnostics should be read after
the standard transformation \(C=C_{\max}-R\); the reported returns are
descriptive and are not used to instantiate Eq.~\eqref{eq:mpc_bound}.

\begin{figure*}[t]
  \centering
  \includegraphics[width=0.98\textwidth]{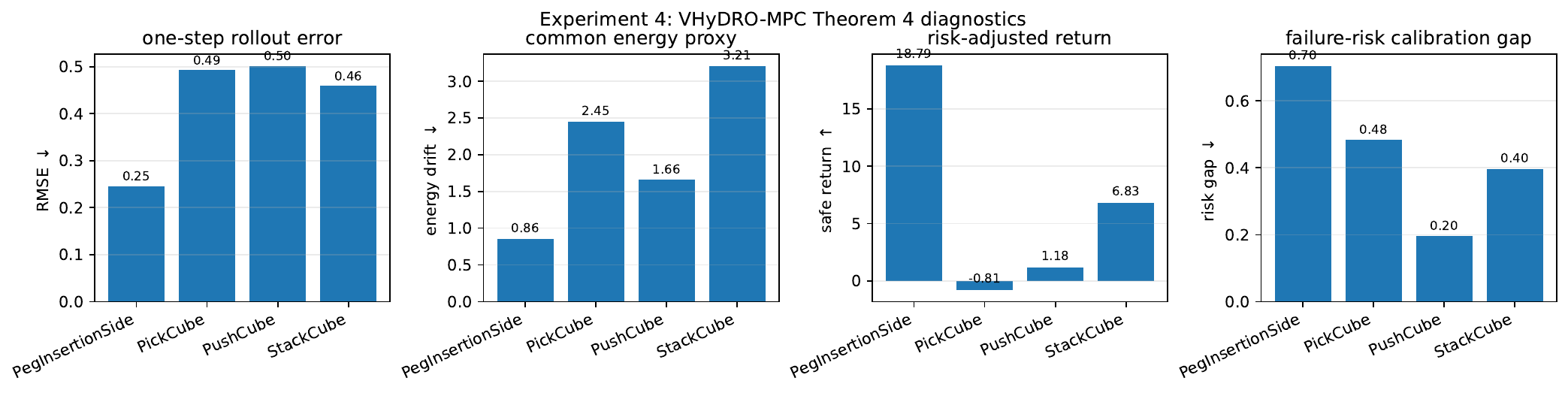}
\caption{\textbf{Supplementary Experiment~4 MPC diagnostics.} Values are
from the online MPC audit. The run uses twenty seeds and should be
interpreted as descriptive diagnostics motivated by
Theorem~\ref{thm:passive_kl_mpc_certificate}, not as a numerical theorem
certificate or task-success evidence.}
 \label{fig:exp4_theorem4_clean}
\end{figure*}

\begin{table*}[t]
\centering
\small
\caption{\textbf{Supplementary Experiment~4 MPC diagnostics.} Values are
from the online MPC audit. The run uses twenty seeds and should be
interpreted as descriptive diagnostics motivated by
Theorem~\ref{thm:passive_kl_mpc_certificate}, not as a numerical theorem
certificate or task-success evidence.}
\label{tab:exp4_theorem4_clean}
\begin{tabular}{lrrrrrrr}
\toprule
Task & RMSE$\downarrow$ & Energy drift$\downarrow$ & Return$\uparrow$ &
Safe return$\uparrow$ & Pred. risk & Risk gap$\downarrow$ & ECE$\downarrow$ \\
\midrule
PegInsertionSide & 0.246 & 0.857 & 29.643 & 18.786 & 0.295 & 0.705 & 0.705 \\
PickCube         & 0.493 & 2.452 & 11.638 & -0.813 & 0.516 & 0.484 & 0.484 \\
PushCube         & 0.502 & 1.658 & 11.171 & 1.179  & 0.637 & 0.196 & 0.308 \\
StackCube        & 0.460 & 3.210 & 20.038 & 6.828  & 0.603 & 0.397 & 0.397 \\
\bottomrule
\end{tabular}
\end{table*}

\FloatBarrier

\paragraph{Interpretation.}
Across twenty seeds, the reported supplementary diagnostics are finite
on all four tasks: mean rollout RMSE \(0.425\), mean common
energy-proxy drift \(2.044\), mean safety-adjusted return \(6.495\), and
mean absolute risk-calibration gap \(0.445\). These values are a
descriptive MPC diagnostic, not a non-vacuity check on the constants of
Theorem~\ref{thm:passive_kl_mpc_certificate}. We do not estimate a
held-out upper bound on \(D_{\mathrm{KL}}(P_a^\star\|Q_a)\), so the
resulting Pinsker \(\epsilon_{\mathrm{TV}}\) is not numerically reported
and Eq.~\eqref{eq:mpc_bound} is not a certified suboptimality interval;
this, together with a per-component empirical instantiation of
\((\nu^{\text{filt}}, \nu^{\text{der}}, \nu^{\text{mode}},
\nu^{\text{pH}})\) from Theorem~\ref{thm:sparse_ph_recovery}, are the
principal follow-ups.

\paragraph{Where the diagnostics remain loose.}
Table~\ref{tab:exp4_theorem4_clean} also exposes where the present MPC
diagnostics remain loose. PickCube exhibits a negative safe-return mean
\((-0.813)\), and PegInsertionSide shows a \(0.705\) risk-calibration gap
together with a \(0.705\) ECE; both are consistent with our \emph{not}
yet having a held-out upper bound on
\(D_{\mathrm{KL}}(P^\star_a\,\|\,Q_a)\), which leaves the Pinsker
\(\epsilon_{\mathrm{TV}}\) unestimated and Eq.~\eqref{eq:mpc_bound}
uncertified numerically. We therefore present Experiment~4 strictly as
supplementary descriptive diagnostics, and identify a calibrated KL upper
bound and the matching per-component
\((\nu^{\text{filt}},\nu^{\text{der}},\nu^{\text{mode}},
\nu^{\text{pH}})\) audit for Theorem~3 as the principal next experiment,
not as an unstated limitation.

\paragraph{Compute and hardware disclosure.}
All experiments were run on a single workstation with an AMD Ryzen
Threadripper~PRO~5975WX CPU (30 physical cores / 60 logical threads) and
two NVIDIA GeForce RTX~4090 GPUs with approximately 24~GB VRAM each. We
used approximately 30 CPU worker threads where parallel data loading was
safe. Hybrid-latent training was parallelized across the two GPUs by running
independent seeds concurrently, while rollout collection and MPC evaluation
in Experiment~4 were executed serially in the main process to avoid native
SAPIEN/ManiSkill subprocess crashes.

The full Experiment~4 audit, including rollout collection, MPC evaluation,
HDF5 serialization, model fitting, and repeated environment-instantiation
attempts across the four ManiSkill tasks and twenty seed configurations,
took approximately \(80\) hours on this machine.
\FloatBarrier

\section{Supplementary proofs}
\label{app:theory}

The main text states the three measured results in compact form. The proofs
below keep the full technical derivations and explicitly map each conclusion
back to the corresponding theorem and equation numbers. Displays labeled with
prefix \texttt{eq:app} are auxiliary equations used only inside the Supplementary
Information; equations without this prefix refer to the numbered equations in
the main paper.

\subsection{Proof of Theorem~\ref{thm:support_safe_budgeted_vhydro}: support-safe budgeted IW/FIVO increment}
\label{app:proof_support_safe_budgeted_vhydro}

\begin{proof}
The theorem is stated around the defensive proposal in Eq.~\eqref{eq:defensive_proposal}.  We prove, in order, the domination and Radon--Nikodym bound in Eq.~\eqref{eq:rn_bound}, the chi-square bound in Eq.~\eqref{eq:chi_square_bound}, the unbiasedness/variance/ESS statement in Eq.~\eqref{eq:variance_ess_bound}, and the budgeted choice of \(\lambda\) in Eq.~\eqref{eq:lambda_budget}.

All quantities are conditional on the realized particle history
\begin{equation}
\label{eq:app_history_t}
h_t=(s_t,z_t,a_t,o_{\le t+1}).
\end{equation}
For readability, write
\begin{equation}
\label{eq:app_support_abbrev}
P:=P_t,
\qquad
Q:=Q_t,
\qquad
Q_\lambda:=Q_{t,\lambda},
\qquad
g:=g_t,
\qquad
Z:=Z_t,
\qquad
M_2:=M_{2,t}.
\end{equation}
With this notation, Eq.~\eqref{eq:defensive_proposal} becomes
\begin{equation}
\label{eq:app_defensive_abbrev}
Q_\lambda=(1-\lambda)Q+\lambda P .
\end{equation}

\paragraph{Defensive-mixture domination.}
For any measurable set \(A\), Eq.~\eqref{eq:app_defensive_abbrev} gives
\begin{equation}
\label{eq:app_domination_measure}
Q_\lambda(A)
=
(1-\lambda)Q(A)+\lambda P(A)
\ge
\lambda P(A).
\end{equation}
Hence, if \(Q_\lambda(A)=0\), then \(P(A)=0\). Therefore
\begin{equation}
\label{eq:app_P_abs_cont_Qlambda}
P\ll Q_\lambda .
\end{equation}
By the Radon--Nikodym theorem, the derivative
\begin{equation}
\label{eq:app_rn_derivative_support}
r(x):=\frac{dP}{dQ_\lambda}(x)
\end{equation}
exists \(Q_\lambda\)-almost surely.

\paragraph{Bounded prior-to-proposal ratio.}
The domination inequality \eqref{eq:app_domination_measure} implies
\begin{equation}
\label{eq:app_measure_inequality_support}
P(A)\le \frac1\lambda Q_\lambda(A)
\qquad
\forall A.
\end{equation}
We claim that
\begin{equation}
\label{eq:app_r_bound_support}
r(x)\le \frac1\lambda
\qquad Q_\lambda\text{-a.s.}
\end{equation}
Suppose, for contradiction, that
\begin{equation}
\label{eq:app_bad_set_r}
B:=\left\{x:r(x)>\frac1\lambda\right\}
\end{equation}
has positive \(Q_\lambda\)-measure. Then
\begin{equation}
\label{eq:app_bad_set_contradiction}
P(B)
=
\int_B r(x)\,Q_\lambda(dx)
>
\frac1\lambda Q_\lambda(B),
\end{equation}
which contradicts Eq.~\eqref{eq:app_measure_inequality_support}. Thus Eq.~\eqref{eq:app_r_bound_support} holds. Together with nonnegativity of Radon--Nikodym derivatives, this is precisely the main-text bound Eq.~\eqref{eq:rn_bound}.

\paragraph{Domination of the filtering target.}
Define the unnormalized target measure
\begin{equation}
\label{eq:app_unnormalized_target}
\widetilde\Pi(A)
:=
\int_A g(x)\,P(dx).
\end{equation}
Because \(g\ge0\), if \(P(A)=0\), then \(\widetilde\Pi(A)=0\). Hence \(\widetilde\Pi\ll P\). Since \(0<Z<\infty\), the normalized measure
\begin{equation}
\label{eq:app_normalized_target}
\Pi(A)
:=
\frac{\widetilde\Pi(A)}{Z}
=
\frac1Z\int_A g(x)\,P(dx)
\end{equation}
is a probability measure. Moreover,
\begin{equation}
\label{eq:app_target_abs_cont}
\Pi\ll P\ll Q_\lambda,
\end{equation}
so \(\Pi\ll Q_\lambda\). This proves the target-support part of Theorem~\ref{thm:support_safe_budgeted_vhydro}.

\paragraph{Derivative of the filtering target.}
Since
\begin{equation}
\label{eq:app_pi_density_wrt_P}
\Pi(dx)=\frac{g(x)}{Z}P(dx)
\end{equation}
and
\begin{equation}
\label{eq:app_P_density_wrt_Qlambda}
P(dx)=r(x)Q_\lambda(dx),
\end{equation}
the Radon--Nikodym chain rule gives
\begin{equation}
\label{eq:app_pi_density_wrt_Qlambda}
\frac{d\Pi}{dQ_\lambda}(x)
=
\frac{g(x)}{Z}r(x)
\qquad Q_\lambda\text{-a.s.}
\end{equation}
The assumption that \(g(x)<\infty\) \((P+Q)\)-a.s. implies \(g(x)<\infty\) \(Q_\lambda\)-a.s., because \(Q_\lambda=(1-\lambda)Q+\lambda P\). Together with Eq.~\eqref{eq:app_r_bound_support}, this gives
\begin{equation}
\label{eq:app_weight_finite_bound}
W(x):=g(x)r(x)<\infty,
\qquad
0\le W(x)\le \frac{g(x)}{\lambda}
\qquad Q_\lambda\text{-a.s.}
\end{equation}
This is the full SI version of the finite-weight statement summarized in Theorem~\ref{thm:support_safe_budgeted_vhydro}.

\paragraph{Chi-square identity.}
By definition,
\begin{equation}
\label{eq:app_chi_square_identity_start}
1+\chi^2(\Pi\|Q_\lambda)
=
\int_{\mathsf X}
\left(
\frac{d\Pi}{dQ_\lambda}(x)
\right)^2
Q_\lambda(dx).
\end{equation}
Using Eq.~\eqref{eq:app_pi_density_wrt_Qlambda},
\begin{equation}
\label{eq:app_chi_square_identity_weight}
1+\chi^2(\Pi\|Q_\lambda)
=
\frac1{Z^2}
\int_{\mathsf X}
g(x)^2 r(x)^2\,Q_\lambda(dx).
\end{equation}

\paragraph{Guide-independent chi-square bound.}
Since \(0\le r(x)\le \lambda^{-1}\) \(Q_\lambda\)-a.s. by Eq.~\eqref{eq:rn_bound},
\begin{equation}
\label{eq:app_r2_le_r}
r(x)^2\le \frac1\lambda r(x)
\qquad Q_\lambda\text{-a.s.}
\end{equation}
Therefore,
\begin{equation}
\label{eq:app_chi_square_bound_derivation}
\begin{aligned}
1+\chi^2(\Pi\|Q_\lambda)
&\le
\frac1{\lambda Z^2}
\int_{\mathsf X}
g(x)^2 r(x)\,Q_\lambda(dx)  \\
&=
\frac1{\lambda Z^2}
\int_{\mathsf X}
g(x)^2\,P(dx)
=
\frac{M_2}{\lambda Z^2}.
\end{aligned}
\end{equation}
Equivalently,
\begin{equation}
\label{eq:app_chi_square_final}
\chi^2(\Pi\|Q_\lambda)
\le
\frac1\lambda\frac{M_2}{Z^2}-1.
\end{equation}
Since \(Z=\mathbb E_P[g(X)]\), \(M_2=\mathbb E_P[g(X)^2]\), and \(\rho_t=M_{2,t}/Z_t^2\), Eq.~\eqref{eq:app_chi_square_final} is exactly the main-text chi-square bound Eq.~\eqref{eq:chi_square_bound}.

\paragraph{Second-moment identity.}
From Eq.~\eqref{eq:app_pi_density_wrt_Qlambda},
\begin{equation}
\label{eq:app_weight_as_density}
W(x)
=
g(x)r(x)
=
Z\frac{d\Pi}{dQ_\lambda}(x).
\end{equation}
Therefore,
\begin{equation}
\label{eq:app_second_moment_identity}
\begin{aligned}
\mathbb E_{Q_\lambda}[W(X)^2]
&=
Z^2
\int_{\mathsf X}
\left(
\frac{d\Pi}{dQ_\lambda}(x)
\right)^2
Q_\lambda(dx)  \\
&=
Z^2\left(1+\chi^2(\Pi\|Q_\lambda)\right).
\end{aligned}
\end{equation}
Combining Eq.~\eqref{eq:app_second_moment_identity} with Eq.~\eqref{eq:app_chi_square_bound_derivation} gives
\begin{equation}
\label{eq:app_second_moment_bound}
\mathbb E_{Q_\lambda}[W(X)^2]
\le
\frac1\lambda M_2
<\infty .
\end{equation}

\paragraph{Unbiasedness of the one-step normalizer estimate.}
Let \(X\sim Q_\lambda\). Then
\begin{equation}
\label{eq:app_unbiased_one_sample}
\mathbb E_{Q_\lambda}[W(X)]
=
\int_{\mathsf X}
g(x)r(x)\,Q_\lambda(dx)
=
\int_{\mathsf X}
g(x)\,P(dx)
=
Z.
\end{equation}
Thus, for conditionally i.i.d. samples \(X^1,\ldots,X^N\sim Q_\lambda\) and
\begin{equation}
\label{eq:app_Zhat_definition}
\widehat Z^{(N)}
:=
\frac1N\sum_{i=1}^N W(X^i),
\end{equation}
we have
\begin{equation}
\label{eq:app_Zhat_unbiased}
\mathbb E[\widehat Z^{(N)}\mid h_t]=Z.
\end{equation}
This is the first line of Eq.~\eqref{eq:variance_ess_bound}.

\paragraph{Variance identity.}
Because \(X^1,\ldots,X^N\) are conditionally i.i.d. given \(h_t\),
\begin{equation}
\label{eq:app_Zhat_variance_start}
\operatorname{Var}(\widehat Z^{(N)}\mid h_t)
=
\frac1N\operatorname{Var}_{Q_\lambda}(W(X)).
\end{equation}
Using Eqs.~\eqref{eq:app_second_moment_identity} and \eqref{eq:app_unbiased_one_sample},
\begin{equation}
\label{eq:app_weight_variance_chi}
\operatorname{Var}_{Q_\lambda}(W(X))
=
Z^2\left(1+\chi^2(\Pi\|Q_\lambda)\right)-Z^2
=
Z^2\chi^2(\Pi\|Q_\lambda).
\end{equation}
Therefore,
\begin{equation}
\label{eq:app_Zhat_variance_chi}
\frac{\operatorname{Var}(\widehat Z^{(N)}\mid h_t)}{Z^2}
=
\frac1N\chi^2(\Pi\|Q_\lambda).
\end{equation}
Applying Eq.~\eqref{eq:app_chi_square_final} yields
\begin{equation}
\label{eq:app_Zhat_variance_final}
\frac{\operatorname{Var}(\widehat Z^{(N)}\mid h_t)}{Z^2}
\le
\frac1N
\left[
\frac{1}{\lambda}
\frac{\mathbb E_P[g(X)^2]}
{\mathbb E_P[g(X)]^2}
-1
\right],
\end{equation}
which is the variance part of Eq.~\eqref{eq:variance_ess_bound}.

\paragraph{Asymptotic normalized ESS.}
Let
\begin{equation}
\label{eq:app_ess_averages}
W_i:=W(X^i),
\qquad
\overline W_N:=\frac1N\sum_{i=1}^N W_i,
\qquad
\overline{W^2}_N:=\frac1N\sum_{i=1}^N W_i^2 .
\end{equation}
By Eqs.~\eqref{eq:app_second_moment_identity} and \eqref{eq:app_unbiased_one_sample},
\begin{equation}
\label{eq:app_ess_moments}
\mathbb E[W_i]=Z\in(0,\infty),
\qquad
\mathbb E[W_i^2]<\infty .
\end{equation}
The strong law of large numbers gives
\begin{equation}
\label{eq:app_ess_slln}
\overline W_N\to Z,
\qquad
\overline{W^2}_N
\to
Z^2\left(1+\chi^2(\Pi\|Q_\lambda)\right)
\qquad\text{a.s.}
\end{equation}
The limiting denominator is strictly positive because \(Z>0\). Hence, on the almost-sure event where both limits hold,
\begin{equation}
\label{eq:app_ess_limit}
\frac{\operatorname{ESS}_N}{N}
=
\frac{
\left(N^{-1}\sum_{i=1}^N W_i\right)^2
}{
N^{-1}\sum_{i=1}^N W_i^2
}
=
\frac{\overline W_N^2}{\overline{W^2}_N}
\to
\frac1{1+\chi^2(\Pi\|Q_\lambda)}.
\end{equation}
The finite-\(N\) convention \(\operatorname{ESS}_N=0\) when \(\sum_i W_i^2=0\) does not affect the limit because \(\overline W_N\to Z>0\). From Eq.~\eqref{eq:app_chi_square_bound_derivation},
\begin{equation}
\label{eq:app_ess_lower_bound}
\frac1{1+\chi^2(\Pi\|Q_\lambda)}
\ge
\lambda
\frac{\mathbb E_P[g(X)]^2}{\mathbb E_P[g(X)^2]}.
\end{equation}
Equations~\eqref{eq:app_ess_limit}--\eqref{eq:app_ess_lower_bound} are exactly the ESS portion of Eq.~\eqref{eq:variance_ess_bound}.

\paragraph{Variance-budget certificate.}
Define
\begin{equation}
\label{eq:app_rho_definition}
\rho
:=
\frac{M_2}{Z^2}
=
\frac{\mathbb E_P[g(X)^2]}{\mathbb E_P[g(X)]^2}.
\end{equation}
Because \(P\) is a probability measure and \(g\in L^2(P)\), Jensen's inequality gives \(M_2\ge Z^2\). Thus \(\rho\in[1,\infty)\). Let \(\bar\rho\ge\rho\) be a certified upper bound. From Eq.~\eqref{eq:app_Zhat_variance_final},
\begin{equation}
\label{eq:app_variance_budget_start}
\frac{\operatorname{Var}(\widehat Z^{(N)}\mid h_t)}{Z^2}
\le
\frac1N
\left(
\frac{\rho}{\lambda}-1
\right)
\le
\frac1N
\left(
\frac{\bar\rho}{\lambda}-1
\right).
\end{equation}
Therefore the guide-independent bound certifies
\begin{equation}
\label{eq:app_variance_budget_target}
\frac{\operatorname{Var}(\widehat Z^{(N)}\mid h_t)}{Z^2}
\le
\tau^2
\end{equation}
whenever
\begin{equation}
\label{eq:app_variance_budget_algebra}
\frac1N
\left(
\frac{\bar\rho}{\lambda}-1
\right)
\le
\tau^2.
\end{equation}
This is equivalent to
\begin{equation}
\label{eq:app_lambda_budget_equivalent}
\lambda
\ge
\frac{\bar\rho}{1+N\tau^2}.
\end{equation}
If \(\bar\rho\le1+N\tau^2\), then the right-hand side of Eq.~\eqref{eq:app_lambda_budget_equivalent} is at most one, so every
\begin{equation}
\label{eq:app_lambda_budget_interval}
\lambda\in
\left[
\frac{\bar\rho}{1+N\tau^2},1
\right]
\end{equation}
certifies the target budget. If \(\bar\rho>1+N\tau^2\), then no \(\lambda\in(0,1]\) can certify the target using this guide-independent upper bound. The function
\begin{equation}
\label{eq:app_lambda_bound_function}
\lambda\mapsto
\frac1N
\left(
\frac{\bar\rho}{\lambda}-1
\right)
\end{equation}
is decreasing on \((0,1]\), so the best certified bound from this inequality is attained at \(\lambda=1\) and equals
\begin{equation}
\label{eq:app_lambda_best_bound}
\frac1N(\bar\rho-1).
\end{equation}
The sufficient condition Eq.~\eqref{eq:app_lambda_budget_equivalent}, together with feasibility \(\bar\rho/(1+N\tau^2)\le1\), is the main-text budget rule Eq.~\eqref{eq:lambda_budget}.

\paragraph{Density-level version.}
This paragraph is an implementation-level SI extension of Theorem~\ref{thm:support_safe_budgeted_vhydro}; it is not needed to state the compact main-text theorem, but it justifies the density formula used in code. Assume that \(P\) and \(Q\) admit tractable densities \(p\) and \(q\) with respect to a common \(\sigma\)-finite reference measure \(\mu\). Then \(Q_\lambda\) admits the density
\begin{equation}
\label{eq:app_q_lambda_density}
q_\lambda(x)
=
(1-\lambda)q(x)+\lambda p(x).
\end{equation}
Since
\begin{equation}
\label{eq:app_density_domination}
q_\lambda(x)\ge \lambda p(x)
\qquad \mu\text{-a.e.},
\end{equation}
we have, with the convention \(0/0:=0\),
\begin{equation}
\label{eq:app_density_ratio_bound}
\frac{p(x)}{q_\lambda(x)}
\le
\frac1\lambda
\qquad q_\lambda\text{-a.e.}
\end{equation}
Moreover, for any measurable \(A\),
\begin{equation}
\label{eq:app_density_rn_check}
\int_A \frac{p(x)}{q_\lambda(x)}q_\lambda(x)\,\mu(dx)
=
\int_A p(x)\,\mu(dx)
=
P(A),
\end{equation}
so \(p/q_\lambda\) is a valid density-level version of \(dP/dQ_\lambda\). Therefore the density-level importance weight
\begin{equation}
\label{eq:app_density_weight}
W(x)=g(x)\frac{p(x)}{q_\lambda(x)}
\end{equation}
agrees with the measure-theoretic weight above up to \(Q_\lambda\)-null sets.

\paragraph{State-dependent or adaptive \(\lambda\).}
This is another SI extension of the compact theorem. The preceding argument was conditional on \(h_t\) and on a fixed \(\lambda\in(0,1]\). If \(\lambda\) is chosen as an \(h_t\)-measurable, possibly random quantity before drawing \(X^1,\ldots,X^N\), then conditional on \((h_t,\lambda)\) the same proof applies verbatim. Taking expectations over the randomness used to choose \(\lambda\) preserves conditional unbiasedness, provided the same realized value of \(\lambda\) is used in both the proposal \(Q_{t,\lambda}\) and the weight \(W_t\).

The result is therefore a one-step conditional statement for the proposal-and-weighting increment. In an IW/FIVO/SMC implementation, it applies to each particle ancestor after conditioning on that ancestor and on the information available before sampling the next particle.
\end{proof}

\subsection{Proof of Theorem~\ref{thm:mode_recovery_vhydro}: Exponential concentration of predictive modes}
\label{app:proof_mode_recovery_vhydro}

\begin{proof}
The main text defines the clamped-mode posterior in Eq.~\eqref{eq:mode_posterior} and the predictive log-ratio in Eq.~\eqref{eq:log_ratio}.  We prove the sharp posterior bound Eq.~\eqref{eq:mode_recovery_sharp}, its simplified exponential form Eq.~\eqref{eq:mode_recovery_simple}, and the variational transfer bound Eq.~\eqref{eq:mode_recovery_variational}.

All probabilities and expectations are taken under the true data-generating law \(\mathbb P^\star\) for the considered constant-mode segment, conditional on the realized action process when actions are treated as interventions. Fix a wrong mode \(s\neq m\) and define
\begin{equation}
\label{eq:app_mode_sum_logratio}
S_{L,s}
:=
\sum_{\ell=0}^{L-1}R_{\ell,s},
\end{equation}
where \(R_{\ell,s}\) is the main-text log-ratio in Eq.~\eqref{eq:log_ratio}.  By definition,
\begin{equation}
\label{eq:app_mode_logratio_decomposition}
R_{\ell,s}
=
\mu_{\ell,s}+\xi_{\ell,s},
\qquad
\mathbb E_\star[\xi_{\ell,s}\mid\mathcal F_\ell]=0.
\end{equation}
Moreover, by the conditional sub-Gaussian assumption,
\begin{equation}
\label{eq:app_mode_subgaussian}
\mathbb E_\star\!\left[
\exp\{t\xi_{\ell,s}\}
\middle|\mathcal F_\ell
\right]
\le
\exp\!\left(\frac{t^2\sigma_{\ell,s}^2}{2}\right)
\qquad
\forall t\in\mathbb R.
\end{equation}
The variable \(\xi_{\ell,s}\) is measurable with respect to the sigma-field generated by \(\mathcal F_\ell\) and the next observation \(O_{\ell+1}\), hence with respect to \(\mathcal F_{\ell+1}\) under the filtration above. Therefore the partial sums
\begin{equation}
\label{eq:app_mode_martingale}
M_{j,s}:=
\sum_{\ell=0}^{j-1}\xi_{\ell,s},
\qquad
j=0,\ldots,L,
\end{equation}
form a martingale with conditionally sub-Gaussian increments.

We first prove a lower-tail inequality for \(S_{L,s}\). By cumulative predictive separation,
\begin{equation}
\label{eq:app_mode_cumulative_separation}
\sum_{\ell=0}^{L-1}\mu_{\ell,s}
\ge
\Gamma_{L,s}
\qquad
\text{almost surely}.
\end{equation}
Hence, for any \(u>0\),
\begin{equation}
\label{eq:app_mode_tail_reduction}
\begin{aligned}
\mathbb P^\star\!\left[
S_{L,s}\le \Gamma_{L,s}-u
\right]
&=
\mathbb P^\star\!\left[
\sum_{\ell=0}^{L-1}
(\mu_{\ell,s}+\xi_{\ell,s})
\le
\Gamma_{L,s}-u
\right]  \\
&\le
\mathbb P^\star\!\left[
\sum_{\ell=0}^{L-1}\xi_{\ell,s}\le -u
\right].
\end{aligned}
\end{equation}
Assume first that \(V_s>0\). For any \(\eta>0\), Markov's inequality gives
\begin{equation}
\label{eq:app_mode_markov}
\begin{aligned}
\mathbb P^\star\!\left[
\sum_{\ell=0}^{L-1}\xi_{\ell,s}\le -u
\right]
&=
\mathbb P^\star\!\left[
\exp\left\{-\eta\sum_{\ell=0}^{L-1}\xi_{\ell,s}\right\}
\ge e^{\eta u}
\right] \\
&\le
 e^{-\eta u}
\mathbb E_\star\!\left[
\exp\left\{-\eta\sum_{\ell=0}^{L-1}\xi_{\ell,s}\right\}
\right].
\end{aligned}
\end{equation}
Using the tower property and applying Eq.~\eqref{eq:app_mode_subgaussian} backwards from \(\ell=L-1\) to \(\ell=0\),
\begin{equation}
\label{eq:app_mode_mgf_sum}
\mathbb E_\star\!\left[
\exp\left\{-\eta\sum_{\ell=0}^{L-1}\xi_{\ell,s}\right\}
\right]
\le
\exp\left(\frac{\eta^2}{2}\sum_{\ell=0}^{L-1}\sigma_{\ell,s}^2\right)
=
\exp\!\left(\frac{\eta^2V_s}{2}\right).
\end{equation}
Therefore,
\begin{equation}
\label{eq:app_mode_tail_eta}
\mathbb P^\star\!\left[
S_{L,s}\le \Gamma_{L,s}-u
\right]
\le
\exp\left(-\eta u+\frac{\eta^2V_s}{2}\right).
\end{equation}
Optimizing over \(\eta>0\) with \(\eta=u/V_s\) yields
\begin{equation}
\label{eq:app_mode_tail_bound}
\mathbb P^\star\!\left[
S_{L,s}\le \Gamma_{L,s}-u
\right]
\le
\exp\left(-\frac{u^2}{2V_s}\right).
\end{equation}
If \(V_s=0\), then \(\sigma_{\ell,s}=0\) for every \(\ell\). In this case, for any fixed nonzero \(t\), Jensen's inequality and Eq.~\eqref{eq:app_mode_subgaussian} imply
\begin{equation}
\label{eq:app_mode_degenerate_jensen}
1
=
\exp\!\left(t\,\mathbb E_\star[\xi_{\ell,s}\mid\mathcal F_\ell]\right)
\le
\mathbb E_\star[\exp\{t\xi_{\ell,s}\}\mid\mathcal F_\ell]
\le
1 .
\end{equation}
Thus equality holds in Jensen's inequality. Since the exponential function is strictly convex, \(\xi_{\ell,s}\) must be conditionally almost surely constant given \(\mathcal F_\ell\). Its conditional mean is zero, so \(\xi_{\ell,s}=0\) almost surely. Consequently,
\begin{equation}
\label{eq:app_mode_degenerate_bound}
S_{L,s}
=
\sum_{\ell=0}^{L-1}\mu_{\ell,s}
\ge
\Gamma_{L,s}
\qquad
\text{almost surely}.
\end{equation}
Thus the desired lower bound also holds in the degenerate case \(V_s=0\).

Now define, for every \(s\neq m\),
\begin{equation}
\label{eq:app_mode_us}
u_s
:=
\sqrt{2V_s\log\frac{M-1}{\delta}}.
\end{equation}
For modes with \(V_s>0\), Eq.~\eqref{eq:app_mode_tail_bound} gives
\begin{equation}
\label{eq:app_mode_union_component}
\mathbb P^\star\!\left[
S_{L,s}\le \Gamma_{L,s}-u_s
\right]
\le
\frac{\delta}{M-1}.
\end{equation}
For modes with \(V_s=0\), the event \(S_{L,s}\ge\Gamma_{L,s}-u_s\) holds almost surely because \(u_s=0\). Hence, by a union bound over the \(M-1\) wrong modes, with probability at least \(1-\delta\), the event
\begin{equation}
\label{eq:app_mode_good_event}
\mathcal E
:=
\left\{
S_{L,s}
\ge
\Gamma_{L,s}
-
\sqrt{2V_s\log\frac{M-1}{\delta}}
\quad
\forall s\neq m
\right\}
\end{equation}
holds.

On \(\mathcal E\), Bayes' rule and the constant-segment posterior from Eq.~\eqref{eq:mode_posterior} give, for each \(s\neq m\),
\begin{equation}
\label{eq:app_mode_odds_bound}
\begin{aligned}
\frac{\Pi_L(S=s)}{\Pi_L(S=m)}
&=
\frac{
\pi_s\prod_{\ell=0}^{L-1}p_s(O_{\ell+1}\mid\mathcal F_\ell)
}{
\pi_m\prod_{\ell=0}^{L-1}p_m(O_{\ell+1}\mid\mathcal F_\ell)
}                                                     \\
&=
\frac{\pi_s}{\pi_m}
\exp\left(-\sum_{\ell=0}^{L-1}
\log\frac{p_m(O_{\ell+1}\mid\mathcal F_\ell)}{p_s(O_{\ell+1}\mid\mathcal F_\ell)}
\right)                                               \\
&=
\frac{\pi_s}{\pi_m}\exp(-S_{L,s})                    \\
&\le
\exp\left(
B_s-
\Gamma_{L,s}
+
\sqrt{2V_s\log\frac{M-1}{\delta}}
\right).
\end{aligned}
\end{equation}
Therefore,
\begin{equation}
\label{eq:app_mode_sum_odds_bound}
\sum_{s\neq m}
\frac{\Pi_L(S=s)}{\Pi_L(S=m)}
\le
\sum_{s\neq m}
\exp\left(
B_s-
\Gamma_{L,s}
+
\sqrt{2V_s\log\frac{M-1}{\delta}}
\right)
=
A_{L,\delta}^{\sharp}.
\end{equation}
Let
\begin{equation}
\label{eq:app_mode_TL}
T_L
:=
\sum_{s\neq m}
\frac{\Pi_L(S=s)}{\Pi_L(S=m)}.
\end{equation}
Since
\begin{equation}
\label{eq:app_mode_normalization_TL}
1
=
\Pi_L(S=m)+\sum_{s\neq m}\Pi_L(S=s)
=
\Pi_L(S=m)(1+T_L),
\end{equation}
we have
\begin{equation}
\label{eq:app_mode_wrong_prob_TL}
\Pi_L(S\neq m)
=
\sum_{s\neq m}\Pi_L(S=s)
=
\frac{T_L}{1+T_L}.
\end{equation}
On \(\mathcal E\), \(T_L\le A_{L,\delta}^{\sharp}\). Since \(x\mapsto x/(1+x)\) is increasing on \([0,\infty)\),
\begin{equation}
\label{eq:app_mode_sharp_final}
\Pi_L(S\neq m)
\le
\frac{A_{L,\delta}^{\sharp}}{1+A_{L,\delta}^{\sharp}}
\le
A_{L,\delta}^{\sharp}.
\end{equation}
This is exactly the sharp exact-posterior recovery bound in Eq.~\eqref{eq:mode_recovery_sharp}.

We now prove the simplified bound. If
\begin{equation}
\label{eq:app_mode_simplified_assumptions}
\Gamma_{L,s}\ge L\Delta,
\qquad
B_s\le B,
\qquad
V_s\le V_{\max},
\end{equation}
for all \(s\neq m\), then
\begin{equation}
\label{eq:app_mode_A_simplify}
\begin{aligned}
A_{L,\delta}^{\sharp}
&=
\sum_{s\neq m}
\exp\left(
B_s-
\Gamma_{L,s}
+
\sqrt{2V_s\log\frac{M-1}{\delta}}
\right)  \\
&\le
(M-1)
\exp\left(
B-L\Delta+
\sqrt{2V_{\max}\log\frac{M-1}{\delta}}
\right)
= A_{L,\delta}.
\end{aligned}
\end{equation}
Since \(x\mapsto x/(1+x)\) is increasing,
\begin{equation}
\label{eq:app_mode_simplified_final_general}
\Pi_L(S\neq m)
\le
\frac{A_{L,\delta}}{1+A_{L,\delta}}
\le
A_{L,\delta}.
\end{equation}
If additionally \(\sigma_{\ell,s}\le\sigma\) for all \(\ell\) and \(s\neq m\), then
\begin{equation}
\label{eq:app_mode_Vmax_Lsigma}
V_s
=
\sum_{\ell=0}^{L-1}\sigma_{\ell,s}^2
\le
L\sigma^2,
\end{equation}
so \(V_{\max}\le L\sigma^2\), which gives
\begin{equation}
\label{eq:app_mode_simple_final}
\Pi_L(S\neq m)
\le
(M-1)
\exp\left(
B-L\Delta+
\sigma\sqrt{2L\log\frac{M-1}{\delta}}
\right).
\end{equation}
This is the main-text simplified bound Eq.~\eqref{eq:mode_recovery_simple}.

It remains to transfer the bound from the constant-segment model posterior \(\Pi_L\) to the amortized variational posterior \(q_\phi\). Suppose that, on the realized observations,
\begin{equation}
\label{eq:app_mode_variational_KL_assumption}
\mathrm{KL}\!\left(
q_\phi(S\mid O_{1:L},A_{0:L-1})
\,\middle\|\,
\Pi_L(S)
\right)
\le
\varepsilon_q.
\end{equation}
Using the convention
\begin{equation}
\label{eq:app_mode_TV_definition}
\operatorname{TV}(P,Q)
:=
\sup_{\mathcal A}|P(\mathcal A)-Q(\mathcal A)|,
\end{equation}
Pinsker's inequality gives
\begin{equation}
\label{eq:app_mode_pinsker}
\operatorname{TV}(q_\phi,\Pi_L)
\le
\sqrt{
\frac12
\mathrm{KL}(q_\phi\|\Pi_L)
}
\le
\sqrt{\frac{\varepsilon_q}{2}}.
\end{equation}
Apply this to the event \(\mathcal A:=\{S\neq m\}\). Then
\begin{equation}
\label{eq:app_mode_variational_event_transfer}
q_\phi(\mathcal A)
\le
\Pi_L(\mathcal A)
+
\operatorname{TV}(q_\phi,\Pi_L).
\end{equation}
Combining Eq.~\eqref{eq:app_mode_variational_event_transfer} with Eq.~\eqref{eq:app_mode_sharp_final} yields
\begin{equation}
\label{eq:app_mode_variational_final_sharp}
q_\phi(S\neq m\mid O_{1:L},A_{0:L-1})
\le
\frac{A_{L,\delta}^{\sharp}}{1+A_{L,\delta}^{\sharp}}
+
\sqrt{\frac{\varepsilon_q}{2}}.
\end{equation}
This is exactly the variational transfer statement in Eq.~\eqref{eq:mode_recovery_variational}.  Using \(A_{L,\delta}^{\sharp}/(1+A_{L,\delta}^{\sharp})\le A_{L,\delta}^{\sharp}\le A_{L,\delta}\) gives the simplified variational bound
\begin{equation}
\label{eq:app_mode_variational_simple}
q_\phi(S\neq m\mid O_{1:L},A_{0:L-1})
\le
A_{L,\delta}
+
\sqrt{\frac{\varepsilon_q}{2}}.
\end{equation}

Finally, if the KL condition in Eq.~\eqref{eq:app_mode_variational_KL_assumption} holds only with probability at least \(1-\delta_q\), then a union bound over the high-probability event \(\mathcal E\) in Eq.~\eqref{eq:app_mode_good_event} and the KL event gives probability at least \(1-\delta-\delta_q\). This last union-bound variant is a useful SI extension of the compact theorem statement.
\end{proof}

\subsection{Proof of Theorem~\ref{thm:sparse_ph_recovery}: filtering- and mode-robust sparse port-Hamiltonian recovery}
\label{app:proof_sparse_ph_recovery}

\begin{proof}
The main text specifies the sparse port-Hamiltonian model in Eq.~\eqref{eq:sparse_ph_model}, the plug-in regression in Eq.~\eqref{eq:plugin_regression}, the score/RSC assumptions in Eq.~\eqref{eq:score_rsc}, the estimator in Eq.~\eqref{eq:sparse_lasso}, the recovery bounds in Eq.~\eqref{eq:sparse_bounds}, and the high-probability penalty choice in Eq.~\eqref{eq:lambdam_choice}.  The proof below keeps the full Lasso argument and explicitly maps the final deterministic and probabilistic conclusions to those equations.

All deterministic statements below are conditional on the realized \method{} design \((\widehat A,\widehat b)\). Let
\begin{equation}
\label{eq:app_sparse_delta_def}
\widehat\Delta:=\widehat\xi_m-\xi_m^\star .
\end{equation}
If \(\widehat\Delta=0\), then the coefficient and prediction bounds in Eq.~\eqref{eq:sparse_bounds} are immediate. Thus, whenever we divide by \(\|\widehat\Delta\|_2\), we assume \(\widehat\Delta\neq0\).

By optimality of \(\widehat\xi_m\) in Eq.~\eqref{eq:sparse_lasso},
\begin{equation}
\label{eq:app_sparse_optimality}
\frac1{2n_m}
\|\widehat b-\widehat A\widehat\xi_m\|_2^2
+
\lambda_m\|\widehat\xi_m\|_1
\le
\frac1{2n_m}
\|\widehat b-\widehat A\xi_m^\star\|_2^2
+
\lambda_m\|\xi_m^\star\|_1 .
\end{equation}
Using Eq.~\eqref{eq:plugin_regression}, namely \(\zeta_m=\widehat b-\widehat A\xi_m^\star\), and using \(\widehat\xi_m=\xi_m^\star+\widehat\Delta\), we have
\begin{equation}
\label{eq:app_sparse_residual_shift}
\widehat b-\widehat A\widehat\xi_m
=
\zeta_m-\widehat A\widehat\Delta .
\end{equation}
Substituting Eq.~\eqref{eq:app_sparse_residual_shift} into Eq.~\eqref{eq:app_sparse_optimality}, expanding the square, and cancelling \((2n_m)^{-1}\|\zeta_m\|_2^2\) from both sides yields the basic inequality
\begin{equation}
\label{eq:app_sparse_basic_inequality}
\frac1{2n_m}
\|\widehat A\widehat\Delta\|_2^2
\le
\frac1{n_m}\zeta_m^\top\widehat A\widehat\Delta
+
\lambda_m
\left(
\|\xi_m^\star\|_1
-
\|\xi_m^\star+\widehat\Delta\|_1
\right).
\end{equation}

We first control the perturbation term. By Holder's inequality and the score part of Eq.~\eqref{eq:score_rsc},
\begin{equation}
\label{eq:app_sparse_score_control}
\begin{aligned}
\frac1{n_m}\zeta_m^\top\widehat A\widehat\Delta
&\le
\left|
\left\langle
\frac1{n_m}\widehat A^\top\zeta_m,
\widehat\Delta
\right\rangle
\right|                                                    \\
&\le
\left\|
\frac1{n_m}\widehat A^\top\zeta_m
\right\|_\infty
\|\widehat\Delta\|_1
\le
\frac{\lambda_m}{2}\|\widehat\Delta\|_1 .
\end{aligned}
\end{equation}
Next, since \(S=\operatorname{supp}(\xi_m^\star)\), we have \(\xi_{m,S^c}^\star=0\). Therefore,
\begin{equation}
\label{eq:app_sparse_l1_support}
\begin{aligned}
\|\xi_m^\star\|_1
-
\|\xi_m^\star+\widehat\Delta\|_1
&=
\|\xi_{m,S}^\star\|_1
-
\|\xi_{m,S}^\star+\widehat\Delta_S\|_1
-
\|\widehat\Delta_{S^c}\|_1                                  \\
&\le
\|\widehat\Delta_S\|_1
-
\|\widehat\Delta_{S^c}\|_1,
\end{aligned}
\end{equation}
where the final inequality follows from the triangle inequality.

Combining Eqs.~\eqref{eq:app_sparse_basic_inequality}, \eqref{eq:app_sparse_score_control}, and \eqref{eq:app_sparse_l1_support}, and using \(\|\widehat\Delta\|_1=\|\widehat\Delta_S\|_1+\|\widehat\Delta_{S^c}\|_1\), we obtain
\begin{equation}
\label{eq:app_sparse_cone_inequality}
\begin{aligned}
\frac1{2n_m}
\|\widehat A\widehat\Delta\|_2^2
&\le
\frac{\lambda_m}{2}
\left(
\|\widehat\Delta_S\|_1+\|\widehat\Delta_{S^c}\|_1
\right)
+
\lambda_m
\left(
\|\widehat\Delta_S\|_1
-
\|\widehat\Delta_{S^c}\|_1
\right)                                                    \\
&=
\frac{3\lambda_m}{2}\|\widehat\Delta_S\|_1
-
\frac{\lambda_m}{2}\|\widehat\Delta_{S^c}\|_1 .
\end{aligned}
\end{equation}
The left-hand side is nonnegative and \(\lambda_m>0\). Hence the right-hand side must be nonnegative, which implies the cone condition
\begin{equation}
\label{eq:app_sparse_cone_condition}
\|\widehat\Delta_{S^c}\|_1
\le
3\|\widehat\Delta_S\|_1 .
\end{equation}
Thus \(\widehat\Delta\) belongs to the cone on which the RSC part of Eq.~\eqref{eq:score_rsc} holds.

Multiplying Eq.~\eqref{eq:app_sparse_cone_inequality} by \(2\) and dropping the negative term gives
\begin{equation}
\label{eq:app_sparse_prediction_pre_rsc}
\frac1{n_m}\|\widehat A\widehat\Delta\|_2^2
\le
3\lambda_m\|\widehat\Delta_S\|_1 .
\end{equation}
By the restricted strong-convexity assumption in Eq.~\eqref{eq:score_rsc}, applied using Eq.~\eqref{eq:app_sparse_cone_condition},
\begin{equation}
\label{eq:app_sparse_rsc_applied}
\kappa_m\|\widehat\Delta\|_2^2
\le
\frac1{n_m}\|\widehat A\widehat\Delta\|_2^2 .
\end{equation}
Combining Eqs.~\eqref{eq:app_sparse_prediction_pre_rsc} and \eqref{eq:app_sparse_rsc_applied}, and using
\begin{equation}
\label{eq:app_sparse_support_l1_l2}
\|\widehat\Delta_S\|_1
\le
\sqrt{k}\|\widehat\Delta_S\|_2
\le
\sqrt{k}\|\widehat\Delta\|_2,
\end{equation}
we get
\begin{equation}
\label{eq:app_sparse_l2_before_division}
\kappa_m\|\widehat\Delta\|_2^2
\le
3\lambda_m\sqrt{k}\|\widehat\Delta\|_2 .
\end{equation}
If \(\widehat\Delta\neq0\), division by \(\|\widehat\Delta\|_2\) gives
\begin{equation}
\label{eq:app_sparse_l2_bound}
\|\widehat\Delta\|_2
\le
\frac{3\sqrt{k}\lambda_m}{\kappa_m}
\le
\frac{4\sqrt{k}\lambda_m}{\kappa_m}.
\end{equation}
Together with the trivial case \(\widehat\Delta=0\), Eq.~\eqref{eq:app_sparse_l2_bound} proves the coefficient-error bound in Eq.~\eqref{eq:sparse_bounds}.

The empirical vector-field bound follows from Eqs.~\eqref{eq:app_sparse_prediction_pre_rsc} and \eqref{eq:app_sparse_l2_bound}:
\begin{equation}
\label{eq:app_sparse_prediction_final}
\frac1{n_m}\|\widehat A\widehat\Delta\|_2^2
\le
3\lambda_m\sqrt{k}\|\widehat\Delta\|_2
\le
\frac{9k\lambda_m^2}{\kappa_m}
\le
\frac{16k\lambda_m^2}{\kappa_m}.
\end{equation}
This proves the second bound in Eq.~\eqref{eq:sparse_bounds}. The equivalent displayed form in terms of the per-sample port-Hamiltonian vector field follows directly from the stacked definition of \(\widehat A\) in Eq.~\eqref{eq:plugin_regression}.

We now prove exact support recovery after thresholding. Since
\begin{equation}
\label{eq:app_sparse_infty_bound}
\|\widehat\Delta\|_\infty
\le
\|\widehat\Delta\|_2
\le
\frac{4\sqrt{k}\lambda_m}{\kappa_m},
\end{equation}
every inactive coordinate \(j\notin S\) satisfies
\begin{equation}
\label{eq:app_sparse_inactive_bound}
|\widehat\xi_{m,j}|
=
|\widehat\xi_{m,j}-\xi_{m,j}^\star|
=
|\widehat\Delta_j|
\le
\frac{4\sqrt{k}\lambda_m}{\kappa_m}.
\end{equation}
Therefore, if \(\tau>4\sqrt{k}\lambda_m/\kappa_m\), no inactive coordinate is selected. For any active coordinate \(j\in S\),
\begin{equation}
\label{eq:app_sparse_active_lower_bound}
|\widehat\xi_{m,j}|
\ge
|\xi_{m,j}^\star|-|\widehat\Delta_j|
\ge
\beta_{\min,m}-\frac{4\sqrt{k}\lambda_m}{\kappa_m}.
\end{equation}
If \(\beta_{\min,m}>8\sqrt{k}\lambda_m/\kappa_m\), then the interval
\begin{equation}
\label{eq:app_sparse_threshold_interval}
\left(
\frac{4\sqrt{k}\lambda_m}{\kappa_m},
\beta_{\min,m}-\frac{4\sqrt{k}\lambda_m}{\kappa_m}
\right)
\end{equation}
is nonempty. For any \(\tau\) in this interval, Eqs.~\eqref{eq:app_sparse_inactive_bound} and \eqref{eq:app_sparse_active_lower_bound} imply
\begin{equation}
\label{eq:app_sparse_support_final}
\{j:|\widehat\xi_{m,j}|>\tau\}=S.
\end{equation}
This proves the support-recovery part of Theorem~\ref{thm:sparse_ph_recovery}.

It remains to prove the high-probability score bound and connect it to Eq.~\eqref{eq:lambdam_choice}. Suppose
\begin{equation}
\label{eq:app_sparse_residual_decomposition}
\zeta_m=e_m+u_m,
\end{equation}
where \(e_m\) is conditionally sub-Gaussian with scale \(\sigma_m\), and suppose
\begin{equation}
\label{eq:app_sparse_u_bound}
\left\|
\frac1{n_m}\widehat A^\top u_m
\right\|_\infty
\le
\nu_m.
\end{equation}
For each column \(\widehat A_{\cdot j}\), define
\begin{equation}
\label{eq:app_sparse_Xj}
X_j
:=
\frac1{n_m}\widehat A_{\cdot j}^{\top}e_m .
\end{equation}
If \(L_{A,m}=0\), then \(\widehat A^\top e_m=0\), and the stochastic score bound is immediate. Otherwise, conditional on \(\widehat A\), the sub-Gaussian assumption gives, for every \(s\in\mathbb R\),
\begin{equation}
\label{eq:app_sparse_Xj_mgf}
\mathbb E\!\left[
\exp(sX_j)\mid\widehat A
\right]
\le
\exp\!\left(
\frac{\sigma_m^2s^2\|\widehat A_{\cdot j}\|_2^2}{2n_m^2}
\right).
\end{equation}
By the definition of \(L_{A,m}\) in Eq.~\eqref{eq:lambdam_choice},
\begin{equation}
\label{eq:app_sparse_LA_bound}
\frac{\|\widehat A_{\cdot j}\|_2}{n_m}
\le
\frac{L_{A,m}}{\sqrt{n_m}},
\end{equation}
so \(X_j\), conditional on \(\widehat A\), is sub-Gaussian with variance proxy \(\sigma_m^2L_{A,m}^2/n_m\). Hence, for every \(t>0\),
\begin{equation}
\label{eq:app_sparse_Xj_tail}
\mathbb P\left(
|X_j|>t\mid \widehat A
\right)
\le
2\exp\left(
-\frac{n_m t^2}{2\sigma_m^2L_{A,m}^2}
\right).
\end{equation}
Set
\begin{equation}
\label{eq:app_sparse_tdelta}
t_\delta
:=
L_{A,m}\sigma_m
\sqrt{\frac{2\log(2p/\delta)}{n_m}} .
\end{equation}
Applying the union bound over \(j=1,\ldots,p\), we obtain
\begin{equation}
\label{eq:app_sparse_union_bound}
\mathbb P\left(
\left\|
\frac1{n_m}\widehat A^\top e_m
\right\|_\infty
>
t_\delta
\;\middle|\;
\widehat A
\right)
\le
\delta .
\end{equation}
Therefore, with conditional probability at least \(1-\delta\),
\begin{equation}
\label{eq:app_sparse_stochastic_score_bound}
\left\|
\frac1{n_m}\widehat A^\top e_m
\right\|_\infty
\le
L_{A,m}\sigma_m
\sqrt{\frac{2\log(2p/\delta)}{n_m}} .
\end{equation}
On this same event, using Eq.~\eqref{eq:app_sparse_residual_decomposition} and Eq.~\eqref{eq:app_sparse_u_bound},
\begin{equation}
\label{eq:app_sparse_total_score_bound}
\begin{aligned}
\left\|
\frac1{n_m}\widehat A^\top \zeta_m
\right\|_\infty
&\le
\left\|
\frac1{n_m}\widehat A^\top e_m
\right\|_\infty
+
\left\|
\frac1{n_m}\widehat A^\top u_m
\right\|_\infty                                      \\
&\le
L_{A,m}\sigma_m
\sqrt{\frac{2\log(2p/\delta)}{n_m}}
+
\nu_m .
\end{aligned}
\end{equation}
Thus, any penalty satisfying the main-text choice Eq.~\eqref{eq:lambdam_choice} ensures
\begin{equation}
\label{eq:app_sparse_score_half_lambda}
\left\|
\frac1{n_m}\widehat A^\top\zeta_m
\right\|_\infty
\le
\frac{\lambda_m}{2},
\end{equation}
which is exactly the score condition in Eq.~\eqref{eq:score_rsc}. Since the conditional probability bound holds for every realized \(\widehat A\), the same probability bound holds unconditionally by the tower property.

If the RSC part of Eq.~\eqref{eq:score_rsc} holds deterministically for the realized design, then the deterministic part of the proof applies on the score event above, so Eq.~\eqref{eq:sparse_bounds} and the support-recovery statement hold with probability at least \(1-\delta\). If instead RSC holds on an event \(\mathcal E_{\mathrm{rsc}}\) with probability at least \(1-\delta_{\mathrm{rsc}}\), then the union bound gives
\begin{equation}
\label{eq:app_sparse_rsc_union}
\mathbb P\bigl(\text{score condition}\cap\mathcal E_{\mathrm{rsc}}\bigr)
\ge
1-\delta-\delta_{\mathrm{rsc}},
\end{equation}
and the deterministic conclusions apply on this intersection.

Finally, the main text summarizes all filtering, derivative, mode, and plug-in effects inside the perturbation \(\nu_m\).  The full SI decomposition is as follows. If
\begin{equation}
\label{eq:app_sparse_u_decomposition}
u_m
=
u_m^{\mathrm{filt}}
+
u_m^{\mathrm{der}}
+
u_m^{\mathrm{mode}}
+
u_m^{\mathrm{pH}},
\end{equation}
then by the triangle inequality,
\begin{equation}
\label{eq:app_sparse_u_component_bound}
\left\|
\frac1{n_m}\widehat A^\top u_m
\right\|_\infty
\le
\sum_{q\in\{\mathrm{filt},\mathrm{der},\mathrm{mode},\mathrm{pH}\}}
\left\|
\frac1{n_m}\widehat A^\top u_m^q
\right\|_\infty .
\end{equation}
Thus, if each component is bounded by \(\nu_m^q\), then the deterministic perturbation part of Eq.~\eqref{eq:lambdam_choice} holds with
\begin{equation}
\label{eq:app_sparse_nu_decomposition}
\nu_m
=
\nu_m^{\mathrm{filt}}
+
\nu_m^{\mathrm{der}}
+
\nu_m^{\mathrm{mode}}
+
\nu_m^{\mathrm{pH}}.
\end{equation}
This final decomposition is more detailed than the compact main-text theorem but is useful for ablations that separately measure filtering error, derivative-estimation error, mode impurity, and port-Hamiltonian plug-in error.
\end{proof}

\paragraph{What the bound assumes, and what would instantiate it quantitatively.}
Theorem~\ref{thm:sparse_ph_recovery} is conditional on two ingredients that
are not directly certified in Experiment~3.
\emph{(i) Restricted-eigenvalue (RE) constant \(\kappa_m\).}
We assume Eq.~\eqref{eq:score_rsc} holds with some \(\kappa_m>0\) on the cone
\(\|\Delta_{S^c}\|_1 \le 3\|\Delta_S\|_1\); we do not verify this constant
directly. On the four controlled hybrid systems used in Experiment~3, the
candidate library has at most \(p\le12\) terms and the true active support has
size at most \(k\le3\), which is the low-sparsity regime in which Lasso oracle
bounds are typically informative \citep{bickel2009simultaneous}. An explicit
RE certificate via a sparse-eigenvalue lower bound is left as future work
\citep{rudelson2012reconstruction}.
\emph{(ii) Four-component perturbation budget
\(\nu_m = \nu_m^{\mathrm{filt}} + \nu_m^{\mathrm{der}}
+ \nu_m^{\mathrm{mode}} + \nu_m^{\mathrm{pH}}\).}
The decomposition in Appendix~\ref{app:proof_sparse_ph_recovery} separates
filtering error, derivative-estimation error, mode impurity, and
port-Hamiltonian plug-in error. We do not separately measure these four
components in the present experiments; the empirical support F1 and relative
coefficient error test the aggregate recovery outcome, not the individual
perturbation budgets.
\subsection{Proof of Theorem~\ref{thm:passive_kl_mpc_certificate}: passivity-calibrated rollouts and KL-robust MPC}
\label{app:proof_passive_kl_mpc_certificate}

\begin{proof}
Appendix~\ref{app:exp4_theorem4_diagnostic} defines the learned hybrid
SDE in Eq.~\eqref{eq:hybrid_sde}, the constants \(C_E\) and \(\alpha\)
in Eq.~\eqref{eq:energy_constants}, the energy inequality in
Eq.~\eqref{eq:ph_energy}, the MPC suboptimality bound in
Eq.~\eqref{eq:mpc_bound}, the deterministic chance certificate in
Eq.~\eqref{eq:chance_bound}, and the Monte Carlo margin in
Eq.~\eqref{eq:mc_chance_margin}. We prove these claims and include two
SI-level details not displayed in the compact theorem: the Markov
high-energy tail bound and the full hybrid It\^o decomposition across
switches.

Fix an admissible action sequence \(a\in\mathcal A\) and work under the corresponding learned predictive law \(Q_a\).  Let
\begin{equation}
\label{eq:app_mpc_switch_times}
0<\tau_1<\cdots<\tau_N\le T
\end{equation}
be the genuine mode-switching times up to horizon \(T\), and set
\begin{equation}
\label{eq:app_mpc_tau_endpoints}
\tau_0:=0,
\qquad
\tau_{N+1}:=T.
\end{equation}
By assumption, \(N<\infty\) almost surely.  On each interval \([\tau_j,\tau_{j+1})\), the mode is constant; denote it by \(m_j\).

We first derive a drift bound on a single constant-mode interval.  Fix one such interval and abbreviate
\begin{equation}
\label{eq:app_mpc_mode_abbrev}
H=H_{m_j},\quad
U=U_{m_j},\quad
J=J_{m_j},\quad
R=R_{m_j},\quad
G=G_{m_j},\quad
\Sigma=\Sigma_{m_j}.
\end{equation}
For \(z\in\mathcal D_{m_j}\) and a predictable action value \(a_t\), the infinitesimal generator corresponding to Eq.~\eqref{eq:hybrid_sde} applied to \(H\) is
\begin{equation}
\label{eq:app_mpc_generator}
\mathcal L_{m_j}^{a_t}H(z)
=
\nabla H(z)^\top
\Big[(J-R)\nabla H(z)+Ga_t\Big]
+
\frac12
\operatorname{tr}\!\left(\Sigma\nabla^2H(z)\right).
\end{equation}
The skew-symmetry of \(J\) gives
\begin{equation}
\label{eq:app_mpc_skew_zero}
\nabla H(z)^\top J\nabla H(z)=0.
\end{equation}
Since \(R\succeq rI\),
\begin{equation}
\label{eq:app_mpc_dissipation}
-\nabla H(z)^\top R\nabla H(z)
\le
-r\|\nabla H(z)\|_2^2 .
\end{equation}
The input term is bounded by Cauchy's inequality and Young's inequality:
\begin{equation}
\label{eq:app_mpc_input_bound}
\nabla H(z)^\top Ga_t
\le
\|\nabla H(z)\|_2\,\|G\|_{\mathrm{op}}\|a_t\|_2
\le
gA\|\nabla H(z)\|_2
\le
\frac r2\|\nabla H(z)\|_2^2
+
\frac{g^2A^2}{2r}.
\end{equation}
For the diffusion term, because \(\Sigma\succeq0\), \(\operatorname{tr}(\Sigma)\le\nu^2\), and \(\|\nabla^2H(z)\|_{\mathrm{op}}\le L_H\), we have
\begin{equation}
\label{eq:app_mpc_diffusion_bound}
\frac12
\operatorname{tr}\!\left(\Sigma\nabla^2H(z)\right)
\le
\frac12
\left|
\operatorname{tr}\!\left(\Sigma\nabla^2H(z)\right)
\right|
\le
\frac12
\|\nabla^2H(z)\|_{\mathrm{op}}\operatorname{tr}(\Sigma)
\le
\frac{L_H\nu^2}{2}.
\end{equation}
Combining Eqs.~\eqref{eq:app_mpc_generator}--\eqref{eq:app_mpc_diffusion_bound} yields
\begin{equation}
\label{eq:app_mpc_generator_pre_PL}
\mathcal L_{m_j}^{a_t}H(z)
\le
-\frac r2\|\nabla H(z)\|_2^2
+
\frac{g^2A^2}{2r}
+
\frac{L_H\nu^2}{2}.
\end{equation}
Since \(U(z)=H(z)-h_{m_j}^\star\) differs from \(H(z)\) only by a constant,
\begin{equation}
\label{eq:app_mpc_LU_equals_LH}
\mathcal L_{m_j}^{a_t}U(z)
=
\mathcal L_{m_j}^{a_t}H(z).
\end{equation}
By the Polyak--Lojasiewicz condition,
\begin{equation}
\label{eq:app_mpc_PL}
\|\nabla H(z)\|_2^2
\ge
2\mu U(z).
\end{equation}
Therefore, for all \(z\in\mathcal D_{m_j}\),
\begin{equation}
\label{eq:app_mpc_drift_bound}
\mathcal L_{m_j}^{a_t}U(z)
\le
-r\mu U(z)
+
\frac{g^2A^2}{2r}
+
\frac{L_H\nu^2}{2}
=
-\alpha U(z)+C_E,
\end{equation}
where \(\alpha\) and \(C_E\) match their main-text definitions in Eq.~\eqref{eq:energy_constants}.

We now sum It\^o's formula over the constant-mode intervals.  Since the rollout remains in the certified regions almost surely, the drift bound Eq.~\eqref{eq:app_mpc_drift_bound} applies along the path.  On each interval \([\tau_j,\tau_{j+1})\), It\^o's formula gives
\begin{equation}
\label{eq:app_mpc_interval_ito}
U_{m_j}(z_{\tau_{j+1}^-})
=
U_{m_j}(z_{\tau_j})
+
\int_{\tau_j}^{\tau_{j+1}}
\mathcal L_{m_j}^{a_t}
U_{m_j}(z_t)\,dt
+
M_j,
\end{equation}
where \(M_j\) is the stochastic integral over that interval.  The assumed square-integrability condition implies that the sum of these stochastic integrals is a true martingale with mean zero; equivalently, one may localize on compact sublevel sets and then pass to the limit.

At a genuine switching time \(\tau_j\), define the active-energy jump
\begin{equation}
\label{eq:app_mpc_energy_jump}
\Delta_j
:=
U_{s_{\tau_j}}(z_{\tau_j})
-
U_{s_{\tau_j^-}}(z_{\tau_j^-}).
\end{equation}
By the passive-switch assumption,
\begin{equation}
\label{eq:app_mpc_passive_jump}
\Delta_j\le0
\qquad\text{a.s.}
\end{equation}
Summing over all intervals and genuine switches up to \(T\) yields the hybrid It\^o decomposition
\begin{equation}
\label{eq:app_mpc_hybrid_ito}
U_{s_T}(z_T)
=
U_{s_0}(z_0)
+
\int_0^T
\mathcal L_{s_t}^{a_t}U_{s_t}(z_t)\,dt
+
M_T
+
\sum_{0<\tau_j\le T}\Delta_j,
\end{equation}
where \(\mathbb E_{Q_a}[M_T]=0\) and \(\Delta_j\le0\) almost surely.  Taking expectations and using Eq.~\eqref{eq:app_mpc_drift_bound} gives
\begin{equation}
\label{eq:app_mpc_integral_inequality}
\mathbb E_{Q_a}[U_{s_T}(z_T)]
\le
\mathbb E_{Q_a}[U_{s_0}(z_0)]
+
\int_0^T
\left(
-\alpha\,
\mathbb E_{Q_a}[U_{s_t}(z_t)]
+
C_E
\right)dt .
\end{equation}
Define
\begin{equation}
\label{eq:app_mpc_psi_def}
\psi(t):=\mathbb E_{Q_a}[U_{s_t}(z_t)] .
\end{equation}
Then Eq.~\eqref{eq:app_mpc_integral_inequality} reads
\begin{equation}
\label{eq:app_mpc_gronwall_input}
\psi(T)
\le
\psi(0)
+
\int_0^T(-\alpha\psi(t)+C_E)dt .
\end{equation}
By the integral form of Gronwall's inequality,
\begin{equation}
\label{eq:app_mpc_gronwall_output}
\psi(T)
\le
 e^{-\alpha T}\psi(0)
+
\frac{1-e^{-\alpha T}}{\alpha}C_E .
\end{equation}
Substituting back the definition of \(\psi\) proves
\begin{equation}
\label{eq:app_mpc_energy_final}
\mathbb E_{Q_a}[U_{s_T}(z_T)]
\le
 e^{-\alpha T}
\mathbb E_{Q_a}[U_{s_0}(z_0)]
+
\frac{1-e^{-\alpha T}}{\alpha}C_E,
\end{equation}
which is the main-text energy certificate Eq.~\eqref{eq:ph_energy}.

The following tail bound is more detailed than the compact main-text statement but is useful for experiments that measure high-energy failures.  Define
\begin{equation}
\label{eq:app_mpc_Ubar_def}
\overline U_T(a)
:=
 e^{-\alpha T}
\mathbb E_{Q_a}[U_{s_0}(z_0)]
+
\frac{1-e^{-\alpha T}}{\alpha}C_E.
\end{equation}
Since \(U_{s_T}(z_T)\ge0\), Markov's inequality gives, for every \(B>0\),
\begin{equation}
\label{eq:app_mpc_energy_tail}
Q_a\!\left(U_{s_T}(z_T)\ge B\right)
\le
\frac{\mathbb E_{Q_a}[U_{s_T}(z_T)]}{B}
\le
\frac{\overline U_T(a)}{B}.
\end{equation}

We now prove the KL-robust risk and MPC claims.  Let
\begin{equation}
\label{eq:app_mpc_tv_def}
d_{\mathrm{TV}}(P,Q):=\sup_{\mathcal B}|P(\mathcal B)-Q(\mathcal B)|
\end{equation}
denote total variation distance.  By Pinsker's inequality and the uniform predictive calibration assumption,
\begin{equation}
\label{eq:app_mpc_pinsker}
d_{\mathrm{TV}}(P_a^\star,Q_a)
\le
\sqrt{\frac12D_{\mathrm{KL}}(P_a^\star\|Q_a)}
\le
\sqrt{\frac{\varepsilon_H}{2}}.
\end{equation}
Since total variation is always at most one,
\begin{equation}
\label{eq:app_mpc_epsilon_tv_def}
d_{\mathrm{TV}}(P_a^\star,Q_a)
\le
\epsilon_{\mathrm{TV}}
:=
\min\left\{1,\sqrt{\frac{\varepsilon_H}{2}}\right\}
\qquad
\forall a\in\mathcal A .
\end{equation}
Therefore, for every measurable event \(\mathcal F\),
\begin{equation}
\label{eq:app_mpc_event_transfer}
P_a^\star(\mathcal F)
\le
Q_a(\mathcal F)+d_{\mathrm{TV}}(P_a^\star,Q_a)
\le
Q_a(\mathcal F)+\epsilon_{\mathrm{TV}} .
\end{equation}
Applying this to the high-energy event \(\mathcal E_B=\{U_{s_T}(z_T)\ge B\}\) and using Eq.~\eqref{eq:app_mpc_energy_tail} gives the SI-level true high-energy certificate
\begin{equation}
\label{eq:app_mpc_true_energy_tail}
P_a^\star(\mathcal E_B)
\le
\min\left\{
1,
\frac{\overline U_T(a)}{B}+\epsilon_{\mathrm{TV}}
\right\}.
\end{equation}
This bound is additional theory beyond the compact theorem statement, but it follows immediately from Eq.~\eqref{eq:ph_energy} and the KL calibration assumption.

Next, let \(C(\tau)\in[0,C_{\max}]\) be any bounded measurable planning cost.  Since \(C/C_{\max}\in[0,1]\), the variational characterization of total variation gives
\begin{equation}
\label{eq:app_mpc_cost_tv}
\left|
\mathbb E_{P_a^\star}[C(\tau)]
-
\mathbb E_{Q_a}[C(\tau)]
\right|
\le
C_{\max}\,
d_{\mathrm{TV}}(P_a^\star,Q_a)
\le
C_{\max}\epsilon_{\mathrm{TV}}
\qquad
\forall a\in\mathcal A .
\end{equation}
Using Eq.~\eqref{eq:app_mpc_cost_tv} for \(\widehat a\) gives
\begin{equation}
\label{eq:app_mpc_first_transfer}
\mathbb E_{P_{\widehat a}^\star}[C]
\le
\mathbb E_{Q_{\widehat a}}[C]
+
C_{\max}\epsilon_{\mathrm{TV}} .
\end{equation}
By \(\eta_{\mathrm{opt}}\)-optimality of \(\widehat a\) under the \method{} predictive law,
\begin{equation}
\label{eq:app_mpc_eta_opt}
\mathbb E_{Q_{\widehat a}}[C]
\le
\inf_{a\in\mathcal A}\mathbb E_{Q_a}[C]
+
\eta_{\mathrm{opt}} .
\end{equation}
Moreover, for every \(a\in\mathcal A\), Eq.~\eqref{eq:app_mpc_cost_tv} also gives
\begin{equation}
\label{eq:app_mpc_second_transfer}
\mathbb E_{Q_a}[C]
\le
\mathbb E_{P_a^\star}[C]
+
C_{\max}\epsilon_{\mathrm{TV}}.
\end{equation}
Taking the infimum over \(a\in\mathcal A\) yields
\begin{equation}
\label{eq:app_mpc_inf_transfer}
\inf_{a\in\mathcal A}\mathbb E_{Q_a}[C]
\le
\inf_{a\in\mathcal A}\mathbb E_{P_a^\star}[C]
+
C_{\max}\epsilon_{\mathrm{TV}} .
\end{equation}
Combining Eqs.~\eqref{eq:app_mpc_first_transfer}, \eqref{eq:app_mpc_eta_opt}, and \eqref{eq:app_mpc_inf_transfer} gives
\begin{equation}
\label{eq:app_mpc_suboptimality_final}
\mathbb E_{P_{\widehat a}^\star}[C]
-
\inf_{a\in\mathcal A}\mathbb E_{P_a^\star}[C]
\le
2C_{\max}\epsilon_{\mathrm{TV}}
+
\eta_{\mathrm{opt}},
\end{equation}
which is the main-text MPC bound Eq.~\eqref{eq:mpc_bound}.  Since \(\epsilon_{\mathrm{TV}}\le\sqrt{\varepsilon_H/2}\), Eq.~\eqref{eq:app_mpc_suboptimality_final} also implies the slightly looser form
\begin{equation}
\label{eq:app_mpc_suboptimality_sqrt}
\mathbb E_{P_{\widehat a}^\star}[C]
-
\inf_{a\in\mathcal A}\mathbb E_{P_a^\star}[C]
\le
2C_{\max}\sqrt{\frac{\varepsilon_H}{2}}
+
\eta_{\mathrm{opt}} .
\end{equation}
If the true infimum is attained by \(a^\star\), then
\begin{equation}
\label{eq:app_mpc_oracle_attained}
\inf_{a\in\mathcal A}\mathbb E_{P_a^\star}[C]
=
\mathbb E_{P_{a^\star}^\star}[C],
\end{equation}
which gives the oracle-comparison form.

For the deterministic chance constraint, let \(\mathcal F\) be any measurable failure event.  If
\begin{equation}
\label{eq:app_mpc_chance_assumption}
Q_a(\mathcal F)
\le
\delta-\epsilon_{\mathrm{TV}},
\qquad
\delta\in[\epsilon_{\mathrm{TV}},1],
\end{equation}
then Eq.~\eqref{eq:app_mpc_event_transfer} gives
\begin{equation}
\label{eq:app_mpc_chance_final}
P_a^\star(\mathcal F)
\le
Q_a(\mathcal F)+\epsilon_{\mathrm{TV}}
\le
\delta .
\end{equation}
This is the deterministic implication in Eq.~\eqref{eq:chance_bound}.

It remains to prove the rollout certificate and connect it to Eq.~\eqref{eq:mc_chance_margin}.  Let
\begin{equation}
\label{eq:app_mpc_eval_set}
\mathcal A_{\mathrm{eval}}
=
\{a^1,\ldots,a^L\}
\end{equation}
be the finite evaluated action set.  For each \(a\in\mathcal A_{\mathrm{eval}}\), define
\begin{equation}
\label{eq:app_mpc_mc_indicators}
Y_{a,i}:=\mathbf 1\{\tau_a^{(i)}\in\mathcal F\},
\qquad
\tau_a^{(i)}\overset{\mathrm{i.i.d.}}{\sim}Q_a .
\end{equation}
Then \(Y_{a,i}\in[0,1]\),
\begin{equation}
\label{eq:app_mpc_mc_mean}
\mathbb E[Y_{a,i}]=Q_a(\mathcal F),
\qquad
\widehat Q_{a,n}(\mathcal F)
=
\frac1n\sum_{i=1}^nY_{a,i}.
\end{equation}
For a fixed \(a\), Hoeffding's inequality gives
\begin{equation}
\label{eq:app_mpc_hoeffding}
\Pr\left(
Q_a(\mathcal F)
>
\widehat Q_{a,n}(\mathcal F)
+
\epsilon_n(L,\beta)
\right)
\le
\exp\!\left(-2n\epsilon_n(L,\beta)^2\right)
=
\frac{\beta}{L},
\end{equation}
where
\begin{equation}
\label{eq:app_mpc_epsilon_n_def}
\epsilon_n(L,\beta)
:=
\sqrt{\frac{\log(L/\beta)}{2n}}.
\end{equation}
Taking a union bound over the \(L\) evaluated actions yields, with probability at least \(1-\beta\), the simultaneous event
\begin{equation}
\label{eq:app_mpc_simultaneous_mc_event}
Q_a(\mathcal F)
\le
\widehat Q_{a,n}(\mathcal F)
+
\epsilon_n(L,\beta)
\qquad
\forall a\in\mathcal A_{\mathrm{eval}} .
\end{equation}
On this simultaneous event, if the possibly data-dependent selected action \(\widetilde a\in\mathcal A_{\mathrm{eval}}\) satisfies
\begin{equation}
\label{eq:app_mpc_selected_action_condition}
\widehat Q_{\widetilde a,n}(\mathcal F)
\le
\delta-
\epsilon_{\mathrm{TV}}-
\epsilon_n(L,\beta),
\end{equation}
then
\begin{equation}
\label{eq:app_mpc_selected_Q_bound}
Q_{\widetilde a}(\mathcal F)
\le
\delta-
\epsilon_{\mathrm{TV}}.
\end{equation}
Applying the deterministic chance certificate in Eq.~\eqref{eq:chance_bound} to \(\widetilde a\) gives
\begin{equation}
\label{eq:app_mpc_selected_true_bound}
P_{\widetilde a}^\star(\mathcal F)\le\delta .
\end{equation}
Equations~\eqref{eq:app_mpc_epsilon_n_def}--\eqref{eq:app_mpc_selected_true_bound} prove the Monte Carlo margin statement Eq.~\eqref{eq:mc_chance_margin}, completing the proof of Theorem~\ref{thm:passive_kl_mpc_certificate}.
\end{proof}

\end{document}